\documentclass[11pt,twoside]{article}
\usepackage{subcaption}
\usepackage{natbib}

\usepackage{amsmath,amsfonts,bm}

\def\eqref#1{equation~(\ref{#1})}
\def\Eqref#1{Equation~(\ref{#1})}
\def\1{\bm{1}}

\DeclareMathAlphabet{\mathsfit}{\encodingdefault}{\sfdefault}{m}{sl}
\SetMathAlphabet{\mathsfit}{bold}{\encodingdefault}{\sfdefault}{bx}{n}

\DeclareMathOperator*{\argmin}{arg\,min}

\newcommand{\azsj}{\alpha^\star_{0j}}
\newcommand{\aosj}{\alpha^\star_{1j}}
\newcommand{\bzsj}{\beta^\star_{0j}}

\newcommand{\bsj}{\beta^\star_{j}}
\newcommand{\ssj}{\sigma^{\star2}_{j}}

\newcommand{\azni}{\alpha_{n,0i}}
\newcommand{\aoni}{\alpha_{n,1i}}

\newcommand{\bni}{\beta_{n,i}}
\newcommand{\sni}{\sigma^{2}_{n,i}}

\newcommand{\aznj}{\alpha_{n,0j}}
\newcommand{\aonj}{\alpha_{n,1j}}

\newcommand{\bnj}{\beta_{n,j}}
\newcommand{\snj}{\sigma^{2}_{n,j}}

\newcommand{\dazni}{\Delta\alpha_{n,0ij}}
\newcommand{\daoni}{\Delta\alpha_{n,1ij}}

\newcommand{\dbni}{\Delta\beta_{n,ij}}
\newcommand{\dsni}{\Delta\sigma^2_{n,ij}}

\newcommand{\daonj}{\Delta\alpha_{n,1j}}

\newcommand{\dbnj}{\Delta\beta_{n,j}}
\newcommand{\dsnj}{\Delta\sigma^2_{n,j}}

\usepackage[utf8]{inputenc} % allow utf-8 input
\usepackage[T1]{fontenc}    % use 8-bit T1 fonts
\usepackage{booktabs}       % professional-quality tables
\usepackage{amsfonts}       % blackboard math symbols
\usepackage{nicefrac}       % compact symbols for 1/2, etc.
\usepackage{microtype}      % microtypography

\usepackage{epsf}
\usepackage{epsfig}
\usepackage{fancyhdr}
\usepackage{graphics}
\usepackage{graphicx}
\usepackage{psfrag}
\usepackage{fullpage}
\usepackage{pdfpages}

\usepackage{url}% for url's in bib
\usepackage[colorlinks,linkcolor=black,citecolor=blue, pagebackref=true]{hyperref}
\renewcommand*{\backrefalt}[4]{%
    \ifcase #1 \footnotesize{(Not cited.)}%
    \or        \footnotesize{(Cited on page~#2.)}%
    \else      \footnotesize{(Cited on pages~#2.)}%
    \fi}
\usepackage{color}
\usepackage{amsthm}
\usepackage{amsmath}
\usepackage{amssymb,bbm}
\usepackage{caption}
\usepackage{algorithmic}
\usepackage{algorithm}
\usepackage{textcomp}
\usepackage{siunitx}
\usepackage{wrapfig}
\usepackage{algorithmic}
\usepackage{algorithm}
\usepackage{mathrsfs}  
\usepackage{multirow}
\usepackage{multicol}% colors

\newcommand{\kl}{\textnormal{K}}

\newtheorem{assumption}{Assumption}
\newtheorem{example}{Example}
\newtheorem{theorem}{Theorem}
\newtheorem{proposition}{Proposition}
\newtheorem{definition}{Definition}
\newtheorem{corollary}{Corollary}%opening

\def\argmin{\textnormal{arg} \min}

\def\RR{\mathbb{R}}

\def\NN{\mathbb{N}}
\def\PP{\mathbb{P}}

\def\EE{\mathbb{E}}

\def\XX{\mathbb{X}}

\newcommand{\iid}{\,\,\overset{\text{iid}}{\sim}\,\,}
\begin{document}

\begin{center}

{\bf{\LARGE{On Bayesian Softmax-Gated Mixture-of-Experts Models}}}
  
\vspace*{.2in}
{\large{
\begin{tabular}{cccc}
Nicola Bariletto\footnotemark & Huy Nguyen\footnotemark[1] & Nhat Ho & Alessandro Rinaldo
\end{tabular}
}}

\footnotetext{Equal contribution.}

\vspace*{.2in}

\begin{tabular}{c}
The University of Texas at Austin
\end{tabular}

\today

\vspace*{.2in}

\begin{abstract}
Mixture-of-experts models provide a flexible framework for learning complex probabilistic input-output relationships by combining multiple expert models through an input-dependent gating mechanism. These models have become increasingly prominent in modern machine learning, yet their theoretical properties in the Bayesian framework remain largely unexplored. In this paper, we study Bayesian mixture-of-experts models, focusing on the ubiquitous softmax-based gating mechanism. Specifically, we investigate the asymptotic behavior of the posterior distribution for three fundamental statistical tasks: density estimation, parameter estimation, and model selection. First, we establish posterior contraction rates for density estimation, both in the regimes with a fixed, known number of experts and with a random learnable number of experts. We then analyze parameter estimation and derive convergence guarantees based on tailored Voronoi-type losses, which account for the complex identifiability structure of mixture-of-experts models. Finally, we propose and analyze two complementary strategies for selecting the number of experts. Taken together, these results provide one of the first systematic theoretical analyses of Bayesian mixture-of-experts models with softmax gating, and yield several theory-grounded insights for practical model design.
\end{abstract}

\end{center}

\title{}

\section{Introduction}

Mixture-of-experts (MoE) models, originally introduced by \cite{jacobs1991adaptive}, extend classical mixture models to supervised learning settings such as regression and classification. In these models, predictions are produced by combining several specialized submodels, called \emph{experts}, whose contributions are weighted through a learnable \emph{gating} mechanism that depends on the input variables. As a result, the model adaptively allocates different regions of the input space to different experts, allowing complex conditional relationships to be represented through a collection of simpler components. This architecture is particularly appealing when the conditional distribution of the response varies substantially across the input domain. \cite{jacobs1991adaptive} introduced the original formulation and showed that such models can automatically decompose prediction tasks into simpler subtasks handled by specialized experts.

The MoE framework has since been extended in several directions. \cite{jordan1994hierarchical} introduced hierarchical mixtures-of-experts, in which experts are organized in a tree structure that enables further specialization across the input space. More recently, large-scale machine learning applications have renewed interest in these architectures. In particular, sparse routing mechanisms have been proposed to activate only a subset of experts for each input, enabling models with very large capacity while maintaining manageable computational costs \citep{shazeer2017topk}. Such architectures have been employed in a wide range of modern deep learning applications, including language modeling \citep{jiang2024mixtral,deepseekv3}, computer vision \citep{riquelme2021scaling}, multi-task learning \citep{hazimeh2021dselect}, multi-modal learning \citep{han2024fusemoe}, domain adaptation \citep{li2023sparse,nguyen2025cosine}, continual learning \citep{le2024mixture}, and parameter-efficient fine-tuning \citep{le2025revisiting,diep2025zero}. Despite their empirical success, these models are typically estimated using maximum likelihood methods, often based on the Expectation--Maximization algorithm, which may lead to overfitting and requires the model complexity to be fixed in advance \citep{masoudnia2014mixture}. Uncertainty quantification also remains an important open challenge, particularly in scientific domains where reliable assessment of predictive and parameter uncertainty is essential. Addressing this issue within the maximum likelihood paradigm is often difficult, since likelihood-based procedures typically produce point estimates and do not naturally provide a coherent characterization of uncertainty in complex models such as MoE.

Bayesian formulations of MoE models provide a natural framework for addressing these limitations by enabling uncertainty quantification and principled model selection. Early contributions include the fully Bayesian treatment of \cite{peng1996bayesian}, who used Markov chain Monte Carlo methods to infer both expert parameters and gating parameters. In parallel, \cite{waterhouse1996bayesian} proposed a variational approach based on the minimization of variational free energy, providing a scalable approximation to the posterior distribution. These works highlighted the advantages of combining the modular structure of MoE models with Bayesian probabilistic inference, allowing uncertainty in both predictions and model structure to be quantified.

Subsequent work has explored Bayesian approaches to model selection in MoE architectures. \cite{Ueda2002} proposed a variational Bayesian framework that allows the number of experts to be inferred from data through optimization of an evidence lower bound. Nonparametric Bayesian extensions have also been considered. For example, \cite{rasmussen2002infinite} proposed an infinite MoE model combining Dirichlet processes with Gaussian process experts, allowing the number of mixture components to grow with the size of the data. More broadly, several Bayesian nonparametric constructions have been developed to introduce covariate-dependent mixture weights, including dependent Dirichlet processes \citep{maceachern1999dependent}, hierarchical Dirichlet processes \citep{teh2006hierarchical}, nested Dirichlet processes \citep{rodriguez2008nested}, and logistic stick-breaking processes \citep{ren2011logistic}, among many others; indeed, although many of these models were originally developed to capture dependence across a finite set of disjoint groups, they can also be interpreted as mechanisms for introducing input-dependent mixture weights, establishing a close conceptual connection with MoE models.

Posterior inference in Bayesian MoE models remains challenging due to the complexity of the resulting posterior distributions, particularly in nonconjugate or hierarchical settings. Markov chain Monte Carlo methods provide asymptotically exact inference but may become computationally demanding in high-dimensional or large-scale problems. Variational inference methods provide a computationally attractive alternative by approximating the posterior distribution through optimization. For example, \cite{Bishop2003} developed a variational treatment of MoE models that yields efficient approximate inference while enabling model comparison through the evidence lower bound.

In this paper, we focus on a widely used parametric formulation of MoE models, in which the gating mechanism is given by softmax weights and the experts are Gaussian distributions with input-dependent means. This formulation directly addresses regression tasks (see Section~\ref{sec:classification} for an extension to classification), which are among the most fundamental applications of MoE models and statistical learning in general. At the same time, concentrating on regression keeps the analysis tractable while still yielding insights that may extend to other settings, including unsupervised tasks where MoE models are also commonly employed. Against this backdrop, we study the theoretical properties of Bayesian MoE models with softmax gating. In particular, we investigate the asymptotic behavior of the posterior distribution with respect to density estimation, parameter estimation, and model selection. In fact, although Bayesian MoE models have been widely used in practice, their theoretical properties remain comparatively less understood. Therefore, our goal is to provide one of the first systematic analyses of these models from a Bayesian perspective, establishing conditions under which posterior consistency and related guarantees can be obtained and, as a consequence, deriving practical insights for model design and specification.

\paragraph{Plan of the aricle.} The remainder of the article is organized as follows. In Section~\ref{sec:literature}, we provide an overview of the literature related to our work. Section~\ref{sec:notation}, we introduce the notation and mathematical framework used throughout the paper, together with some preliminary results on the identifiability of MoE models. Section~\ref{sec:density_estimation} studies posterior convergence for density estimation with MoE models, while Section~\ref{sec:param_estimation} focuses on parameter estimation. Section~\ref{sec:model_selection} presents theoretical results and empirical evidence on model selection criteria for MoE models. Although most of the paper concentrates on MoE models for regression tasks, Section~\ref{sec:classification} briefly discusses extensions to classification. Section~\ref{sec:discussion} concludes the article. The proofs of all theoretical results and additional experimental details are collected in Appendix~\ref{app:theorem} and Appendix~\ref{app:elbo_experiments}, respectively.

\section{Related literature}\label{sec:literature}

% {\color{red}Assumptions + Identifiability of experts (Huy) + seq. identifiability for consistency + rates (Nico) (talk about classification)}

%{\color{red}Huy can you add a survey of your many works, Nhat's works, and all other theory work on MoEs?}

Beyond its connections to the literature on MoE models \citep[see][for a recent review in the statistical context]{gormley2019mixture}, our work is also closely related to a well-established body of research in Bayesian asymptotics. In particular, our analysis builds on foundational results on posterior consistency \citep{schwartz1965bayes,barron1999,ghosal1999,walker2004squarerootsum} and posterior contraction rates \citep{ghosal2000convergence,shen2001rates,walker2007rates}. Following these contributions, a substantial literature has studied consistency and contraction properties for mixture models \citep[][among others]{ghosal2001entropies,ghosal2007posterior,lijoi2005consistency}. Our results parallel these analyses, although in a setting that is both more complex, due to the presence of covariates, and somewhat simpler, since we focus on finite mixtures rather than infinite-dimensional ones.

Our analysis of parameter estimation is also related to recent developments on the estimation of mixing measures in mixture models, where Wasserstein distances have played a central role \citep{nguyen2013mixingmeasure,ho2016strong,guha2021posterior}. In the presence of covariates, however, standard Wasserstein-based approaches become less effective. In particular, \cite{ho2022gaussian} used a generalized Wasserstein distance to characterize the convergence behavior of parameter estimation in Gaussian MoE with covariate-free gating and the number of experts being over-specified. They showed that the rates for estimating exactly-specified parameters, which were fitted by one component, shared the same order as those for estimating over-specified parameters, which were fitted by at least two components. However, in practice, the former should be faster than the latter.
This motivated the use of more refined tools based on Voronoi cells, following the recent work of \cite{manole22refined}, where they distinguished the estimation rates of exactly-specified parameters from those of over-specified parameters in the context of finite mixture models. Subsequently, \cite{nguyen2023demystifying} generalized the use of Voronoi-based losses to a more challenging setting of Gaussian MoE with covariate-dependent gating function, showing that exactly-specified parameters enjoy significantly faster estimation rates than over-specified parameters. More recently, \cite{nguyen2026convergence} examined the convergence of parameter estimation in a regression framework where the regression function took either the form of a softmax gating MoE or its hierarchical version. In addition to Wasserstein distance and Voronoi loss function, the Kullback-Leibler (KL) divergence was also utilized to study the maximum likelihood estimator for parameters of the MoE with each expert being a polynomial regression model by
\cite{mendes2011convergence} studied, leading to useful insights on how many experts should be chosen.

Finally, our study of model selection is closely connected to recent work on the behavior of posterior distributions when inferring the number of mixture components. This includes settings where consistency can be established \citep{miller2023consistency,ascolani2023clustering}, as well as well-known examples demonstrating that Dirichlet process mixtures can be inconsistent for this task \citep{miller2014inconsistency}. Similarly, our findings on over-specified MoE models relate to the asymptotic behavior of overfitted mixture models, where the posterior tends to empty redundant components \citep{rousseau2011asymptotic}. As we will clarify, our preliminary empirical evidence supporting model selection via variational inference is also motivated by the recent theoretical advances of \cite{zhang2024bayesian,wang2024estimating}.

\section{Formal setup and identifiability}\label{sec:notation}

To begin our formal analysis, assume that we observe a $\XX\times\RR$-valued sample $(Y_i, X_i)_{i=1}^n$ of input-response pairs, where $\XX := [-1,1]^d$ and $Y \in \RR$. The class of models we consider is that of Softmax-gated Mixture of Gaussian Experts (SMoGE) densities, from which we assume the data are generated:
\begin{equation*}
    (Y_i, X_i) \iid g_{G^\star}, \qquad  g_{G^\star}(Y, X) := f_{G^\star} (Y \mid X)\times p(X),
\end{equation*}
where $p(\cdot)$ is the uniform distribution\footnote{Since our focus is on the conditional distribution of $Y$ given $X$, we fix the marginal distribution of $X$ to a simple choice, although more general distributions could also be considered.} on $\XX$ and
\begin{align*}
    G^\star & := \sum_{j=1}^{K^\star} \exp(\alpha_{0j}^\star) \delta_{(\alpha_{1j}^\star, \beta_{j}^\star, \sigma_j^{\star 2})}, \\
    f_{G^\star}(Y \mid X) & := \sum_{j=1}^{K^\star} \frac{\exp(\alpha_{0j}^\star + X^\top\alpha_{1j}^\star)}{\sum_{\ell=1}^{K^\star}\exp(\alpha_{0\ell}^\star + X^\top\alpha_{1\ell}^\star)} \mathcal{N}(Y \mid E(X,\beta_{j}^\star), \sigma_j^{\star 2}).
\end{align*}
Here $\mathcal{N}(\cdot\mid \mu, \sigma^2)$ denotes the normal distribution with mean $\mu\in\RR$ and variance $\sigma^2>0$, $E:\RR^d\times\RR^p\to\RR$ is the expert function describing the covariate-dependent mean of each component density, parametrized by $\beta\in\RR^p$, and $G^\star$ is a positive mixing measure that completely characterizes the distribution of the data. For any $K\in\NN$, denote by $\mathscr G_{K}$ the set of positive measures with $K$ support points $\omega_j = (\alpha_{1j}, \beta_{j}, \sigma_j^2) \in \Omega\subset \RR^{d+p+1}$ and weights $\exp(\alpha_{0j})$, where $\alpha_{0j}\in A\subset \RR$. Let $\Theta := \Omega\times A$.

The specification of a Bayesian model in this setting starts with the choice of (i) a set of integers $\mathcal K$, and (ii) a prior $\Pi$ supported on $\bigcup_{k\in\mathcal K}\mathscr G_{k}$. The posterior distribution is then obtained via Bayes' rule
\begin{equation}\label{eq:posterior}
    \Pi(\mathrm d G\mid (Y_i,X_i)_{i=1}^n) \propto \Pi(\mathrm dG)\times\prod_{i=1}^nf_{G}(Y_i\mid X_i).
\end{equation}
It is important to note that, while more compact notationally, the convention of placing a prior and deriving a posterior on $\bigcup_{k\in\mathcal K}\mathscr G_{k}$ is equivalent to placing a prior on the Euclidean space $\bigcup_{k\in\mathcal K}(\Omega\times A)^k$ of SMoGE parameters. At times we will switch to this alternative notation.

\subsection{Losses for density estimation}

Since we study the asymptotic behavior of the posterior distribution around the data-generating density, we first introduce several notions of divergence between SMoGE densities. In particular, for any two probability densities $f$ and $f'$ defined on a Borel set $S\subseteq\RR^m$, define the squared Hellinger distance
\begin{equation*}
    d_H^2(f,f') := \int_S\left(\sqrt{f}-\sqrt{f'}\right)^2 \mathrm d\lambda,
\end{equation*}
where $\lambda$ denotes a $\sigma$-finite measure on $S$. For any two SMoGE densities $g_G$ and $g_{G'}$ with $G\in\mathscr G_K$, $G'\in\mathscr G_{K'}$ this definition specializes to
\begin{align*}
    d_H^2(g_G,g_{G'}) 
    & = \int_{\XX \times \RR}\left(\sqrt{g_G(Y, X)}-\sqrt{g_{G'}(Y, X)}\right)^2 \mathrm d(Y,X) \\
    & \equiv \EE_{X\sim p} \big[d_H^2(f_G(\cdot\mid X), f_{G'}(\cdot\mid X))\big].
\end{align*}
Also define the Kullback-Leibler (KL) divergence
\begin{align*}
    \kl(f,f') := \int_S\log(f/f')f\mathrm d\lambda,
\end{align*}
which yields the following important notion of KL neighborhood (restricted to $\mathscr G_{K^\star}$):
\begin{equation*}
    B(G^\star, \varepsilon) := \left\{ G\in \mathscr G_{K^\star} : \kl(g_{G^\star}, g_{G})\leq \varepsilon^2\right\}
\end{equation*}
for any $\varepsilon>0$. Finally, denote by $N(\varepsilon, \mathcal M, d)$ the $\varepsilon$-covering number of a set $\mathcal M$ endowed with the metric $d$, while by $\gtrsim$ and $\lesssim$ we will mean inequalities up to constants.

\subsection{Losses for parameter estimation}

In the previous subsection we considered losses that quantify discrepancies between SMoGE, or more general, density functions. In our theoretical analysis, however, we will also study the asymptotic behavior of the posterior distribution in terms of the SMoGE parameters themselves. This requires losses that are tailored to parameter estimation.

SMoGE models present several challenges that require care in the construction of such losses. First, since $K^\star$ is unknown and the statistical model may contain a different number of experts, we need a way to compare SMoGE parameters belonging to models with different numbers of experts. Second, even when the number of experts coincides, several identifiability issues arise, for instance permutation invariance. These observations, which will be discussed in detail later, prevent the use of simple Euclidean-type metrics between parameter vectors as meaningful measures of discrepancy. To solve this issue, we adopt a refined approach based on \emph{Voronoi cells} \citep{manole22refined,nguyen2023demystifying}.

In particular, given any mixing measure $G\in\mathscr G_K$ with $K\geq K^{\star}$ experts, we partition its components into the following Voronoi cells generated by the components of the ground-truth mixing measure $G^{\star}$:
\begin{align}
    \label{eq:Voronoi_cells}
    \mathcal{C}_j\equiv\mathcal{C}_j(G):=\{i\in\{1,2,\ldots,K\}:\|\omega_i-\omega^{\star}_j\|\leq\|\omega_i-\omega^{\star}_{\ell}\|,\ \forall \ell\neq j\},
\end{align}
where $\omega_i:=(\alpha_{1i},\beta_i,\sigma^2_i)$ and $\omega^{\star}_j:=(\alpha^{\star}_{1j},\beta^{\star}_j,\sigma^{\star 2}_j)$ for any $j\in\{1,2,\ldots,K^\star\}$. Based on this construction, we define the following Voronoi losses, which quantify the discrepancy between the parameters of $G$ and $G^\star$ and will play a central role in our posterior asymptotic analysis:
\begin{align}\label{eq:voronoi_loss_1}
    \mathcal L_1(G,G^\star)& :=\sum_{j=1}^{K^\star}\Bigg|\sum_{i\in\mathcal{C}_j}\exp(\alpha_{0i})-\exp(\alpha^\star_{0j})\Bigg|\nonumber\\
    & +\sum_{j=1}^{K^\star}\sum_{i\in\mathcal{C}_j}\exp(\alpha_{0i})\Big(\|\alpha_{1i}-\alpha^\star_{1j}\|+\|\beta_{i}-\beta^\star_{j}\|+|\sigma^2_{i}-\sigma^{\star2}_{j}|\Big)
\end{align}
and
\begin{align}
    \label{eq:voronoi_loss_2}
    &\mathcal{L}_{2}(G,G^\star):=\sum_{j=1}^{K^\star}\Big|\sum_{i\in\mathcal{C}_j}\exp(\alpha_{0i})-\exp(\alpha^\star_{0j})\Big|\nonumber\\
    &+\sum_{j\in[K^\star]:|\mathcal{C}_j|=1}\sum_{i\in\mathcal{C}_j}\exp(\alpha_{0i})(\|\alpha_{1i}-\alpha^\star_{1j}\|+\|\beta_{i}-\beta^\star_{j}\|+|\sigma^2_{i}-\sigma^{\star2}_{j}|)\nonumber\\
    &+\sum_{j\in[K^\star]:|\mathcal{C}_j|>1}\sum_{i\in\mathcal{C}_j}\exp(\alpha_{0i})(\|\alpha_{1i}-\alpha^\star_{1j}\|^2+\|\beta_{i}-\beta^\star_{j}\|^2+|\sigma^2_{i}-\sigma^{\star2}_{j}|^2),
\end{align}
where $[m]:=\{1,\ldots,m\}$ for any $m\in\NN$.

Although the geometry induced by these losses will be discussed in detail in Section~\ref{sec:param_estimation}, it is clear that both $\mathcal L_1$ and $\mathcal L_2$ measure discrepancies between the component parameters of $G$ and the closest component parameters of $G^\star$. In the context of statistical estimation, this can be interpreted as comparing the parameters of the statistical model implied by $G$ with the parameters of the component of the data-generating model implied by $G^\star$ that they estimate. As a preview of our analysis, $\mathcal L_1$ will be useful both for characterizing the support of the prior and for analyzing posterior convergence at the level of the SMoGE parameters when the model contains exactly the same number of components as $G^\star$. In contrast, $\mathcal L_2$ will be crucial for characterizing parameter convergence when the statistical model contains potentially more experts than the ground-truth model.

\subsection{Basic assumptions and model identifiability}

As a final step before proceeding with the analysis, we state several assumptions that will be used throughout the paper and discuss some of their immediate consequences.

\begin{assumption}\label{ass:prior_compact_support}
    The parameter space $\Theta:=\Omega\times A$ is compact, and the variance space is bounded away from $0$.
\end{assumption}

\begin{assumption}\label{ass:lipschitz_expert}
    There exists $L>0$ such that, for all $X\in\XX$ and $\beta,\beta'\in\RR^p$ in the parameter space,
    \[
    |E(X,\beta)-E(X,\beta')|\leq L\|\beta-\beta'\|.
    \]
\end{assumption}

\begin{assumption}\label{ass:exp_identifiability}
    The expert function $E:[0,1]^d\times\mathbb{R}^{p}\to\mathbb{R}$ is identifiable. That is, for each $K\in\mathbb{N}$ the following holds. If there exist distinct parameter collections $(\beta_1,\beta_2,\ldots,\beta_K)$ and $(\beta'_1,\beta'_2,\ldots,\beta'_K)$ such that, for almost every $x\in\mathcal{X}$, there exists a permutation function $\sigma_x:\{1,2,\ldots,K\}\to\{1,2,\ldots,K\}$ satisfying $E(X,\beta_{\sigma_x(i)})=E(X,\beta'_i)$ for all $1\le i\le K$, then $\{\beta_1,\beta_2,\ldots,\beta_K\}\equiv\{\beta'_1,\beta'_2,\ldots,\beta'_K\}$.
\end{assumption}

\begin{assumption}\label{ass:gating_biases}
    The discrepancies between gating parameters $\alpha^{\star}_{1i}-\alpha^{\star}_{1j}$, for $1\leq i<j\leq K^\star$, are pairwise distinct. In addition, expert parameters $(\beta^{\star}_{j},\sigma^{\star2}_{j})$, for $1\leq j\leq K^\star$, are also pairwise different. The same holds for the parameter configuration associated to $\Pi$-almost every mixing measure $G$.\footnote{This holds, for instance, as long as the prior $\Pi$ is of product form across experts and diffuse.}
\end{assumption}
The first two assumptions are crucial to ensure expert parameter estimation consistency. As for the third assumption, it can be verified that, for example, if $\varphi:\mathbb{R}\to\mathbb{R}$ is any injective function, then an expert function of the form $E(X,\beta)=\varphi(\beta^{\top}X)$, with $\beta\in\mathbb{R}^d$, is identifiable.  This assumption and the fourth one have important consequences for the identifiability of mixing measures, as shown in the following result.

\begin{proposition} 
    \label{prop:identifiability_up_to_translation}
    %Suppose that the expert function $E$ is identifiable. 
    Assume $G$ is a mixing measure
    \[
    G=\sum_{j=1}^{K}\exp(\alpha_{0j})\delta_{(\alpha_{1j},\beta_{j},\sigma^2_{j})}
    % \quad\text{and}\quad
    % G'=\sum_{j=1}^{K'}\exp(\alpha'_{0j})\delta_{(\alpha'_{1j},\beta'_{j},\sigma^{'2}_j)}
    % 
    \]
    such that $g_{G}(y,x)=g_{G^\star}(y,x)$ for almost every pair $(y,x)\in\RR\times \mathbb X$. Assume also that Assumption~\ref{ass:exp_identifiability} holds, and Assumption~\ref{ass:gating_biases} also holds for both $G^\star$ and $G$. Then $K=K^\star$ and
    \begin{align*}
    G\equiv G^\star_{t_0,t_1}:=\sum_{j=1}^{K^\star}\exp(\alpha^\star_{0j}+t_0)\delta_{(\alpha^\star_{1j}+t_1,\beta^\star_j,\sigma^{\star 2}_{j})},
    \end{align*}
    for some $t_0\in\mathbb{R}$ and $t_1\in\mathbb{R}^d$.
\end{proposition}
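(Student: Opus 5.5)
Since $g_G=f_G(\cdot\mid x)p(x)$ and $p$ is the same for both measures, the hypothesis reduces to $f_G(y\mid x)=f_{G^\star}(y\mid x)$ for almost every $(y,x)$. By Fubini, for almost every fixed $x\in\XX$ this is an identity between two finite Gaussian mixtures in $y$. The plan is to apply identifiability of finite univariate Gaussian location--scale mixtures pointwise in $x$, and then promote the resulting pointwise identities to global statements about the parameters.

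\emph{Step 1: pointwise reduction.} Fix such an $x$. Merge the components of $f_G(\cdot\mid x)$ that share the same pair $(E(x,\beta_j),\sigma_j^2)$, and do the same for $f_{G^\star}(\cdot\mid x)$. By identifiability of finite Gaussian mixtures (Teicher), the sets of distinct atoms coincide, and so do the corresponding aggregated softmax weights.

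\emph{Step 2: generic points.} Collisions $E(x,\beta_i)=E(x,\beta_\ell)$ with $\sigma_i^2=\sigma_\ell^2$ and $\beta_i\neq\beta_\ell$ can occur for some $x$. To control them I would use Assumption~\ref{ass:exp_identifiability}. If $\sigma_i^2\neq\sigma_\ell^2$ the atoms are automatically distinct, so only equal-variance groups matter. Within such a group I would argue that the set of $x$ where two distinct parameters give equal means cannot carry the matching. Suppose the atoms of $G$ and $G^\star$ match through an $x$-dependent permutation on a set of positive measure; after padding, Assumption~\ref{ass:exp_identifiability} then forces the multisets $\{(\beta_j,\sigma_j^2)\}$ and $\{(\beta_j^\star,\sigma_j^{\star2})\}$ to coincide. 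Because these pairs are pairwise distinct by Assumption~\ref{ass:gating_biases}, this gives $K=K^\star$ and, after relabeling, $(\beta_j,\sigma_j^2)=(\beta_j^\star,\sigma_j^{\star 2})$. A cleaner route is to first deduce $K=K^\star$ from the count of distinct atoms at almost every $x$ (which is constant off a null set) and then invoke the assumption directly. I expect this bookkeeping, where ties at special $x$ must be shown to be negligible, to be the main obstacle. I would handle it by noting that distinct pairs $(\beta_i,\sigma_i^2)\neq(\beta_\ell,\sigma_\ell^2)$ define distinct atoms except on a set that the identifiability assumption renders irrelevant.

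\emph{Step 3: gating parameters.} With experts matched, the atoms are distinct for almost every $x$, so Teicher's theorem gives equality of the individual weights:
\[
\frac{\exp(\alpha_{0j}+x^\top\alpha_{1j})}{\sum_\ell \exp(\alpha_{0\ell}+x^\top\alpha_{1\ell})}=\frac{\exp(\alpha^\star_{0j}+x^\top\alpha^\star_{1j})}{\sum_\ell \exp(\alpha^\star_{0\ell}+x^\top\alpha^\star_{1\ell})}
\]
for almost every $x$, hence for all $x\in\XX$ by continuity. Taking the ratio for indices $j$ and $1$ and then logarithms yields
\[
(\alpha_{0j}-\alpha_{01})+x^\top(\alpha_{1j}-\alpha_{11})=(\alpha^\star_{0j}-\alpha^\star_{01})+x^\top(\alpha^\star_{1j}-\alpha^\star_{11})
\]
on the cube $\XX$, which has nonempty interior. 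An affine function vanishing there has zero coefficients, so $\alpha_{1j}-\alpha^\star_{1j}=\alpha_{11}-\alpha^\star_{11}=:t_1$ and $\alpha_{0j}-\alpha^\star_{0j}=\alpha_{01}-\alpha^\star_{01}=:t_0$ for every $j$. This is exactly $G=G^\star_{t_0,t_1}$.

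The distinctness of the gating differences $\alpha^\star_{1i}-\alpha^\star_{1j}$ in Assumption~\ref{ass:gating_biases} is not needed in this argument. It would only enter if one wanted to rule out a permutation of the weights that varies with $x$, and the fixed expert matching already excludes that possibility.
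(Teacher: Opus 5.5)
\emph{Verdict:} there is a genuine gap in Step~2, although your overall route differs from the paper's and in one respect improves on it.

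\emph{Comparison of routes.} The paper applies Teicher's theorem to obtain $K=K^\star$ and equality of the \emph{sets} of softmax weights. It then fixes the labelling ``without loss of generality'' uniformly in $x$, derives the translation, and only afterwards matches the experts. You match the experts first via Assumption~\ref{ass:exp_identifiability} and then read off the weights under that fixed labelling. Your order is better. The paper's ``without loss of generality'' silently discards a possibly $x$-dependent permutation of the weights. In your Step~3 the labelling is already fixed by the experts, so your remark that the gating-difference part of Assumption~\ref{ass:gating_biases} is not needed is correct, once Step~2 is secured. Step~3 itself is sound. It only needs the weight identities on a set of positive measure, since an affine function vanishing on such a set vanishes identically.

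\emph{The gap in Step~2.} You flag this step but close it only by assertion. At each $x$, Teicher's theorem identifies the mixing measure over the \emph{distinct} atoms. To extract $K$, a labelled permutation, or individual weights from it, you need the atoms $(E(x,\beta_j),\sigma_j^2)$ to be pairwise distinct for almost every $x$. Assumption~\ref{ass:exp_identifiability} does not imply this. Take $E(x,\beta)=\beta\max\{x^{(1)},0\}$ with scalar $\beta$.
\begin{itemize}
\item It is $1$-Lipschitz in $\beta$.
\item It satisfies Assumption~\ref{ass:exp_identifiability}, because a permutation match at a single $x$ with $x^{(1)}>0$ already forces equality of the multisets.
\item Yet all equal-variance experts produce the same atom on $\{x\in\XX: x^{(1)}\le 0\}$, which has positive measure.
\end{itemize}
So your parenthetical claim that the number of distinct atoms is constant off a null set is false under the stated hypotheses. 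The same applies to the sentence opening Step~3 (``the atoms are distinct for almost every $x$'').

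Padding does not rescue the argument. Assumption~\ref{ass:exp_identifiability} requires two collections of the same length matched by a permutation at almost every $x$, not merely on a set of positive measure. When the mismatch comes from $x$-dependent collisions, no fixed padded collection achieves this, so padding cannot yield $K=K^\star$.

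\emph{A clean repair.} Impose and use a genericity condition: $E(x,\beta)\neq E(x,\beta')$ for almost every $x$ whenever $\beta\neq\beta'$. This holds for the paper's example $\varphi(\beta^\top x)$ with $\varphi$ injective, where collisions lie on a hyperplane. The argument then runs as follows.
\begin{enumerate}
\item With the pairs $(\beta_j,\sigma_j^2)$ distinct, all atoms are distinct at almost every $x$.
\item Teicher gives $K=K^\star$ and a permutation $\tau_x$ matching weights, means and variances.
\item Since $\tau_x$ preserves variances, each variance class has the same size on both sides, and Assumption~\ref{ass:exp_identifiability} applies within each class.
\item With distinctness of the pairs, this gives a single fixed bijection with $(\beta_j,\sigma_j^2)=(\beta^\star_{\pi(j)},\sigma^{\star 2}_{\pi(j)})$, after which your Step~3 goes through verbatim.
\end{enumerate}
Without such a condition, collision sets of positive measure need a genuinely additional argument, for instance propagating the aggregated-weight identities to all of $\XX$ via real-analyticity of the softmax weights in $x$. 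The paper's own proof has the same lacuna: it applies Teicher's theorem as though the atoms were distinct at almost every $x$.
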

Proposition~\ref{prop:identifiability_up_to_translation}, whose proof is in Appendix~\ref{appendix:identifiability}, shows that SMoGE densities are identifiable up to translation of the gating parameters. This dependence arises from the softmax gating structure, as softmax weights remain unchanged if the gating parameters $\alpha^{\star}_{0j}$ and $\alpha^{\star}_{1j}$ are translated to $\alpha^{\star}_{0j}+t_0$ and $\alpha^{\star}_{1j}+t_1$, respectively, for some $t_0\in\mathbb{R}$ and $t_1\in\mathbb{R}^d$. Formally,
\begin{align}   
    \label{eq:softmax_translation}
    \frac{\exp(\alpha_{0j}^\star+t_0 + X^\top(\alpha_{1j}^\star+t_1))}{\sum_{\ell=1}^{K^\star}\exp(\alpha_{0\ell}^\star+t_0 + X^\top(\alpha_{1\ell}^\star+t_1))}
    =
    \frac{\exp(\alpha_{0j}^\star + X^\top\alpha_{1j}^\star)}{\sum_{\ell=1}^{K^\star}\exp(\alpha_{0\ell}^\star + X^\top\alpha_{1\ell}^\star)},
\end{align}
for all $j=1,2,\ldots,K^\star$. 

In light of this, if one fixes a pair of gating parameters to zero, for example $\alpha^{\star}_{0K^\star}=0$ and $\alpha^{\star}_{1K^\star}=0$, then the numerator of the last softmax weight becomes equal to one, that is, $\exp(\alpha^{\star}_{0K^\star}+X^{\top}\alpha^{\star}_{1K^\star})=1$. In this case, the softmax weights are no longer invariant to translations of the gating parameters and, as a consequence, the SMoGE density becomes identifiable, which is crucial for parameter estimation.
In light of this, we formulate the following additional assumption.

\begin{assumption}
    \label{ass:zero_gating_param}
    Assume that the last pair of ground-truth gating parameters are zero, that is, $\alpha^{\star}_{0K^\star}=0$ and $\alpha^{\star}_{1K^\star}=0$.
\end{assumption}

\section{Density estimation}\label{sec:density_estimation}

We are now ready to tackle one of the core objectives of the paper, which is to study the asymptotic properties of the posterior distribution in the SMoGE model we have described. Recall that we assume that the observed data are generated iid according to a fixed, yet unknown SMoGE model $g_{G^\star}$. A natural question is therefore whether, and at what rate, the posterior concentrates around $G^\star$, and the losses introduced in Section~\ref{sec:notation} allow us to formalize the notion of being ``close'' to $G^\star$.

In this section, we begin by asking whether the posterior distribution in \Eqref{eq:posterior} asymptotically concentrates within arbitrary Hellinger neighborhoods of the data-generating density $g_{G^\star}$. Two classical approaches address this question. The first studies \emph{posterior consistency} in the Hellinger metric, that is, whether the posterior concentrates its mass in every neighborhood of the form $\{G : d_{H}(g_G,g_{G^\star})<\varepsilon\}$ for all fixed $\varepsilon>0$ \citep{barron1999,ghosal1999,walker2001,walker2004squarerootsum,ghosal2017fundamentals,bariletto2025posterior}. The second studies contraction within shrinking neighborhoods of the form $\{G : d_{H}(g_G,g_{G^\star})<\varepsilon_n\}$ for a decreasing \emph{contraction rate} $(\varepsilon_n)_{n\in\NN}$ \citep{ghosal2000convergence,shen2001rates,walker2007rates,ghosal2007noniid,ghosal2017fundamentals}. 

Our aim is to investigate these properties under two specifications of interest for the integer set $\mathcal K$ that determines the support of the prior on the number of experts: the \emph{exactly-specified} case where $\mathcal K = \{K^\star\}$, and the \emph{over-specified} case where $\mathcal K = \{1,\ldots,K\}$ for some large enough $K\geq K^\star$ (featuring a random-a-priori number of experts).

\subsection{Exactly-specified number of experts}

We first consider the exactly-specified case, that is, when the set of SMoGE models supported by the prior comprises SMoGE densities with exactly $K^\star$ components. While precise knowledge of $K^\star$ is not necessarily a tenable assumption in practice, first analyzing this setting provides a useful benchmark to compare our subsequent results of more realistic scenarios, especially in terms of parameter estimation. %Before proceeding with the statement of our theoretical results, we make the following remark: for the purpose of density estimation, the  case $K=K^\star$ and the  case $K>K^\star$ behave similarly, and we therefore do not distinguish between them in the statements of our results. This distinction will instead play an important role in the study of parameter estimation in Section~\ref{sec:param_estimation}.
To this end, the following theorem deals with posterior consistency in arbitrary Hellinger neighborhoods of the true density.

\begin{theorem}\label{thm:consistency}
    Assume that $\Pi(\mathscr G_{K^\star})=1$. If $\Pi\left(\left\{G\in\mathscr G_{K^\star} : \kl(g_{G^\star}, g_G)<  \delta\right\}\right)>0$ for all $\delta>0$, then for all $\varepsilon>0$
    \begin{equation}\label{eq:consistency}
    \lim_{n\to\infty} \Pi\left(\left\{G\in\mathscr G_{K^\star} : \,d_H(g_G, g_{G^\star})>\varepsilon\right\} \mid (X_i, Y_i)_{i=1}^n\right)  = 0 \quad \text{a.s.-}g_{G^\star}^\infty.
    \end{equation}
    If in addition Assumptions~\ref{ass:prior_compact_support} and \ref{ass:lipschitz_expert} hold, then \Eqref{eq:consistency} holds for all $\varepsilon>0$ provided that
    \begin{equation}\label{eq:voronoi_schwartz_support}
        \forall t>0,\quad\Pi\left(\left\{G\in\mathscr G_{K^\star} : \mathcal L_1(G^\star, G)<  t\right\}\right)>0.
    \end{equation}
\end{theorem}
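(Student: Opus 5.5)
The plan has two parts, one for each claim in Theorem~\ref{thm:consistency}.

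\textbf{First claim.} This is classical Hellinger posterior consistency in the style of Schwartz's theorem, combined with an entropy or summability condition \citep{barron1999,ghosal1999,walker2004squarerootsum}. The KL support hypothesis supplies the prior-mass condition: for every $\delta>0$, the prior puts positive mass on KL neighbourhoods of $g_{G^\star}$. By the standard argument, the denominator of the posterior satisfies $\int\prod_i (g_G/g_{G^\star})(Y_i,X_i)\,\Pi(\mathrm dG)\geq e^{-n c}$ eventually a.s.\ for every $c>0$.

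For the numerator I would use Walker's square-root summability criterion \citep{walker2004squarerootsum}, which avoids having to construct sieves. Partition $\{G : d_H(g_G,g_{G^\star})>\varepsilon\}$ into countably many Hellinger balls $A_k$ of radius $\varepsilon/2$, and check that $\sum_k \Pi(A_k)^{1/2}<\infty$. To get a finite partition, I would bound the Hellinger covering number of the whole model. The conditional density class is indexed by a finite-dimensional parameter, so it suffices that the map from parameters to $g_G$ is locally Lipschitz in Hellinger, as shown in the next paragraph. In the first claim no compactness is assumed, however. To handle this I would invoke the version of Walker's theorem that needs only $\sum_k\Pi(A_k)^{1/2}<\infty$. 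Alternatively, I would truncate: any $\Pi$ on a $\sigma$-compact Euclidean space has, for each $\eta$, a compact set carrying prior mass $1-e^{-n\eta}$… but this only works at a fixed level. So I would rely on Walker's summability applied to a covering of each compact shell $\{\|\theta\|\le m\}$ by finitely many balls. The weights are then controlled by Cauchy--Schwarz, using $\sum_m N_m^{1/2}\Pi(\text{shell}_m)^{1/2}$, where $N_m$ is the number of balls needed for shell $m$. If this summability fails, I would fall back on restricting to the paper's standing compact setting.

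\textbf{Second claim.} This reduces to the first: I need to show that $\mathcal L_1(G^\star,G)<t$ implies $\kl(g_{G^\star},g_G)<\delta(t)$, with $\delta(t)\to0$ as $t\to0$, under Assumptions~\ref{ass:prior_compact_support} and \ref{ass:lipschitz_expert}.

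First I match components. For $t$ small, and since $G\in\mathscr G_{K^\star}$ has weights bounded below by compactness of $A$, each Voronoi cell is a singleton. Hence $\mathcal L_1<t$ gives $|\exp(\alpha_{0i})-\exp(\alpha^\star_{0j})|<t$ and $\|\omega_i-\omega^\star_j\|\lesssim t$ after a permutation.

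Next I control the log-ratio pointwise. Uniformly in $X\in\XX$, the softmax gates differ by $O(t)$, because $X$ is bounded and the exponentials are Lipschitz on the compact set. The expert means differ by at most $Lt$ by Assumption~\ref{ass:lipschitz_expert}. The variances differ by $t$ and are bounded below.

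This gives a bound of the form
\[
\Bigl|\log\tfrac{\mathcal N(y\mid\mu,\sigma^2)}{\mathcal N(y\mid\mu',\sigma'^2)}\Bigr|\lesssim t(1+y^2)
\]
uniformly over the compact parameter set. Since a mixture ratio lies between the minimum and maximum of the componentwise ratios (after also absorbing the gate ratios, which are $1+O(t)$), we get $|\log(f_{G^\star}/f_G)(y\mid x)|\lesssim t(1+y^2)$. Integrating against $g_{G^\star}$, which has uniformly bounded second moments in $y$, gives $\kl(g_{G^\star},g_G)\lesssim t$. The $\mathcal L_1$-neighbourhood is therefore contained in the KL neighbourhood, so its positive prior mass transfers.

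\textbf{Main obstacle.} I expect the hardest step to be the entropy/summability part of the first claim, where no compactness is assumed. If necessary, I would note that under Assumption~\ref{ass:prior_compact_support} it becomes a routine finite-dimensional covering bound, with $N(\varepsilon)\lesssim\varepsilon^{-D}$, via the same Lipschitz estimate of $d_H$ by $\mathcal L_1$.
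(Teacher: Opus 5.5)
Your argument for the second claim is correct. The argument for the first claim, however, has a genuine gap.

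\textbf{The gap in the first claim.} The first claim assumes only $\Pi(\mathscr G_{K^\star})=1$ and KL support. Neither Assumption~\ref{ass:prior_compact_support} nor Assumption~\ref{ass:lipschitz_expert} is available there, so nothing in the hypotheses controls the two quantities your Walker-type argument needs.
\begin{itemize}
\item The Hellinger covering numbers $N_m$ of the shells are uncontrolled. The local Lipschitz estimate you import comes from Assumption~\ref{ass:lipschitz_expert}, and without variances bounded away from zero a shell need not even be totally bounded.
\item The prior tails are arbitrary. A prior with mass $\asymp m^{-2}$ on the $m$-th annulus still satisfies KL support. Yet your bound $\sum_m N_m^{1/2}\Pi(\text{shell}_m)^{1/2}$ diverges once $N_m\gtrsim m$, which already happens when an expert intercept ranges over $[-m,m]$.
\end{itemize}
Your fallback to the compact case proves consistency only under the hypotheses of the second claim, not the first.

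\textbf{The missing idea.} No entropy or summability control is needed at all. Schwartz's theorem under KL support alone gives \emph{weak} consistency: the posterior mass of every weak neighbourhood of $g_{G^\star}$ tends to one almost surely. The paper then uses a property of the model class rather than of the prior. If $G_n\in\mathscr G_{K^\star}$ and $g_{G_n}\to g_{G^\star}$ weakly, then $d_H(g_{G_n},g_{G^\star})\to0$; this is sequential identifiability in the sense of \citet{bariletto2025posterior}. Since the weak topology is metrizable, this forces some weak neighbourhood of $g_{G^\star}$ to contain no model density with $d_H(g_G,g_{G^\star})>\varepsilon$. Weak consistency therefore upgrades directly to Hellinger consistency. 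This weak-to-Hellinger property of the SMoGE class is the step you would have to supply.

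\textbf{The second claim.} Your route differs from the paper's and is, if anything, cleaner.
\begin{itemize}
\item The paper shows $\|g_{G^\star}-g_G\|_{\mathscr L^1}\lesssim\mathcal L_1$ and uses $d_H^2\le\|g_{G^\star}-g_G\|_{\mathscr L^1}$. It then converts Hellinger to KL via Theorem~5 of \citet{wong1995probability}, after a uniform bound on $\int g_{G^\star}(g_{G^\star}/g_G)^{\delta}$. This yields roughly $\kl\lesssim\mathcal L_1\log(1/\mathcal L_1)$.
\item You match components (singleton Voronoi cells for small $t$), bound the gate ratios by $1+O(t)$, and apply the ratio-of-sums inequality. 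This gives the pointwise bound $|\log(f_{G^\star}/f_G)|\lesssim t(1+y^2)$, hence directly the sharper $\kl\lesssim\mathcal L_1$, without the Wong--Shen machinery.
\end{itemize}
Both routes tacitly use that the means $E(X,\beta)$ stay in a bounded set. Under Assumptions~\ref{ass:prior_compact_support}--\ref{ass:lipschitz_expert} the model is totally bounded in $d_H$. So your finite-cover Walker argument goes through there, and the second claim is established.
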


Theorem~\ref{thm:consistency} shows that a minimal support condition on the prior, expressed either in terms of the KL divergence \citep{schwartz1965bayes} or in terms of the Voronoi loss $\mathcal L_1$, is sufficient to ensure that the posterior concentrates within arbitrary Hellinger neighborhoods of the true density. Inspection of the proof in Appendix~\ref{proof:consistency} reveals that the argument combines Schwartz's KL support theorem with the observation that, despite their flexibility, SMoGE models satisfy the simple sequential identifiability condition introduced by \cite{bariletto2025posterior}, which ensures Hellinger consistency.

We now turn to contraction rates. Before stating the main result, we introduce an assumption describing a form of prior concentration around the true mixing measure.

\begin{assumption}\label{ass:prior_condition}
    There exists $c>0$ such that the following holds for all sufficiently small $t>0$:
    \begin{equation}
        \Pi\left(\left\{G\in\mathscr G_{K^\star} : \mathcal L_1(G^\star, G)\leq  ct\right\}\right)\geq t.
    \end{equation}
\end{assumption}

Assumption~\ref{ass:prior_condition} can be interpreted as a quantitative version of the support condition stated in \Eqref{eq:voronoi_schwartz_support}. While that condition only requires every $\mathcal L_1$ neighborhood of $G^\star$ to be charged positive prior mass, Assumption~\ref{ass:prior_condition} specifies how this mass must scale with the neighborhood radius. In particular, it requires that $\mathcal L_1(G^\star,G)$ be approximately uniformly distributed near zero under the prior, ensuring that the latter does not place too little mass near $G^\star$.

With the aid of Assumption~\ref{ass:prior_condition}, we formulate the next theorem dealing with posterior contraction rates in Hellinger metric.

\begin{theorem}\label{thm:our_abstract_rate_SMoGE}
    Consider a SMoGE model with $\Pi(\mathscr G_{K^\star})=1$ and satisfying Assumptions \ref{ass:prior_compact_support}, \ref{ass:lipschitz_expert}, and \ref{ass:prior_condition}. Then for any sequence $(M_n)_{n\in\NN}$ such that $\lim_{n\to\infty}M_n=\infty$,
    \begin{equation*}
    \lim_{n\to\infty} \Pi\left(\left\{G\in\mathscr G_{K^\star} : \,d_H(g_G, g_{G^\star}) \geq M_n\sqrt{\log(n)/n}\right\} \mid (X_i, Y_i)_{i=1}^n\right)  = 0
    \end{equation*}
    in $g_{G^\star}^n$-probability.
\end{theorem}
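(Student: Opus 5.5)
We will apply the general posterior contraction theorem of \cite{ghosal2000convergence} (see also \citealp{ghosal2017fundamentals}) with rate $\varepsilon_n=\sqrt{\log(n)/n}$. We take the whole support $\mathcal F_n:=\mathscr G_{K^\star}$ as the sieve, so the remaining-mass condition is trivial. Two conditions must then be checked.
\begin{enumerate}
\item[(i)] \emph{Entropy:} $\log N(\varepsilon_n,\{g_G:G\in\mathscr G_{K^\star}\},d_H)\lesssim n\varepsilon_n^2=\log n$.
\item[(ii)] \emph{Prior mass:} $\Pi(B_2(G^\star,\varepsilon_n))\geq e^{-Cn\varepsilon_n^2}=n^{-C}$ for some $C>0$.
\end{enumerate}
Here $B_2$ is the usual Kullback--Leibler neighborhood, which also controls the second moment $\EE\log^2(g_{G^\star}/g_G)$. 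Since the rate in the statement carries an arbitrary $M_n\to\infty$ rather than a fixed constant, the standard theorem applies directly.

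The key tool for both conditions is a \emph{local Lipschitz bound}: for all $G,G'\in\mathscr G_{K^\star}$,
\[
h^2(g_G,g_{G'})\lesssim \kl(g_G,g_{G'})\lesssim \mathcal L_1(G,G'),
\]
together with $\EE_{g_{G^\star}}\log^2(g_{G^\star}/g_G)\lesssim \mathcal L_1(G^\star,G)$ up to logarithmic factors. The bound only needs to hold near $G^\star$, or in a parametric metric.

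To prove it, we first reduce to a Euclidean parametrization. We match components via the Voronoi cells; when $K=K^\star$ and $G$ is close to $G^\star$, the cells are singletons. On the compact set $\Theta$ (Assumption~\ref{ass:prior_compact_support}), the following hold.
\begin{itemize}
\item The softmax weights are smooth in $(\alpha_0,\alpha_1)$, uniformly in $X\in\XX$.
\item The Gaussian expert density is Lipschitz in its mean and variance, since variances are bounded below.
\item The mean is Lipschitz in $\beta$ by Assumption~\ref{ass:lipschitz_expert}.
\end{itemize}
Combining these gives $|f_G(y\mid x)-f_{G'}(y\mid x)|\lesssim \|\theta-\theta'\|\,\bar\phi(y)$ for an integrable Gaussian envelope $\bar\phi$. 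Because all densities in the model are mutually bounded ratio-wise on compact sets, $\log(f_{G^\star}/f_G)$ is bounded by a quadratic in $y$ times $\|\theta-\theta'\|$. This yields $\kl$ and the second moment bounded by $\lesssim\|\theta-\theta^\star\|$, which in turn is $\lesssim \mathcal L_1(G^\star,G)$ after absorbing the exponentiated weights, bounded below on compact $A$.

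For the entropy condition, the Euclidean Lipschitz bound gives $d_H(g_G,g_{G'})\lesssim\|\theta-\theta'\|^{1/2}$. Hence a Euclidean $\varepsilon_n^2$-net of the compact set $\Theta^{K^\star}$ (in $\RR^{K^\star(d+p+2)}$) is a Hellinger $\varepsilon_n$-net, with cardinality $\lesssim(1/\varepsilon_n)^{2K^\star(d+p+2)}$. Its logarithm is $O(\log n)=O(n\varepsilon_n^2)$.

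For the prior mass condition, the bound above gives $\{G:\mathcal L_1(G^\star,G)\leq c'\varepsilon_n^2\}\subseteq B_2(G^\star,\varepsilon_n)$, possibly after a logarithmic factor. Assumption~\ref{ass:prior_condition} with $t=c'\varepsilon_n^2/c$ then gives prior mass $\gtrsim\varepsilon_n^2\gtrsim n^{-1}=e^{-n\varepsilon_n^2}$. This is exactly where $\sqrt{\log n/n}$ rather than $n^{-1/2}$ is needed.

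The main obstacle is the second-moment bound on the log-likelihood ratio. Because of the unbounded response $y$ and the covariate-dependent softmax, one must show $\sup_{x}|\log(f_{G^\star}/f_G)(y\mid x)|\lesssim(1+y^2)\|\theta-\theta^\star\|$ and then integrate against a Gaussian tail. We would prove this by writing both conditional densities as softmax mixtures and bounding each component ratio separately. The mixture log-ratio is controlled by the maximum of the component log-ratios plus the maximum of the gating log-ratios. Each component log-ratio is quadratic in $y$, and its coefficients are Lipschitz in the parameters on the compact space. If only a bound linear in $\|\theta-\theta^\star\|$ is obtained for the second moment rather than a quadratic one, the prior mass condition loses at most a constant in the exponent, which the $\log n$ factor absorbs.
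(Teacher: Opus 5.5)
Your argument is correct and has the same architecture as the paper's. The paper obtains this theorem as the case $\pi_\kappa(\{K^\star\})=1$ of Theorem~\ref{thm:density_estim_finite_prior}. It applies a Ghosal--van der Vaart contraction theorem with the trivial sieve and $\varepsilon_n=M\sqrt{\log n/n}$. It controls entropy through $d_H^2\le\|\cdot\|_{\mathscr L^1}$ and a Euclidean grid on the compact parameter set, and it gets prior mass from Assumption~\ref{ass:prior_condition}.

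The genuine difference is how you place an $\mathcal L_1$-ball inside the Kullback--Leibler neighborhood.

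\textbf{Paper's route.} The paper only proves $\|g_{G^\star}-g_G\|_{\mathscr L^1}\lesssim\mathcal L_1(G^\star,G)$ and passes to Hellinger. It then checks a uniform bound on $\int g_{G^\star}(g_{G^\star}/g_G)^\delta$ and invokes Theorem~5 of \cite{wong1995probability} to get $\kl\lesssim d_H^2\log(1/d_H)$. This costs a logarithm, so its $\mathcal L_1$-ball has radius of order $\varepsilon_n^2/\log(1/\varepsilon_n)\asymp 1/n$. It also relies on a KL-only neighborhood in its adapted contraction theorem.

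\textbf{Your route.} You bound the log-likelihood ratio pointwise. You use that log-sum-exp is $1$-Lipschitz in $\ell_\infty$, together with Lipschitz log-softmax and log-Gaussian terms, to get $|\log(f_{G^\star}/f_G)(y\mid x)|\lesssim(1+y^2)\|\theta-\theta^\star\|$ uniformly in $x$. This gives both the KL divergence and its second moment directly, with no logarithmic loss and no Wong--Shen moment condition. The standard $B_2$ version of \cite{ghosal2000convergence} then applies as stated.

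\textbf{Trade-off.} The paper's route needs only an $\mathscr L^1$ bound in terms of $\mathcal L_1$, which it also invokes for Theorem~\ref{thm:consistency}, at the price of the extra Hellinger-to-KL step. Yours is more elementary and yields the stronger $B_2$ inclusion.

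Two small points to tighten:
\begin{enumerate}
\item The remark that the densities are ``mutually bounded ratio-wise'' is false as written, since the ratio grows like $e^{cy^2}$ when variances differ. Only the quadratic-in-$y$ bound on the log ratio is actually needed.
\item The step $\|\theta-\theta^\star[\rho]\|\lesssim\mathcal L_1$ needs a short justification. For small $\mathcal L_1$, every Voronoi cell is nonempty because of the mass term, and hence a singleton. Ties are ruled out because the $\omega^\star_j$ are distinct.
\end{enumerate}
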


Theorem~\ref{thm:our_abstract_rate_SMoGE} shows that, for density estimation, the posterior achieves a parametric contraction rate up to a logarithmic factor. The proof, reported in Appendix~\ref{proof:our_abstract_rate_SMoGE}, relies on the general theory of posterior contraction developed in \cite{ghosal1999} \citep[see also][]{ghosal2017fundamentals}. The argument proceeds by constructing an explicit cover of the model class and linking the KL neighborhood $B(G^\star,\varepsilon)$ with a corresponding neighborhood defined through the Voronoi loss $\mathcal L_1$.

\subsection{Over-specified random-a-priori number of experts}\label{sub:random_k_density}

An appealing feature of Bayesian methods is that, in principle, they allow inference on any quantity for which a prior can be specified. In SMoGE models, a key quantity influencing both density and parameter estimation is the number of experts $\kappa$. In the previous subsection we assumed that $\kappa=K^\star$ was fixed a priori. In practice, however, one rarely has exact knowledge of the data-generating number of experts, and may therefore wish to consider a range of possible values for $\kappa\in\mathcal K$, so to achieve greater model flexibility or to hedge against model misspecification. The Bayesian framework is uniquely suited to accommodate this situation, as we show next.

Let $\Pi_k$ denote the prior on the mixing measure $G$ considered in Theorem~\ref{thm:our_abstract_rate_SMoGE}, when the number of experts $\kappa$ equals $k\in\NN$ (in Theorem~\ref{thm:our_abstract_rate_SMoGE}, $k=K^\star$). To allow for multiple values of $\kappa$, we enlarge the support of the prior to $\mathscr O_K:=\bigcup_{j=1}^K \mathscr G_j$ and place a prior $\pi_\kappa$ directly on $\kappa$:
\begin{align}\label{eq:prior_on_k}
    G\mid \kappa = k &\sim \Pi_k,\nonumber\\
    \kappa &\sim \pi_\kappa,
\end{align}
with $\pi_\kappa(\{1,\ldots,K\})=1$ for some $K\in\NN$, chosen large enough so that $K\geq K^\star$. That is, instead of requiring exact knowledge of the unknown parameter $K^\star$ as in the exactly-specified case, we now only assume to be able to choose an arbitrarily large (but finite) upper-bound $K$. The resulting prior on the mixing measure is $\Pi = \sum_{k=1}^K\pi_\kappa(\{k\})\Pi_k$, which induces the posterior $\Pi(\cdot\mid (X_i,Y_i)_{i=1}^n)$. This formulation also allows inference on $\kappa$ itself, although a thorough asymptotic analysis of this aspect is deferred to Section~\ref{sec:model_selection}.

We now deal with density estimation, and specifically posterior contraction rates, in this random-$\kappa$ setting.\footnote{Although we omit a formal statement, a result analogous to Theorem~\ref{thm:consistency} can also be obtained in this setting, ensuring posterior consistency under a simple KL or $\mathcal L_1$ support condition for $\Pi_{K^\star}$.}

\begin{theorem}\label{thm:density_estim_finite_prior}
    Consider a SMoGE model with a prior structure as in Equation~(\ref{eq:prior_on_k}), such that Assumptions~\ref{ass:prior_compact_support}, \ref{ass:lipschitz_expert} and \ref{ass:prior_condition} hold. Then for any sequence $(M_n)_{n\in\NN}$ such that $\lim_{n\to\infty}M_n=\infty$,
    \begin{equation*}
    \lim_{n\to\infty} \Pi\left(\left\{G\in\mathscr O_{K} : \,d_H(g_G, g_{G^\star}) \geq M_n\sqrt{\log(n)/n}\right\} \mid (X_i, Y_i)_{i=1}^n\right)  = 0
    \end{equation*}
    in $g_{G^\star}^n$-probability. 
\end{theorem}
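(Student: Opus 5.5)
We will follow the same route as for Theorem~\ref{thm:our_abstract_rate_SMoGE}, namely the general contraction theorem of \cite{ghosal1999} (in the form of \cite{ghosal2017fundamentals}), now applied to the prior $\Pi=\sum_{k=1}^K\pi_\kappa(\{k\})\Pi_k$ on $\mathscr O_K$. Set $\varepsilon_n:=\sqrt{\log(n)/n}$. It suffices to verify two conditions, for some constant $C>0$: the entropy bound $\log N(\varepsilon_n,\mathcal P_K,d_H)\lesssim n\varepsilon_n^2$, where $\mathcal P_K:=\{g_G:G\in\mathscr O_K\}$, and the prior-mass bound $\Pi(B_2(G^\star,\varepsilon_n))\geq e^{-Cn\varepsilon_n^2}$, where $B_2$ is the KL neighborhood that also controls the second moment of the log-likelihood ratio. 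Since the whole parameter space is compact, no sieve is needed: we take the full model as the sieve, so the remaining-mass condition holds trivially.

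For the entropy condition we write $N(\varepsilon,\mathcal P_K,d_H)\le\sum_{k=1}^K N(\varepsilon,\{g_G:G\in\mathscr G_k\},d_H)$. Each summand is then controlled exactly as in the proof of Theorem~\ref{thm:our_abstract_rate_SMoGE}. Under Assumptions~\ref{ass:prior_compact_support} and \ref{ass:lipschitz_expert}, the map from the Euclidean parameter in $\Theta^k$ to $g_G$ is Lipschitz from the Euclidean norm to $d_H$, and even to the sup-norm on the ratios. This uses the Lipschitz softmax weights on the compact $\XX$, the Lipschitz Gaussian means, and variances bounded away from zero. A Euclidean grid of mesh $\asymp\varepsilon$ on the compact set $\Theta^k$ therefore gives $\log N\lesssim k(d+p+2)\log(1/\varepsilon)$. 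Summing over $k\le K$ only adds $\log K$. Hence $\log N(\varepsilon_n,\mathcal P_K,d_H)\lesssim K\log n\asymp n\varepsilon_n^2$. This is precisely why the rate carries the $\log n$ factor.

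For the prior-mass condition we lower bound $\Pi(B_2(G^\star,\varepsilon_n))\ge\pi_\kappa(\{K^\star\})\,\Pi_{K^\star}(B_2(G^\star,\varepsilon_n))$. For this step we implicitly require $\pi_\kappa(\{K^\star\})>0$, which we take to be part of the setup in \eqref{eq:prior_on_k}. We then reuse the key lemma from the proof of Theorem~\ref{thm:our_abstract_rate_SMoGE}: for $G\in\mathscr G_{K^\star}$, both $\kl(g_{G^\star},g_G)$ and the second moment of $\log(g_{G^\star}/g_G)$ are bounded by a constant multiple of $\mathcal L_1(G^\star,G)^2$. Some care is needed here, because the $\mathcal L_1$ bound only gives the ratio $f_G/f_{G^\star}$ a bounded log uniformly over $(y,x)$ up to quadratic-in-$y$ terms. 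These are integrable under the Gaussian tails, and the uniform bound follows from compactness. Granting the lemma, $B_2(G^\star,\varepsilon_n)\supseteq\{\mathcal L_1(G^\star,G)\le c'\varepsilon_n\}$. Assumption~\ref{ass:prior_condition} with $t=c'\varepsilon_n/c$ then gives prior mass at least $\pi_\kappa(\{K^\star\})\,c'\varepsilon_n/c$. This equals $\exp(-O(\log n))\ge e^{-Cn\varepsilon_n^2}$ for some $C$, since $n\varepsilon_n^2=\log n$.

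Combining the two conditions, the theorem of \cite{ghosal1999} yields contraction at rate $M_n\varepsilon_n$ for any $M_n\to\infty$, in $g_{G^\star}^n$-probability. We expect the main obstacle to be the uniform KL and second-moment bound in terms of $\mathcal L_1$, which controls log-ratios of softmax-gated Gaussian mixtures with shifted variances. Since that step can be taken from the proof of Theorem~\ref{thm:our_abstract_rate_SMoGE}, the genuinely new work in the random-$\kappa$ setting reduces to the two observations above: the finite union over $k\le K$ inflates the entropy only by a constant factor, and the positive mass $\pi_\kappa(\{K^\star\})$ transfers the prior-mass bound from $\Pi_{K^\star}$ to the mixture prior.
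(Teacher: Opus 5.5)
Your proposal is correct and shares the paper's skeleton: the Ghosal--van der Vaart contraction theorem with the full model $\mathscr O_K$ as sieve, an entropy bound from a Euclidean grid on the compact $\Theta^k$ summed over $k\le K$, and a prior-mass bound obtained by restricting to $\mathscr G_{K^\star}$ and invoking Assumption~\ref{ass:prior_condition}.

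The genuine difference is the prior-mass lemma. The paper works only with KL neighborhoods and never bounds the log-likelihood ratio pointwise. It derives a uniform bound on $\int g_{G^\star}(g_{G^\star}/g_G)^\delta$ from the Gaussian tails and invokes \cite{wong1995probability} to get $\kl(g_{G^\star},g_G)\lesssim d_H^2\log(1/d_H)$. It then chains $d_H^2\le\|g_{G^\star}-g_G\|_{\mathscr L^1}\lesssim\mathcal L_1(G^\star,G)$. This loses a square: $B(G^\star,\varepsilon_n)$ is only shown to contain a Voronoi ball of radius of order $\varepsilon_n^2/\log(1/\varepsilon_n)\asymp 1/n$, which is why the paper applies Assumption~\ref{ass:prior_condition} at $t\asymp 1/n$. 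In exchange, the $\mathscr L^1$ bound is global and needs no matching of atoms.

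Your route is pointwise. Small $\mathcal L_1$ forces each of the $K^\star$ cells to be a singleton, and since the weights are bounded below on the compact $A$, all matched parameters lie within $O(\mathcal L_1)$. By the mediant inequality, the mixture log-ratio is then at most the largest matched weight-times-component log-ratio, each $\lesssim(1+y^2)\mathcal L_1$. This controls the second moment and also, via $\kl(p,q)\le\tfrac12(\EE_p+\EE_q)[\log^2(p/q)]$, the KL divergence, both at order $\mathcal L_1^2$. You therefore get the standard $B_2$ condition directly and a much larger Voronoi ball of radius $\asymp\varepsilon_n$. Either way the prior mass is polynomial in $n$, which is all that is needed since $n\varepsilon_n^2=\log n$.

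Two small corrections. First, the lemma cannot be ``taken from'' the proof of Theorem~\ref{thm:our_abstract_rate_SMoGE}: in the paper that theorem is proved as the special case $\pi_\kappa(\{K^\star\})=1$ of the present one. You must prove the lemma here; your sketch does so once the mediant step and the $(1+y^2)$ component bound are written out. Second, the parameter-to-density map is not Lipschitz into the sup-norm of likelihood ratios, since Gaussian ratios with unequal variances are unbounded or vanish in the tails. This is harmless: the entropy bound only needs a H\"older estimate such as $d_H^2\le\|\cdot\|_{\mathscr L^1}\lesssim\|\theta-\theta'\|$, which is what the paper uses. Your explicit requirement $\pi_\kappa(\{K^\star\})>0$ is left implicit in the paper.
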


According to Theorem~\ref{thm:density_estim_finite_prior}, allowing the number of experts to be random a priori does not affect the Hellinger contraction rate of the posterior around the true density, provided that the prior on the number of experts has finite support, and a parametric rate (up to a logarithmic factor) is recovered in this setting as well.

\section{Parameter and expert estimation}\label{sec:param_estimation}
%Exactly specified + Over-specified + expert estimation (Huy)

% In this section, we examine the convergence properties of parameter estimation within the SMoGE model. Recall that during the training of MoE architectures in deep learning, each expert is expected to specialize in distinct subproblems or latent regions of the data distribution. For example, in large-scale language models based on the MoE paradigm \citep{jiang2024mixtral,liu2024deepseek,gemini25pushing}, certain experts will capture syntax, another might focus on semantics, while the rest concentrate on context.
Having tackled density estimation, this section is devoted to the analysis of posterior convergence in terms of parameter estimation. To further motivate the need for such a refined analysis, we recall that, during the training of MoE architectures in deep learning, individual experts are meant to specialize in different subproblems or latent regions of the input distribution. For instance, in large language models built upon the MoE framework \citep{jiang2024mixtral,liu2024deepseek,gemini25pushing}, some experts may learn syntactic patterns, others may focus on semantic representations, while the remaining ones capture broader contextual information. In our Bayesian setting, expert specialization can be formalized as posterior concentration on models whose distinct experts are close to unknown heterogeneous regimes in the underlying data-generating process. From a statistical perspective, this makes the rate of specialization a critical subject of study \citep{dai2024deepseekmoe,oldfield2024specialize,nguyen2024hmoe}. To operationalize this problem, an effective strategy is to focus on parameter estimation convergence, which in turn indicates how fast the parametrized experts specialize. In terms of sample complexity, faster convergence rates indicate that experts need less data to achieve a given degree of estimation accuracy or, in MoE jargon, specialization. Consequently, analyzing the asymptotics of parameter estimation not only provides theoretical insights into the dynamics of expert learning, but also informs strategies to improve the design of sample-efficient MoE models.

There are two notable existing works on parameter estimation in Gaussian MoE models. First, \cite{ho2022gaussian} study the effects of mean and variance experts on parameter estimation rates in Gaussian MoE with covariate-free gating. In particular, they show that if mean and variance experts meet some specific algebraic independence conditions, then the ground-truth parameters can be estimated at rates of order $(\log(n)/n)^{1/4}$. Otherwise, the rates turn out to be substantially slower due to the interaction between mean and variance parameters expressed in terms of some underlying partial differential equations (PDEs). Second, \cite{nguyen2023demystifying} consider a more challenging SMoGE model (in a maximum likelihood estimation framework) but limited to linear experts. They capture a PDE-type interaction between gating and expert parameters and demonstrate that parameter convergence rates depend on the solvability of a complex system of polynomial equations derived from that interaction. In this work, we further investigate the SMoGE model by characterizing expert structures that lead to improved parameter estimation rates compared to those in \cite{nguyen2023demystifying}, also connecting them to our overarching Bayesian approach. 

Before delving into the main findings of this section, it is worth noting that the posterior convergence on the density space established in Theorems~\ref{thm:our_abstract_rate_SMoGE} and \ref{thm:density_estim_finite_prior} is a key component in characterizing the convergence behavior of parameter estimation. More specifically, given the results of Theorems~\ref{thm:our_abstract_rate_SMoGE} and \ref{thm:density_estim_finite_prior}, the convergence analysis for parameter estimation boils down to constructing a discrepancy measure between the model parameters and their ground-truth counterparts, say $\mathcal{L}(G,G^{\star})$, which is bounded above by the Hellinger distance between model and ground-ground truth densities. That is, more concretely:
\begin{align}
    \label{eq:hellinger_lower_bound}
    d_H(g_{G},g_{G^{\star}})\gtrsim\mathcal{L}(G,G^{\star}).
\end{align}
A bound of the above type, combined with Theorems~\ref{thm:our_abstract_rate_SMoGE} and \ref{thm:density_estim_finite_prior}, immediately translates into convergence rates for parameter estimation.

In the sequel, we will conduct our convergence analysis of parameter estimation in two different settings, namely the exactly-specified and over-specified settings considered in the context of density estimation in Section~\ref{sec:density_estimation}.

\subsection{Exactly-specified setting}

To begin with, we investigate the exactly-specified setting in which the number of ground-truth experts $K^{\star}$ is known. In doing so, we face some technical challenges, which we briefly discuss. In our proofs, a key step in characterizing parameter estimation rates is to decompose the density discrepancy $g_{G}-g_{G^\star}$ into a combination of linearly independent terms. This can be achieved by applying a Taylor expansion to the function $F(Y|X,\alpha,\beta,\sigma^2):=\exp(X^{\top}\alpha)\mathcal{N}(Y|E(X,\beta),\sigma^2)$. In the process, it is important to ensure that the function $F$ and its partial derivatives with respect to its parameters are linearly independent. If that is the case, when the Hellinger distance goes to zero, the coefficients of these terms in the decomposition, which are represented as parameter discrepancies, will also converge to zero. For that purpose, we need to introduce a non-trivial algebraic
independence condition on the expert function $E(X,\beta)$, as in Definition~\ref{def:weak_identifiability} below, which we refer to as \emph{first-order strong identifiability} under the exactly-specified setting. 

% \begin{align}
% \label{eq:Voronoi_loss}
% &\mathcal{L}_{1}(G,G^{\star}):=\sum_{j=1}^{K^\star}\Big|\sum_{i\in\mathcal{C}_j}\exp(\alpha_{0i})-\exp(\alpha^{\star}_{0j})\Big|\nonumber\\
% &\hspace{2cm}+\sum_{j=1}^{K^\star}\sum_{i\in\mathcal{C}_j}\exp(\beta_{0i})\Big[\|\alpha_{1i}-\alpha^{\star}_{1j}\|+\| \beta_{i}-\beta^{\star}_{j}\|+|\sigma^2_{i}-\sigma^{\star2}_{j}|\Big],
% \end{align}
% where we denote $\Delta\alpha_{1ij}:=\alpha_{1i}-\alpha^{\star}_{1j}$, $\Delta \beta_{ij}:=\beta_i-\beta^{\star}_j$, and $\Delta\sigma^2_{ij}:=\sigma^2_{i}-\sigma^{\star 2}_{j}$. 

\begin{definition}
    \label{def:weak_identifiability}
    An expert function $X\mapsto E(X,\beta)$ is said to be first-order strongly identifiable if it is differentiable with respect to $\beta$, and the set $\Big\{\frac{\partial E(X,\beta)}{\partial\beta^{(u)}} : u\in[d]\Big\}$ is linearly independent with respect to $X$,
    % :
    % \begin{align*}
    %     \Bigg\{\frac{\partial E(X,\beta)}{\partial\beta^{(u)}} : u\in[d]\Bigg\},
    % \end{align*}
    where we denote $\beta=(\beta^{(u)})_{u=1}^{d}$. That is, for any $\beta$, if there exist $t_u\in\mathbb{R}$, for $1\leq u\leq d$, such that 
    \begin{align*}
        \sum_{u=1}^{d}t^{(u)}\frac{\partial E(X,\beta)}{\partial\beta^{(u)}}=0,
    \end{align*}
    for almost every $X$, then we have $t^{(u)}=0$ for all $1\leq u\leq d$.
\end{definition}

\begin{example}
    It can be verified that 
%the function $E(X;\beta)=\mathrm{GELU}(x^{\top}\beta)$ is first-order strongly identifiable. This claim still holds true when replacing $\mathrm{GELU}$ with other non-linear activation functions such as $\mathrm{sigmoid}$, and $\tanh$. On the other hand, 
a linear expert $E(X,\beta_1,\beta_0)=X^{\top}\beta_1+\beta_0$ satisfies the first-order strong identifiability condition. On the other hand, a constant expert $E(X,\beta)=c$ is not first-order strongly identifiable.
\end{example} 
%By contrast, linear experts $E(X;\beta_1,\beta_0)=x^{\top}\beta_1+\beta_0$ are not strongly identifiable since $\frac{\partial^2g}{\partial\theta_0^2}(x;\boldsymbol{\theta}_1,\theta_0)=0$.

% Let us interpret the first-order strong identifiability condition in both technical and intuitive ways as follows. Technically, it helps eliminate any linear independence among terms in the aforementioned Taylor expansion of the density discrepancy. Intuitively, the above condition in Definition~\ref{def:weak_identifiability}
% helps prevent potential adverse interactions among parameters, namely, where gating parameters $\alpha$ interact with expert parameters $\beta$ as in \Eqref{}, negatively affecting the convergence rate of parameter estimation. 
%By contrast, when the expert function is of linear form, which violates the first-order strong identifiability condition, \cite{nguyen2023demystifying} showed that parameter estimation rates became significantly slow since they hinged on the solvability of a system of polynomial equations.

Next, it should be noted that the mixing measure $G^\star= \sum_{j=1}^{K^\star} \exp(\alpha_{0j}^\star) \delta_{(\alpha_{1j}^\star, \beta_{j}^\star, \sigma_j^{\star 2})}$ is only identifiable up to a permutation of parameters. In particular, if two mixing measures are equivalent, $G= G'$, we can only deduce that $(\alpha_{0j},\alpha_{1j},\beta_j,\sigma^2_j)=(\alpha^{\star}_{0\phi(j)},\alpha^{\star}_{1\phi(j)},\beta^{\star}_{\phi(j)},\sigma^{\star 2}_{\phi(j)})$ for some permutation $\phi$ of the set $\{1,2,\ldots,K^\star\}$. To resolve this permutation invariance, we employ a Voronoi cell-based approach, which was briefly introduced in Section~\ref{sec:notation} and which we recall and expand on here. Given an arbitrary mixing measure $G$ with $K^{\star}$ components, we partition its components into the following Voronoi cells, which are generated by the components of ground-truth mixing measure $G^{\star}$:
\begin{align}
    \mathcal{C}_j\equiv\mathcal{C}_j(G):=\{i\in\{1,2,\ldots,K^\star\}:\|\omega_i-\omega^{\star}_j\|\leq\|\omega_i-\omega^{\star}_{\ell}\|,\ \forall \ell\neq j\},
\end{align}
where $\omega_i:=(\alpha_{1i},\beta_i,\sigma^2_i)$ and $\omega^{\star}_j:=(\alpha^{\star}_{1j},\beta^{\star}_j,\sigma^{\star 2}_j)$ for any $j\in\{1,2,\ldots,K^\star\}$. Notably, when the components of $G$ are sufficiently close to those of $G^{\star}$, then each of these Voronoi cells will contain one element. As a result, we can deal with the aforementioned permutation-invariance of mixing measures with respect to parameters. The corresponding Voronoi-based loss capturing parameter discrepancies is given by
\begin{align*}
    \mathcal L_1(G,G^\star)& :=\sum_{j=1}^{K^\star}\Bigg|\sum_{i\in\mathcal{C}_j}\exp(\alpha_{0i})-\exp(\alpha^\star_{0j})\Bigg|\nonumber\\
    & +\sum_{j=1}^{K^\star}\sum_{i\in\mathcal{C}_j}\exp(\alpha_{0i})\Big(\|\alpha_{1i}-\alpha^\star_{1j}\|+\|\beta_{i}-\beta^\star_{j}\|+|\sigma^2_{i}-\sigma^{\star2}_{j}|\Big).
\end{align*}
Above, if a Voronoi cell $\mathcal{C}_j$ is empty, then we let the corresponding summation term be zero. Additionally, it can be checked that $\mathcal{L}_{1}(G,G^{\star})=0$ if and only if $G\equiv G^{\star}$. Thus, when $\mathcal{L}_{1}(G,G^{\star})$ is sufficiently small, the differences $\alpha_{1i}-\alpha^\star_{1j}$, $\beta_{i}-\beta^\star_{j}$, and $\sigma^2_{i}-\sigma^{\star2}_{j}$ are also small. This property indicates that $\mathcal{L}_{1}(G,G^{\star})$ is an appropriate loss function for measuring parameter discrepancies. However, since the loss $\mathcal{L}_{1}(G,G^{\star})$ is not symmetric, it is not a proper metric. The Voronoi loss function $\mathcal{L}_{1}(G,G^{\star})$ also enjoys computational benefits, as it features a low computational complexity, namely $\mathcal{O}(K^{\star
2})$. We finally note that the key benefits of Voronoi-based loss functions are more evident in the over-specified setting where the rates for estimating over-specified parameters (parameters of $G^\star$ fitted by at least two parameters) are slower than those for estimating exactly-specified parameters (parameters of $G^\star$ fitted by one parameter). In particular, \citet{nguyen2023demystifying} have shown that Voronoi-based loss functions are capable of distinguishing these rates, while the generalized Wasserstein loss used in \citep{ho2022gaussian} cannot capture the rate difference.

Given the Voronoi-based loss in~\Eqref{eq:voronoi_loss_1}, we are ready to present our results on the posterior convergence rates for parameter estimation in exactly-fitted SmoGE models.

\begin{theorem}
    \label{theorem:exact_parameter_rate}
    Suppose that the expert function $X\mapsto E(X,\beta)$ is first-order strongly identifiable. Under Assumptions~\ref{ass:prior_compact_support}, \ref{ass:lipschitz_expert}, \ref{ass:exp_identifiability}, \ref{ass:gating_biases}, \ref{ass:zero_gating_param}, and \ref{ass:prior_condition}, the lower bound $d_H(g_{G},g_{G^\star})\gtrsim\mathcal{L}_{1}(G,G^\star)$ holds for any mixing measure $G\in\mathscr G_{K^\star}$. Then, the following holds for any sequence $(M_n)_{n\in\NN}$ such that $\lim_{n\to\infty}M_n=\infty$:
    \begin{equation}
    \label{eq:parameter_estimation_rate}
    \lim_{n\to\infty} \Pi\left(\left\{G\in\mathscr G_{K^\star} : \, \mathcal{L}_{1}(G,G^\star) \geq M_n\sqrt{\log n/n}\right\} \mid (X_i, Y_i)_{i=1}^n\right)  = 0.
    \end{equation}
    in $g_{G^\star}^n$-probability. 
    %{\color{red} Add part on expert estimation}
\end{theorem}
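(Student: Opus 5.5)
The argument has two parts. First we prove the deterministic inequality $d_H(g_G,g_{G^\star})\gtrsim\mathcal L_1(G,G^\star)$ uniformly over $G\in\mathscr G_{K^\star}$. Then we combine it with Theorem~\ref{thm:our_abstract_rate_SMoGE}. The second part is immediate. If $d_H\ge c_0\mathcal L_1$, then $\{\mathcal L_1\ge M_n\sqrt{\log n/n}\}\subseteq\{d_H\ge c_0M_n\sqrt{\log n/n}\}$. Since $c_0M_n\to\infty$, Theorem~\ref{thm:our_abstract_rate_SMoGE} applies directly and yields \eqref{eq:parameter_estimation_rate}. All the work is therefore in the lower bound.

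Because $d_H^2\ge \tfrac12 V^2$, where $V$ denotes total variation, it suffices to show $\inf_{G}V(g_G,g_{G^\star})/\mathcal L_1(G,G^\star)>0$. We split this into a local and a global regime.

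\emph{Global regime.} Fix $\varepsilon'>0$ and suppose the bound fails on $\{\mathcal L_1\ge\varepsilon'\}$. Then there are $G_n$ with $V(g_{G_n},g_{G^\star})\to0$. By Assumption~\ref{ass:prior_compact_support} the parameter space is compact, so we may pass to a limit $G'$. Continuity of the density in the parameters (via Assumption~\ref{ass:lipschitz_expert} and variances bounded away from zero) gives $g_{G'}=g_{G^\star}$ almost everywhere. Proposition~\ref{prop:identifiability_up_to_translation} then gives $G'=G^\star_{t_0,t_1}$. The translation must be trivial, $t_0=0$ and $t_1=0$, because the gating parameters are normalized as in Assumption~\ref{ass:zero_gating_param}. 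Here we implicitly assume the parameter space fixes the last gating pair to zero, so that this normalization is inherited by $G$. Hence $\mathcal L_1(G',G^\star)=0$, which contradicts $\mathcal L_1\ge\varepsilon'$.

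\emph{Local regime.} Suppose instead that $G_n$ satisfies $\mathcal L_1(G_n,G^\star)\to0$ and $V/\mathcal L_1\to0$. For $n$ large each Voronoi cell $\mathcal C_j$ is a singleton. We write the density difference with the common softmax denominator removed. Multiplying by $D^\star(X)=\sum_\ell\exp(\alpha^\star_{0\ell}+X^\top\alpha^\star_{1\ell})$ gives
\[
Q_n:=D^\star(X)\bigl[f_{G_n}-f_{G^\star}\bigr]=\sum_j\Bigl[\exp(\alpha_{0j}^n)F(Y|X,\omega_j^n)-\exp(\alpha^\star_{0j})F(Y|X,\omega^\star_j)\Bigr]-\sum_j\Bigl[\exp(\alpha^n_{0j})e^{X^\top\alpha^n_{1j}}-\exp(\alpha^\star_{0j})e^{X^\top\alpha^\star_{1j}}\Bigr]f_{G_n}(Y|X),
\]
with $F$ as defined in the text. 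We Taylor expand each bracket to first order around $\omega^\star_j$. The remainders are $o(\mathcal L_1)$, and the resulting coefficients are the weight discrepancies $\exp(\alpha^n_{0j})-\exp(\alpha^\star_{0j})$ together with $\exp(\alpha^n_{0j})\Delta\alpha_{1j}$, $\exp(\alpha^n_{0j})\Delta\beta_j$ and $\exp(\alpha^n_{0j})\Delta\sigma^2_j$. The functions they multiply are
\[
e^{X^\top\alpha^\star_{1j}}\mathcal N,\quad X\,e^{X^\top\alpha^\star_{1j}}\mathcal N,\quad e^{X^\top\alpha^\star_{1j}}\,\partial_\beta E\;\partial_\mu\mathcal N,\quad e^{X^\top\alpha^\star_{1j}}\,\partial_{\sigma^2}\mathcal N,
\]
together with the analogous gating-only terms multiplied by $f_{G^\star}$. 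We normalize all coefficients by $\mathcal L_1$. At least one normalized coefficient stays bounded away from zero, by the definition of $\mathcal L_1$ and the fact that the weights are bounded below. Fatou's lemma and $V/\mathcal L_1\to0$ then force the limiting linear combination to vanish for almost every $(Y,X)$.

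\emph{Main obstacle.} The hardest step is proving linear independence of this family of functions in $(Y,X)$, so that all limiting coefficients must be zero, a contradiction. First, the pairwise-distinct $(\beta^\star_j,\sigma^{\star2}_j)$ from Assumption~\ref{ass:gating_biases}, together with Assumption~\ref{ass:exp_identifiability}, separate the Gaussian kernels across different $j$ for almost every $X$, since Gaussians with distinct (mean, variance) pairs and their first $\mu$- and $\sigma^2$-derivatives are linearly independent in $Y$. Within a fixed $j$, the functions $\mathcal N$, $\partial_\mu\mathcal N$ and $\partial_{\sigma^2}\mathcal N=\tfrac12\partial^2_\mu\mathcal N$ are independent in $Y$. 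This reduces the problem to conditions in $X$:
\[
\sum_u t_u\,\partial_{\beta^{(u)}}E(X,\beta^\star_j)=0\ \text{a.e.},\qquad a+b^\top X=0\ \text{a.e.}
\]
The first condition gives $t=0$ by first-order strong identifiability (Definition~\ref{def:weak_identifiability}). The second gives $a=0$ and $b=0$, since $\XX$ has nonempty interior.

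The delicate part is the $f_{G_n}$-multiplied gating term, because it mixes components. I would handle it by noting that $f_{G_n}\to f_{G^\star}$, which is itself a combination of the same kernels. Its contribution can then be absorbed into the $\mathcal N$- and $X\mathcal N$-coefficients of each $j$, giving a coefficient structure of the form $c_j-\pi_j(X)\sum_\ell c_\ell$. The normalization from Assumption~\ref{ass:zero_gating_param} removes the one-dimensional translation degeneracy $\alpha\mapsto\alpha+t$. Checking that this absorbed system still forces all coefficients to zero is the step I expect to require the most care.
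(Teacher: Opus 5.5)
Your architecture (TV reduction, local/global split, first-order Taylor expansion, Fatou, coefficient matching) is the paper's. At the step you flag as delicate, you are more careful than the paper. The paper asserts that $\{F_{\gamma,j},H_j\}$ is linearly independent in $Y$, but $H_j=e^{X^\top\alpha^\star_{1j}}f_{G^\star}(\cdot\mid X)$ lies in the span of $F_{0,1},\dots,F_{0,K^\star}$, so that claim is false, and Assumption~\ref{ass:zero_gating_param} is never used in its proof.

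Your absorbed system closes in a few lines. Let $t_{0,j},t_{1,j}$ be the limits of the normalized weight and gating-slope coefficients. Set $c_j(X)=e^{X^\top\alpha^\star_{1j}}(t_{0,j}+t_{1,j}^\top X)$ and $\pi_j=e^{\alpha^\star_{0j}+X^\top\alpha^\star_{1j}}/D^\star$. Matching the $j$-th kernel gives $c_j=\pi_j\sum_\ell c_\ell$, that is, $e^{-\alpha^\star_{0j}}(t_{0,j}+t_{1,j}^\top X)=\sum_\ell c_\ell(X)/D^\star(X)$ for a.e.\ $X$ and all $j$. These affine functions agree across $j$, so $(t_{0,j},t_{1,j})=e^{\alpha^\star_{0j}}(a,b)$ for a common $(a,b)$, which is exactly the infinitesimal translation. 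Now suppose the atom of $G_n$ pinned at zero gating lies in cell $j_0$. Its limit forces $(\alpha^\star_{0j_0},\alpha^\star_{1j_0})=0$, so its discrepancies vanish identically. Hence $t_{0,j_0}=t_{1,j_0}=0$ and $a=b=0$. Together with $t_{2,j}=t_{3,j}=0$, which your argument already gives, this is the contradiction. So your local part is sound once $G$ itself is normalized.

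The genuine gap is the global step ``the translation must be trivial''. Pinning the last gating pair of $G$ to zero does not achieve this, because the atoms of a mixing measure are unlabeled. Take any $j$ and let $(t_0,t_1)=-(\alpha^\star_{0j},\alpha^\star_{1j})$. Listed with its $j$-th atom last, $G^\star_{t_0,t_1}$ is also normalized and satisfies $g_{G^\star_{t_0,t_1}}=g_{G^\star}$. Yet $\mathcal L_1(G^\star_{t_0,t_1},G^\star)>0$ whenever $(\alpha^\star_{0j},\alpha^\star_{1j})\neq 0$. So if $\Theta$ contains these copies, $d_H\gtrsim\mathcal L_1$ is false on $\mathscr G_{K^\star}$. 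The contraction claim also generally fails when the prior charges them, since the likelihood cannot tell $G$ from its translated copy.

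The paper's proof has the same hole in a cruder form: it concludes $G'\equiv G^\star$, although Proposition~\ref{prop:identifiability_up_to_translation} only gives identifiability up to translation. A clean repair is a permutation-invariant gauge on both $G$ and $G^\star$, for example $\sum_j\alpha_{0j}=0$ and $\sum_j\alpha_{1j}=0$.
\begin{itemize}
\item Globally, a normalized $G^\star_{t_0,t_1}$ then forces $K^\star t_0=0$ and $K^\star t_1=0$.
\item Locally, $\sum_j e^{-\alpha^\star_{0j}}(t_{0,j},t_{1,j})$ is the limit of $\sum_j(\alpha^n_{0j}-\alpha^\star_{0j},\,\alpha^n_{1j}-\alpha^\star_{1j})/\mathcal L_1(G_n,G^\star)$. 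This sum is identically zero under the gauge, which yields $K^\star(a,b)=0$.
\end{itemize}
Otherwise you must restrict $\Theta$ to exclude the $K^\star-1$ nontrivial translated copies of $G^\star$.
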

The proof of Theorem~\ref{theorem:exact_parameter_rate} is in Appendix~\ref{appendix:exact_parameter_rate}. A few remarks regarding the results of Theorem~\ref{theorem:exact_parameter_rate} are in order.

\begin{enumerate}
    \item[(i)] \emph{Parameter estimation rates:} The result in \Eqref{eq:parameter_estimation_rate}, together with the structure of the Voronoi-based loss $\mathcal{L}_1(G,G^{\star})$ in~\Eqref{eq:voronoi_loss_1}, indicates that the posterior rates for estimating parameters $\exp(\alpha^{\star}_{0j})$, $\alpha^{\star}_{1j}$, $\beta^{\star}_{j}$, and $\sigma^{\star2}_{j}$ are of the same $\sqrt{\log(n)/n}$ order, which is parametric on the sample size $n$.

    \item[(ii)] \emph{Expert estimation rates:} Recall that the expert function $E(X,\beta)$ is Lipschitz continuous with respect to its parameter $\beta$. Thus, we have
\begin{align}
    \label{eq:expert_inequality}
    |E(X,\beta_{i})-E(X,\beta^{\star}_{j})|\leq L\|\beta_{i}-\beta^{\star}_{j}\|,
\end{align}
for all $i\in\mathcal{C}_{j}$ and $j\in[K^{\star}]$. 
%It should be noted that the above Lipschitz continuity condition is satisfied by several expert choices in practice, including two-layer feed-forward networks used in the Transformer architecture in large language models \citep{vaswani2017attention}. 
Then, the estimation rates for parameters $\beta^{\star}_{j}$ in part (i) indicate that the estimators of experts $E(X,\beta^*_{j})$ also enjoy the posterior contraction rates of the parametric order $\sqrt{\log(n)/n}$, for $j\in\{1,2,\ldots,K^\star\}$. Therefore, our results reveal that one needs a polynomial number $\mathcal{O}(\epsilon^{-2})$ of data points to estimate an expert $E(X,\beta^{\star}_j)$ with a given error tolerance $\epsilon>0$.
\end{enumerate}

\subsection{Over-specified setting}
We now turn to the over-specified setting, in which the number of ground-truth experts is unknown and allowed to range up to an over-specified number $K>K^\star$.

Similar to the exactly-specified setting, we also need to introduce a strong identifiability condition for the expert function in this setting to guarantee linear independence among terms in the Taylor expansion of $F(Y|X,\alpha,\beta,\sigma^2)=\exp(X^{\top}\alpha)\mathcal{N}(Y|E(X,\beta),\sigma^2)$ when decomposing the density discrepancy $g_{G}-g_{G^\star}$. However, since the number of ground-truth experts is unknown, the first-order strong identifiability condition in Definition~\ref{def:weak_identifiability} is not sufficient. Instead, it is necessary to control all derivatives of the expert function up to the second order. For this purpose, let us formally present the following \emph{second-order strong identifiability} condition.
\begin{definition}
    \label{def:strong_identifiability}
    An expert function $X\mapsto E(X,\beta)$ is said to be second-order strongly identifiable if it is twice differentiable with respect to $\beta$, and the following set is linearly independent with respect to $X$:
    \begin{align*}
        \Bigg\{\frac{\partial E(X,\beta)}{\partial\beta^{(u)}}, \ \frac{\partial^2E(X,\beta)}{\partial\beta^{(u)}\partial\beta^{(v)}}, \ X^{(u)}\frac{\partial E(X,\beta)}{\partial\beta^{(v)}} :  u,v\in[d]\Bigg\}.
    \end{align*}
    That is, for any $\beta$, if there exist $t_1^{(u)},t_2^{(uv)},t_3^{(uv)}\in\mathbb{R}$ such that
    \begin{align*}
        t_1^{(u)}\frac{\partial E(X,\beta)}{\partial\beta^{(u)}}+t_2^{(uv)}\frac{\partial^2E(X,\beta)}{\partial\beta^{(u)}\partial\beta^{(v)}}+t_3^{(uv)}X^{(u)}\frac{\partial E(X,\beta)}{\partial\beta^{(v)}}=0,
    \end{align*}
    for almost every $X$, then we have $t_1^{(u)}=t_2^{(uv)}=t_3^{(uv)}=0$ for all $1\leq u,v\leq d$.
\end{definition}

\begin{example}
    We can validate that 
the function $E(X,\beta)=\mathrm{\exp}(X^{\top}\beta)/(1+\exp(X^\top\beta))$ is second-order strongly identifiable. 
%This claim still holds true when replacing $\mathrm{GELU}$ with other non-linear activation functions such as $\mathrm{sigmoid}$, and $\tanh$.
By contrast, linear experts $E(X,\beta_1,\beta_0)=X^{\top}\beta_1+\beta_0$ are not second-order strongly identifiable because of the PDE relation $\frac{\partial E(X;\beta_1,\beta_0)}{\partial\beta_1}=X^{\top}\frac{\partial E(X;\beta_1,\beta_0)}{\partial\beta_0}$, which leads to the undesired linear dependence 
\begin{align}
    \label{eq:linear_dependence}
    \frac{\partial F}{\partial\beta_1}=\frac{\partial^2F}{\partial\alpha\partial\beta_0}.
\end{align}
\end{example}

It can be seen that second-order strong identifiability implies the first-order version in Definition~\ref{def:weak_identifiability}. In addition, this condition has substantive technical and intuitive interpretations. Technically, it helps eliminate any linear dependence among terms in the aforementioned Taylor expansion of the density discrepancy, which is crucial for our proofs. Intuitively, second-order strong identifiability
helps prevent potential adverse interactions among parameters, namely, where gating parameters $\alpha$ interact with expert parameters $\beta_1$ and $\beta_0$ as in \Eqref{eq:linear_dependence}, thereby negatively parameter estimation efficiency. 
In fact, when the expert function is of linear form, which violates the first-order strong identifiability condition, \cite{nguyen2023demystifying} showed that parameter estimation rates became significantly slow since, they hinge on the solvability of a system of polynomial equations originating precisely in such linear dependence interactions.

Next, by relying on the following Voronoi loss (recall it from Section~\ref{sec:notation})
\begin{align}
    &\mathcal{L}_{2}(G,G^\star):=\sum_{j=1}^{K^\star}\Big|\sum_{i\in\mathcal{C}_j}\exp(\alpha_{0i})-\exp(\alpha^\star_{0j})\Big|\nonumber\\
    &+\sum_{j\in[K^\star]:|\mathcal{C}_j|=1}\sum_{i\in\mathcal{C}_j}\exp(\alpha_{0i})(\|\alpha_{1i}-\alpha^\star_{1j}\|+\|\beta_{i}-\beta^\star_{j}\|+|\sigma^2_{i}-\sigma^{\star2}_{j}|)\nonumber\\
    &+\sum_{j\in[K^\star]:|\mathcal{C}_j|>1}\sum_{i\in\mathcal{C}_j}\exp(\alpha_{0i})(\|\alpha_{1i}-\alpha^\star_{1j}\|^2+\|\beta_{i}-\beta^\star_{j}\|^2+|\sigma^2_{i}-\sigma^{\star2}_{j}|^2),
\end{align}
we are ready to capture the convergence behavior of parameter estimation in Theorem~\ref{theorem:parameter_rate}.
\begin{theorem}
    \label{theorem:parameter_rate}
    Suppose that the expert function $X\mapsto E(X,\beta)$ is second-order strongly identifiable. Under Assumptions~\ref{ass:prior_compact_support}, \ref{ass:lipschitz_expert}, \ref{ass:exp_identifiability}, \ref{ass:gating_biases}, \ref{ass:zero_gating_param}, and \ref{ass:prior_condition}, the lower bound $d_H(g_{G},g_{G^\star})\gtrsim\mathcal{L}_{2}(G,G^\star)$ holds for any mixing measure $G\in\mathscr O_{K}$. As a result, we obtain 
    \begin{equation*}
    \lim_{n\to\infty} \Pi\left(\left\{G\in\mathscr O_{K} : \, \mathcal{L}_{2}(G,G^\star) \geq M_n\sqrt{\log n/n}\right\} \mid (X_i, Y_i)_{i=1}^n\right)  = 0.
    \end{equation*}
    in $g_{G^\star}^n$-probability.
\end{theorem}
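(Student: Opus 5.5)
The proof has two parts. The first is the inequality $d_H(g_G,g_{G^\star})\gtrsim\mathcal L_2(G,G^\star)$ for all $G\in\mathscr O_K$. The second is the posterior statement, which follows at once from this bound and Theorem~\ref{thm:density_estim_finite_prior}. Indeed, on the event $\{\mathcal L_2(G,G^\star)\ge M_n\sqrt{\log n/n}\}$ the bound gives $d_H(g_G,g_{G^\star})\ge (M_n/C)\sqrt{\log n/n}$, and $M_n/C\to\infty$, so that theorem forces this event to have vanishing posterior mass. All the work is therefore in the inequality.

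For the inequality I would first reduce to the total variation distance: $d_H\ge d_V$ up to constants, and $d_V(g_G,g_{G^\star})=\EE_X\|f_G(\cdot|X)-f_{G^\star}(\cdot|X)\|_1$. I would then split into a global and a local part.

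\emph{Global part.} Suppose no constant works. Then there is a sequence $G_n$ with $d_V/\mathcal L_2\to 0$ and $\mathcal L_2(G_n,G^\star)\ge\varepsilon'>0$. By compactness of $\Theta$ (Assumption~\ref{ass:prior_compact_support}) I can extract a limit $G'$ with at most $K$ atoms and $g_{G'}=g_{G^\star}$. Proposition~\ref{prop:identifiability_up_to_translation} then gives $G'=G^\star_{t_0,t_1}$. Here I must normalise the gating parameters: I would fix the last component of each $G$ to $(0,0)$, matching Assumption~\ref{ass:zero_gating_param}, which forces $t_0=t_1=0$. This contradicts $\mathcal L_2(G',G^\star)>0$, with merging atoms handled as in the usual mixture arguments.

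\emph{Local part.} Now suppose $\mathcal L_2(G_n,G^\star)\to0$ while $d_V/\mathcal L_2\to0$. I would write
\[
f_{G_n}-f_{G^\star}=\frac{1}{\sum_\ell e^{\alpha^\star_{0\ell}+X^\top\alpha^\star_{1\ell}}}\Big[\sum_j\sum_{i\in\mathcal C_j}e^{\alpha_{0i}}F(Y|X,\omega_i)-e^{\alpha^\star_{0j}}F(Y|X,\omega^\star_j)\Big]-R_n,
\]
where $R_n$ collects the softmax-denominator difference times $f_{G_n}$. Next I would Taylor expand $F=\exp(X^\top\alpha_1)\mathcal N(Y|E(X,\beta),\sigma^2)$ around each $\omega^\star_j$. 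I would use first order on cells with $|\mathcal C_j|=1$ and second order on cells with $|\mathcal C_j|>1$, and expand the denominator term analogously around the functions $e^{X^\top\alpha^\star_{1j}}f_{G^\star}$. The heat-equation identity $\partial^2_\mu\mathcal N=2\partial_{\sigma^2}\mathcal N$ lets me merge the $\partial^2/\partial E^2$ terms with the variance derivatives. Dividing everything by $\mathcal L_2$, I claim that not all coefficients of the resulting functions vanish in the limit. These functions are of the form $X^{(u)}$, $X^{(u)}X^{(v)}$, $\partial_\beta E$, $\partial^2_\beta E$ and $X^{(u)}\partial_\beta E$, multiplied by $\partial^k_\mu\mathcal N$ and by $e^{X^\top\alpha^\star_{1j}}$. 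The reason is that each piece of $\mathcal L_2$ is dominated by one of these coefficients; on the over-specified cells the second-order terms $\|\Delta\|^2$ appear directly as coefficients of the squared monomials. Fatou's lemma then gives that the limit combination vanishes for almost every $(X,Y)$. Two facts turn this into a contradiction. First, by Assumption~\ref{ass:gating_biases} the distinct $(\beta^\star_j,\sigma^{\star2}_j)$ make the Gaussian families for different $j$ linearly independent. Second, within each $j$, second-order strong identifiability (Definition~\ref{def:strong_identifiability}) together with the linear independence of polynomials in $Y$ forces all coefficients to zero.

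The main obstacle is this last linear-independence step. Coefficients from the gating exponent ($X^{(u)}$ and $X^{(u)}X^{(v)}$ multiplying $\mathcal N$) and from the denominator term $R_n$ can collide with expert-derivative terms at the same order of $\partial_\mu\mathcal N$. This is exactly where the $X^{(u)}\partial_\beta E$ functions in Definition~\ref{def:strong_identifiability} are needed. I would organise the terms by the power of $\partial_\mu$ acting on $\mathcal N$, and within each power by the functions of $X$. I would then verify that the coefficient bookkeeping, particularly the cross terms $\Delta\alpha_1\Delta\beta$, is covered by the listed independent set.
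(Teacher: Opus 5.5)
Your architecture is the paper's: reduction to $\mathscr L^1$, a global/local split, first-order expansions on singleton cells and second-order expansions on multi-atom cells, Fatou, then linear independence. But the step that is supposed to force every limiting coefficient to vanish does not, and the fix is not where you put it.

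\textbf{The local step.} For fixed $X$, the denominator terms $e^{X^\top\alpha^\star_{1j}}f_{G^\star}(Y\mid X)$ are linear combinations of the $\mathcal N_\ell:=\mathcal N(Y\mid E(X,\beta^\star_\ell),\sigma^{\star2}_\ell)$, with weights $e^{\alpha^\star_{0\ell}+X^\top\alpha^\star_{1\ell}}/Z^\star(X)$, where $Z^\star(X):=\sum_m e^{\alpha^\star_{0m}+X^\top\alpha^\star_{1m}}$. So they are not independent of the gating terms. Let $P_j(X)$ be the limiting polynomial (weight discrepancy plus linear and, on multi-atom cells, quadratic gating terms) multiplying $e^{X^\top\alpha^\star_{1j}}\mathcal N_j$; the same $P_j$ multiplies the $j$-th denominator term with the opposite sign. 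The coefficient of $\mathcal N_\ell$ then only gives
\[
P_\ell(X)\,Z^\star(X)=e^{\alpha^\star_{0\ell}}\sum_{j=1}^{K^\star}P_j(X)\,e^{X^\top\alpha^\star_{1j}} .
\]
Its solutions are exactly $P_\ell=e^{\alpha^\star_{0\ell}}R$ for an arbitrary common polynomial $R$. For $R=c_0+c_1^\top X$ this is the tangent to $(t_0,t_1)\mapsto G^\star_{t_0,t_1}$.

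Neither Assumption~\ref{ass:gating_biases} nor Definition~\ref{def:strong_identifiability} touches this null direction. The functions $X^{(u)}\partial_\beta E$ handle the order-one cross terms $\Delta\alpha_1\Delta\beta$, whereas the denominator lives entirely at order zero in $\partial_\mu$. What is missing is the normalization, used inside the local argument:
\begin{itemize}
\item Pin the model's last atom at $(0,0)$. For small $\mathcal L_2$ that atom lies in the cell of $\omega^\star_{K^\star}$ (when the $\alpha^\star_{1j}$ are distinct).
\item Every other atom carries weight at least $e^{\min A}$, so that cell must be a singleton.
\item Hence $P_{K^\star}\equiv0$, so $R\equiv0$ and every $P_\ell\equiv0$.
\end{itemize}
With this in place, the blocks of order $\geq1$ are handled by Definition~\ref{def:strong_identifiability} as you describe. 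The paper skips this point too, asserting that $\{F_{\gamma,j},H_j\}$ is linearly independent in $Y$, which is false.

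\textbf{The global step.} Pinning the last component does not force $t_0=t_1=0$. First, the pinned atom of the limit can sit on a true component $k\neq K^\star$, which gives $t_1=-\alpha^\star_{1k}$. More seriously, in $\mathscr O_K$ the limit can repeat experts, so Proposition~\ref{prop:identifiability_up_to_translation} does not apply.

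Concretely, take a $c$ for which the values below lie in $A$, and the atoms
\begin{itemize}
\item $(0,0,\beta^\star_{K^\star},\sigma^{\star2}_{K^\star})$ and $(c,0,\beta^\star_{K^\star},\sigma^{\star2}_{K^\star})$;
\item $(\alpha^\star_{0j}+\log(1+e^c),\alpha^\star_{1j},\beta^\star_j,\sigma^{\star2}_j)$ for $j<K^\star$.
\end{itemize}
These define $G'\in\mathscr G_{K^\star+1}\subseteq\mathscr O_K$. Its softmax numerator and denominator are both $1+e^c$ times those of $G^\star$, so $g_{G'}=g_{G^\star}$, while $\mathcal L_2(G',G^\star)=e^c\sum_j e^{\alpha^\star_{0j}}>0$. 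Small perturbations of $G'$ give $\Pi$-generic $G$ with $d_H(g_G,g_{G^\star})\to0$ and $\mathcal L_2$ bounded below.

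Merging atoms as in ordinary mixtures does not rescue this: the merged measure is the translate $G^\star_{\log(1+e^c),0}$, which $\mathcal L_2$ does not identify with $G^\star$. So $d_H\gtrsim\mathcal L_2$ on all of $\mathscr O_K$ is false as stated. You need a loss that is constant on the fibres $\{G:g_G=g_{G'}\}$, or a parameter restriction that genuinely excludes such configurations, and the posterior statement must be adjusted accordingly. The paper's global step (``since the SMoGE model is identifiable, $G'\equiv G^\star$'') has the same hole.
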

The proof of Theorem~\ref{theorem:parameter_rate} is in Appendix~\ref{appendix:parameter_rate}. There are three main implications of the above result which are worth highlighting.  Before going into their details, let us introduce some preliminary notions. We call the ground-truth parameters $\exp(\alpha^{\star}_{0j})$, $\alpha^{\star}_{1j}$, $\beta^{\star}_{j}$, $\sigma^{\star2}_{j}$ and experts $E(X,\beta^\star_j)$ \emph{exactly-specified} if they are fitted by one component, that is, $|\mathcal{C}_{j}|=1$, and \emph{over-specified} if they are fitted by more than one component, that is, $|\mathcal{C}_j|>1$.

\begin{enumerate}
    \item[(i)]\emph{Estimation rates for exactly-specified parameters and experts:} Combining the result of Theorem~\ref{theorem:parameter_rate} and the formulation of the Voronoi loss $\mathcal{L}_2(G,G^\star)$, we deduce that the rates for estimating exactly-specified parameters  $\exp(\alpha^{\star}_{0j})$, $\alpha^{\star}_{1j}$, $\beta^{\star}_{j}$, $\sigma^{\star2}_{j}$ are of order $\sqrt{\log(n)/n}$, which is identical to the rate uncovered under the exactly-specified setting. As a result, from the inequality in \Eqref{eq:expert_inequality}, it follows that exactly-specified experts $E(X,\beta^\star_j)$ enjoy posterior estimation rates of the same order.

    \item[(ii)] \emph{Estimation rates for over-specified parameters and experts:} On the other hand, when fitted by more than one component, the rates for estimating over-specified parameters $\exp(\alpha^{\star}_{0j})$, $\alpha^{\star}_{1j}$, $\beta^{\star}_{j}$, $\sigma^{\star2}_{j}$ become slower, in particular of order $(\log(n)/n)^{-1/4}$, and the same rate holds for estimating the over-specified experts $E(X,\beta^\star_j)$. Consequently, our results suggest that one needs a faster-growing number of data points to estimate over-specified experts than exactly-specified experts with the same error $\epsilon>0$: $\mathcal{O}(\epsilon^{-4})$ compared to $\mathcal{O}(\epsilon^{-2})$.

    \item[(iii)] \emph{Sample efficiency of second-order strongly identifiable experts vs. linear experts:} From the above remarks, we see that the estimation rates for second-order strongly identifiable experts are either of order $\sqrt{\log(n)/n}$ or $(\log(n)/n)^{1/4}$. These rates are significantly faster than those for linear experts studied by \cite{nguyen2023demystifying}, which violate the second-order strong identifiability condition. In particular, \cite{nguyen2023demystifying} showed that when fitted by two and three components, the rates for estimating linear experts are substantially slower, of orders $(\log(n)/n)^{1/8}$ and $(\log(n)/n)^{1/12}$, respectively. Therefore, our analysis uncovers that using second-order strongly identifiable experts is more sample-efficient than linear experts.
\end{enumerate}

We conclude this section by remarking that, while expressed in the framework of Bayesian posterior convergence, our parameter convergence results, because they arise from a fundamental inequality between Hellinger and Voronoi losses, may be applied to any other estimation procedure (e.g., maximum likelihood) that may be analyzed through the lenses of density convergence.

\section{Model selection}\label{sec:model_selection}

So far we have focused on density and parameter estimation. Another fundamental statistical task is selecting the model class that best describes the data-generating process. In the SMoGE setting, perhaps the most important quantity to determine is the number of experts $\kappa$. Indeed, if $\kappa$ is too small, the model is misspecified, while on the other hand, as shown in Section~\ref{sec:param_estimation}, an over-specified number of experts negatively affects posterior convergence for parameter estimation, as reflected by the need to rely on the loss $\mathcal L_2$ rather than the more favorable $\mathcal L_1$. Aside from theoretical considerations, model selection may also be relevant in applications where interpretability is important. For example, the number of experts may correspond to the number of latent populations across which input-response relationships differ, and identifying this number can therefore be of direct scientific interest.

It should be noted from the outset that, in our context, the model selection problem may be ill-defined or meaningful only within specific applications. Unless one assumes that the data-generating process itself belongs to a SMoGE family with $K^\star$ components, a ``true'' number of experts need not exist. Nevertheless, one may still wish to select the number of experts that yields the best predictive performance, or particular applications may motivate ad hoc but practically effective criteria. For instance, \cite{ludziejewski2024scalinglaw} study how the configuration of MoE models should scale with model size and training data in large language models. Rather than directly selecting the number of experts as a fixed architectural hyperparameter, the paper introduces a granularity parameter that controls the size of each expert relative to the feed-forward layer in the Transformer architecture \citep{vaswani2017attention}. Increasing granularity effectively splits a standard expert into multiple smaller experts, thereby increasing the total number of experts while keeping the overall parameter budget comparable. Using extensive experiments, the authors derive scaling laws that relate model loss to the number of training tokens, total model parameters, and granularity, enabling the computation of compute-optimal configurations for a given training budget. Their results show that the commonly used design—setting expert size equal to the feed-forward layer width—is generally suboptimal; instead, the optimal number of experts should be determined jointly with model size and data scale through these scaling laws, balancing the benefits of finer expert specialization against the additional routing overhead.

%{\color{red} Huy, can you please add references to papers that discuss the number of experts in the context of LLMs and whatnot? Thank you!!}

In what follows, instead, we adopt the same theoretical perspective used throughout the paper and define model selection relative to the data-generating process, investigating conditions under which \emph{model selection consistency} can be achieved. As in the rest of our analysis, this requires assuming that a true value $K^\star$ exists, and therefore that the data-generating distribution is of SMoGE type. We view this as a natural first step toward understanding model selection in this flexible class of models.

Before presenting the two strategies we consider, we note that both approaches are inherently Bayesian. In particular, both (i) learning $\kappa$ directly from the data and (ii) using the evidence lower bound obtained from variational inference as a measure of model fit do not readily translate to the frequentist procedures commonly used to estimate mixture-of-expert models. This highlights a distinctive advantage of Bayesian methods, where coherent inference can be carried out for any parameter of interest, including parameters that determine the model size, and can therefore be naturally adapted to the task of model selection.

\subsection{Model selection via a random-a-priori number of experts}

Our first approach, similarly to Subsection~\ref{sub:random_k_density}, treats the number of experts $\kappa$ as an unknown parameter and places a prior $\pi_\kappa$ on it. In contrast to the earlier setting, we now allow $\pi_\kappa$ to have potentially full support on $\NN$, so to enable model selection over an unbounded number of components. The parameter space\footnote{Without loss of generality, and as anticipated in Section~\ref{sec:notation}, we now adopt notation in which priors and posteriors are distributions on $\Theta_\infty$ rather than on the space of mixing measures.} on which we place the prior $\Pi$ is therefore
\[
\Theta_\infty:=\bigcup_{k\in\NN} \Theta^k.
\]

An application of Doob's posterior consistency theorem \citep{doob1949application,miller2018detailed}, together with arguments inspired by the analysis of traditional mixtures in \cite{nobile1994bayesian,miller2023consistency}, yields the following result on posterior consistency for the number of experts.

\begin{theorem}\label{thm:consistency_number_experts}
    Let Assumptions \ref{ass:prior_compact_support} and \ref{ass:lipschitz_expert} hold. Moreover, assume the following:
    \begin{enumerate}
        \item The model is identifiable, meaning that $g_{G}=g_{G'}$\footnote{With a slight abuse of notation, for any mixing measure $G\in\bigcup_{j\in\NN}\mathscr G_j$, we denote by $g_G$ both the resulting SMoGE joint density and the corresponding probability distribution. In the statement of the theorem we mean equality of the two probability measures, which implies equality of the densities almost everywhere. See Proposition~\ref{prop:identifiability_up_to_translation} above for a characterization of the scope of this assumption.} implies $G=G'$ for all $G, G'\in\bigcup_{k\in\NN}\mathscr G_k$;
        \item All priors are continuous and factorize across experts.
    \end{enumerate}
    Then there exists a set $\Theta_\star \subseteq \Theta_\infty$ such that $\Pi(\Theta_\star) = 1$ and, for all $\theta_\star\in\Theta_\star$,
    \begin{equation*}
    \lim_{n\to\infty}\Pi(K=K(\theta_\star)\mid (X_i, Y_i)_{i=1}^n) = 1
    \end{equation*}
    a.s.-$g_{G(\theta_\star)}^\infty$, where, for any $\theta \in \Theta_\infty$, $K(\theta)$ and $G(\theta)$ denote respectively the number of experts and the mixing measure associated with $\theta$.
\end{theorem}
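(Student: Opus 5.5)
The plan is to apply Doob's posterior consistency theorem, in the form given by \cite{miller2018detailed}, to the functional $\theta\mapsto K(\theta)$ on the parameter space $\Theta_\infty$. That theorem says the following. Suppose $\Theta_\infty$ is a standard Borel space and $\theta\mapsto g_{G(\theta)}$ is measurable. Suppose also that some measurable function $h$ of the parameter is identifiable, in the sense that $h(\theta)$ can be written as $\phi(g_{G(\theta)})$ for a measurable $\phi$ defined on the space of distributions. Then for $\Pi$-almost every $\theta_\star$, the posterior of $h$ concentrates at $h(\theta_\star)$ almost surely under $g_{G(\theta_\star)}^\infty$. Taking $h(\theta)=K(\theta)$, which is discrete, concentration at a point means exactly $\Pi(K=K(\theta_\star)\mid\cdot)\to1$. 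So the proof reduces to three verifications: the measurable structure, measurability of the likelihood, and identifiability of $K$ as a measurable function of the distribution.

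\textbf{Measurable structure.} $\Theta_\infty=\bigcup_k\Theta^k$ is a countable disjoint union of compact subsets of Euclidean spaces, using Assumption~\ref{ass:prior_compact_support}. It is therefore Polish, hence standard Borel, and $K(\theta)$ is Borel because it is constant on each piece.

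\textbf{Measurability and continuity of the likelihood.} By Assumption~\ref{ass:lipschitz_expert} the expert mean is continuous in $\beta$. The variance is bounded away from zero, and the softmax weights are continuous in $(\alpha_0,\alpha_1)$. Hence $(\theta,y,x)\mapsto g_{G(\theta)}(y,x)$ is jointly continuous on each $\Theta^k\times\RR\times\XX$. This gives that $\theta\mapsto g_{G(\theta)}(A)$ is measurable for every Borel set $A$, which is Miller's transition-kernel condition.

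\textbf{Identifiability of $K$ (the main obstacle).} Hypothesis 1 only identifies the mixing measure $G$ from $g_G$. The difficulty is that the parameter vector $\theta$ is not identified, because of label switching. I would handle this in two steps.

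First, restrict to the set $\Theta_\star^0$ of parameters whose atoms $\omega_j$ are pairwise distinct. On this set $K(\theta)$ equals the number of atoms of $G(\theta)$. Hypothesis 2 (continuous, factorized priors) gives $\Pi(\Theta_\star^0)=1$, since coincidences between atoms lie on lower-dimensional sets of prior measure zero. Without this restriction, two merged atoms would give a measure with fewer support points; this is the point where the argument of \cite{nobile1994bayesian,miller2023consistency} is used.

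Second, I need measurability of the map $P\mapsto K$, where $P$ is the distribution. Define $T(\theta)=G(\theta)$, taking values in the space of finite measures with the weak topology; $T$ is continuous on each piece. The map $G\mapsto g_G$ is injective by Hypothesis 1 and continuous into the space of distributions with the total variation or weak topology. On each $\Theta^k$ intersected with $\Theta_\star^0$, the image is $\sigma$-compact. By the Lusin–Souslin theorem, an injective Borel map on a standard Borel space has a Borel image and a Borel inverse. Composing this inverse with the number-of-atoms map yields a measurable $\phi$ with $\phi(g_{G(\theta)})=K(\theta)$ on $\Theta_\star^0$.

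\textbf{Conclusion.} Miller's version of Doob's theorem then produces a set $\Theta_\star\subseteq\Theta_\star^0$ with $\Pi(\Theta_\star)=1$ on which the stated concentration holds $g_{G(\theta_\star)}^\infty$-almost surely.

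I expect the measurability of the inverse in the second step to be the only delicate point. If the Lusin–Souslin route proves awkward, the fallback is to replace the unlabeled parameter with the space of mixing measures itself. That space is standard Borel, $K$ is measurable on it as the atom count, and Doob's theorem can be applied there directly using Hypothesis 1.
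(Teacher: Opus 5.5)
Your argument is correct, but it is organized differently from the paper's proof. The paper uses Doob's theorem in its \emph{full-parameter} form (Theorem 2.4 of \cite{miller2018detailed}), which requires $\theta\mapsto g_{G(\theta)}$ to be injective. To obtain injectivity, it builds a restricted space $\tilde\Theta_\infty$ of lexicographically ordered, pairwise distinct atoms and pushes the prior forward through the sorting map $T$. It then checks measurability and parameter identifiability on that space and obtains posterior consistency at $T(\theta_\star)$ in the metric $d_{\Theta_\infty}$. Finally it pulls this back to permutation-invariant neighborhoods of $\theta_\star$; since $d_{\Theta_\infty}$-balls of radius $<1$ lie inside $\Theta^{K^\star}$, consistency for $K$ follows.

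You instead apply a \emph{functional} version of Doob's theorem to $h=K$ directly. Label switching disappears because $K$ is permutation invariant. Atom merging is removed by the same null-set restriction the paper uses. The measurable-inverse step, which the paper gets implicitly from the cited theorem, you make explicit via Lusin--Souslin applied to the injective continuous map $G\mapsto g_G$. This map is defined on the Borel set of mixing measures with distinct atoms; use the weak topology on the codomain, not total variation, so that it is Polish.

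Your route avoids the $T$, $B[\rho]$ bookkeeping. However, it yields only consistency of $K$, or of whatever identifiable functional you choose. The paper's route gives consistency of the whole parameter up to relabeling, with the statement about $K$ as a corollary. Your fallback of working on the space of mixing measures is essentially the paper's proof, since $\tilde\Theta_\infty$ is a Borel copy of that space.

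Three small points:
\begin{itemize}
\item Since $K(\theta)=\phi(g_{G(\theta)})$ holds only on $\Theta_\star^0$, you need the version of Doob's theorem that asks for this $\Pi$-a.s. Alternatively, restrict the prior to $\Theta_\star^0$, which leaves the posterior unchanged.
\item Your assumptions do not give continuity of $E(x,\beta)$ in $x$, so ``jointly continuous'' is overstated. Continuity in $\theta$ for each fixed $(x,y)$, together with Tonelli, suffices for the kernel condition.
\item $\Pi(\Theta_\star^0)=1$ follows from atomless, independent expert priors via Fubini, with no appeal to lower-dimensional sets.
\end{itemize}
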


Theorem~\ref{thm:consistency_number_experts} implies that, whenever the data-generating parameter belongs to a set $\Theta_\star$ of full prior measure, the posterior asymptotically recovers the true number of experts. In other words, the Bayesian procedure is consistent for model selection for such data-generating processes. The caveat, which is typical of results based on Doob's consistency theorem \citep{doob1949application}, is that the set $\Theta_\star$ is not characterized explicitly. Rather, it arises from the proof, which relies on martingale almost sure convergence arguments, and as a consequence, verifying whether a specific parameter value belongs to this set is generally impossible. This limitation is the price paid for obtaining a result of such generality and strength in terms of model selection consistency \citep{nobile1994bayesian,miller2023consistency}.

We conclude with a natural corollary of Theorem~\ref{thm:consistency_number_experts}, which shows that, under the same setting with a prior on $\kappa$ having potentially full support on $\NN$, posterior Hellinger contraction also holds for those parameter values in $\Theta_\star$.

\begin{corollary}\label{cor:density_estimation_doobs}
    Let $\theta_\star\in\Theta_\star$, $G^\star = G(\theta_\star)$, and $K^\star=K(\theta_\star)$, with $\Theta_\star$ as defined in Theorem~\ref{thm:consistency_number_experts}. Consider the SMoGE model with a prior on the number of experts having full support on $\NN$. Then, under the assumptions of Theorem~\ref{thm:our_abstract_rate_SMoGE} and Theorem~\ref{thm:consistency_number_experts}, the following holds for any sequence $M_n\to\infty$:
    \begin{equation*}
    \lim_{n\to\infty} \Pi\left(\left\{G\in \bigcup_{j\in\NN}\mathscr G_{j} : \,d_H(g_G, g_{G^\star}) \geq M_n\sqrt{\log n/n}\right\} \mid (X_i, Y_i)_{i=1}^n\right)  = 0.
    \end{equation*}
    in $g_{G^\star}^n$-probability.
\end{corollary}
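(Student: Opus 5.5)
The plan is to split the posterior into the part carried by $\kappa=K^\star$ and the part carried by all other values. For the set $U_n:=\{G : d_H(g_G,g_{G^\star})\geq M_n\sqrt{\log n/n}\}$ we write
\begin{equation*}
\Pi(U_n\mid \mathcal D_n) \leq \Pi(\kappa\neq K^\star\mid \mathcal D_n) + \Pi(U_n\cap \mathscr G_{K^\star}\mid \mathcal D_n),
\end{equation*}
where $\mathcal D_n:=(X_i,Y_i)_{i=1}^n$. Since $\theta_\star\in\Theta_\star$, Theorem~\ref{thm:consistency_number_experts} shows that the first term tends to $0$ almost surely under $g_{G^\star}^\infty$, and therefore also in $g_{G^\star}^n$-probability. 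Everything then reduces to the second term.

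For the second term, we use the fact that the posterior restricted to $\{\kappa=K^\star\}$ and renormalized is exactly the posterior under the prior $\Pi_{K^\star}$. This follows from Bayes' rule \eqref{eq:posterior}, since the likelihood does not depend on how $G$ was drawn given $\kappa$. Hence
\begin{equation*}
\Pi(U_n\cap \mathscr G_{K^\star}\mid \mathcal D_n) = \Pi(\kappa=K^\star\mid\mathcal D_n)\,\Pi_{K^\star}(U_n\mid \mathcal D_n)\leq \Pi_{K^\star}(U_n\mid \mathcal D_n).
\end{equation*}
By hypothesis $\Pi_{K^\star}$ satisfies the assumptions of Theorem~\ref{thm:our_abstract_rate_SMoGE}: it is supported on $\mathscr G_{K^\star}$ and satisfies Assumptions~\ref{ass:prior_compact_support}, \ref{ass:lipschitz_expert} and \ref{ass:prior_condition}. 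That theorem therefore gives $\Pi_{K^\star}(U_n\mid\mathcal D_n)\to0$ in $g_{G^\star}^n$-probability. One small point needs care. If $\pi_\kappa(\{K^\star\})=0$ the conditioning would be ill-defined. However, full support of $\pi_\kappa$ on $\NN$ gives $\pi_\kappa(\{K^\star\})>0$, and $\Theta_\star$ can be taken inside the union of prior-charged $\Theta^k$ anyway.

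The main obstacle is reconciling the two modes of convergence and the parametrizations. Theorem~\ref{thm:consistency_number_experts} is phrased on $\Theta_\infty$ with $G(\theta_\star)$. Theorem~\ref{thm:our_abstract_rate_SMoGE} is phrased on mixing measures, with $G^\star$ normalized as in Assumption~\ref{ass:zero_gating_param} only when that is needed. We must check that Hellinger sets are invariant under the translation of Proposition~\ref{prop:identifiability_up_to_translation}, which holds because $g_G$ is unchanged by \eqref{eq:softmax_translation}. We must also check that the prior-mass condition in Assumption~\ref{ass:prior_condition} is stated relative to this same $G^\star$. Granting this, adding the two vanishing terms, and using that almost sure convergence implies convergence in probability, concludes the proof.
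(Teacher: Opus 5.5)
Your proposal is correct and follows the paper's own argument. You split the posterior over $\{K=K^\star\}$ and its complement, send $\Pi(K\neq K^\star\mid(X_i,Y_i)_{i=1}^n)$ to zero almost surely via Theorem~\ref{thm:consistency_number_experts}, and control the remaining piece with the fixed-$K^\star$ contraction of Theorem~\ref{thm:our_abstract_rate_SMoGE}, through the identity $\Pi(U_n\cap\{\kappa=K^\star\}\mid\mathcal D_n)=\Pi(\kappa=K^\star\mid\mathcal D_n)\,\Pi_{K^\star}(U_n\mid\mathcal D_n)$, which spells out what the paper means by treating that term ``exactly as in the proof'' of that theorem (and your translation-invariance concern is harmless, since $d_H(g_G,g_{G^\star})$ depends on $G$ only through $g_G$).
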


The parameter estimation results of Section~\ref{sec:param_estimation} may also be extended to this setting, but we do not do so explicitly here as such extensions are minimal in light of Corollary~\ref{cor:density_estimation_doobs}.

\subsection{Model selection by variational inference}

While computation is not the primary focus of this paper, fitting Bayesian SMoGE models would typically rely on two main approaches: Markov chain Monte Carlo (MCMC), which constructs a Markov chain whose stationary measure is the target posterior \citep{robert2004monte, gelman2013bayesian}, and Variational Inference (VI), which approximates the posterior by finding the closest distribution within a tractable family \citep{jordan1999introduction, blei2017variational}. As we detail below, VI naturally provides a mechanism for performing model selection as a byproduct of the fitting process.

Specifically, VI proceeds by selecting a \emph{variational family} $\mathcal{Q} = \{q_\mu : \mu \in \mathcal{M}\}$ of distributions over the parameter space, and optimizing the evidence lower bound (ELBO),
$$\mathcal{E}(\mu) := \mathbb{E}_{\theta\sim q_\mu}\left[\log p\left((X_i,Y_i)_{i=1}^n, \theta\right) - \log q_\mu(\theta)\right],$$
with respect to the variational parameters $\mu\in\mathcal{M}\subseteq \mathbb{R}^{d_\mu}$; here, $p\left((X_i,Y_i)_{i=1}^n, \theta\right)$ denotes the joint distribution of the data and parameters under the Bayesian prior-likelihood model. For simpler models and specific choices of $\mathcal Q$, $\mathcal{E}(\mu)$ can often be maximized via closed-form coordinate ascent updates \citep{bishop2006pattern, wainwright2008graphical}. However, for more complex likelihoods like the SMoGE models considered here, optimization typically requires stochastic gradient-based methods. In this setting, the gradient of the ELBO is approximated using Monte Carlo samples from $q_\mu$, often in conjunction with variance reduction techniques \citep{ranganath2014black, kucukelbir2017automatic}.

Returning to the problem of model selection, the ELBO provides a highly practical criterion. Because the ELBO bounds the log marginal likelihood, it serves as a computationally tractable proxy for model evidence and can guide the selection of model complexity, such as the number of experts in a SMoGE architecture. In our framework, we advocate complementing the prior over the number of experts with an ELBO-based heuristic: by fitting several SMoGE models across a plausible range of expert counts, $\kappa = K$, we select the model that maximizes the ELBO as the one exhibiting the highest approximate evidence.

While employing the ELBO for model selection in mixtures of experts is not a new strategy \citep{waterhouse1996bayesian,Ueda2002}, rigorous analyses establishing the frequentist consistency of this procedure have only recently emerged for non-singular models \citep{zhang2024bayesian} and specific mixture specifications \citep{wang2024estimating}. Extending these highly technical consistency results from simple mixtures to our more complex SMoGE formulation is beyond the scope of this paper. Nevertheless, we view this as a highly promising strategy for future theoretical validation, primarily because VI scales significantly better than the transdimensional MCMC algorithms \citep{green1995reversible} that would otherwise be required to fit SMoGE models with a random number of experts. To support this direction, we conclude the section by reporting preliminary empirical evidence validating the ELBO-based model selection procedure.

We conducted a series of stylized synthetic experiments, whose results we describe next. Figure \ref{fig:elbo_selection} illustrates the procedure's performance across two scenarios: a simpler setup with $K^\star=2$ true linear experts in $d=2$ dimensions (Figure \ref{fig:elbo_selection}(a)), and a more complex $K^\star=4$ linear expert architecture in $d=6$ dimensions (Figure \ref{fig:elbo_selection}(b)). For both setups, we evaluate a range of candidate model sizes by maximizing the ELBO using black-box variational inference \citep{ranganath2014black, kucukelbir2017automatic} over 50 independent simulations. For each sample size, we record the proportion of trials in which each candidate $K$ achieved the highest final evidence lower bound. The results provide empirical evidence for consistency: at small sample sizes, the ELBO's inherent complexity penalty conservatively favors fewer experts, but as the sample size increases, the selection procedure overwhelmingly identifies the true number of experts $K^\star$. It is worth noting that the true data-generating mechanisms in these simulations, as described in Appendix~\ref{app:elbo_experiments}, are not strictly SMoGE models, as they employ deterministic expert assignments that introduce sharp discontinuities in the conditional mean structure. Nonetheless, model selection consistency, in this looser sense of identifying the correct number of underlying linear regimes, appears to hold robustly.

\begin{figure}
    \centering
    \begin{subfigure}{0.48\textwidth}
        \centering
        \includegraphics[width=\linewidth]{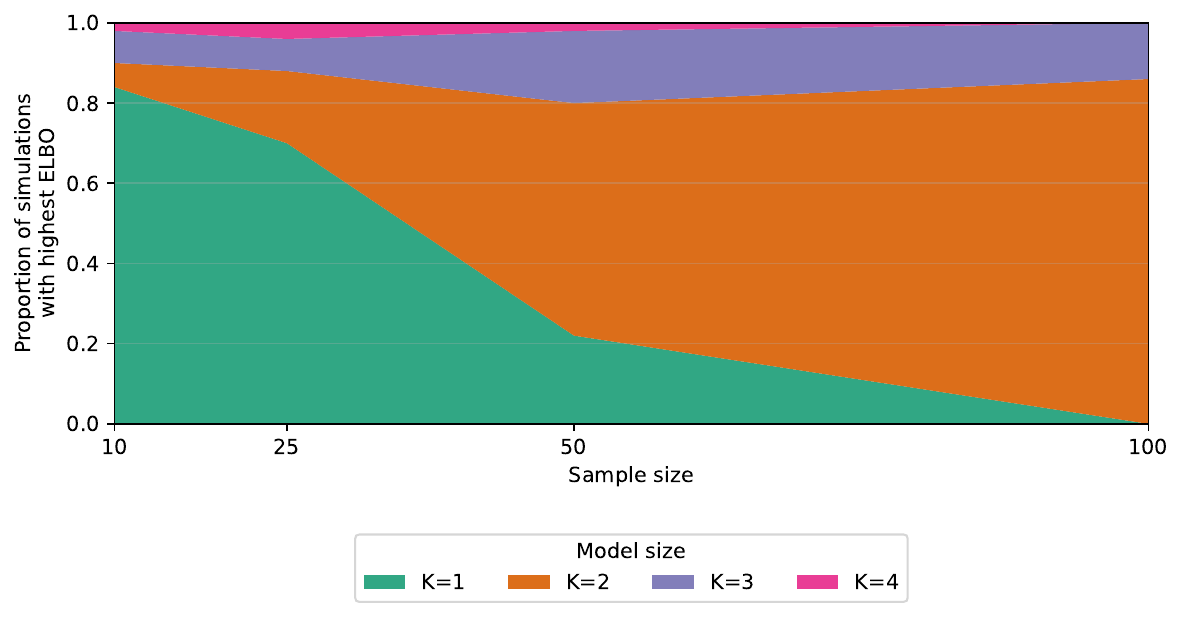}
        \caption{Data-generating process with $K^\star=2$, $d=2$}
        \label{fig:elbo_k2}
    \end{subfigure}
    \hfill
    \begin{subfigure}{0.48\textwidth}
        \centering
        \includegraphics[width=\linewidth]{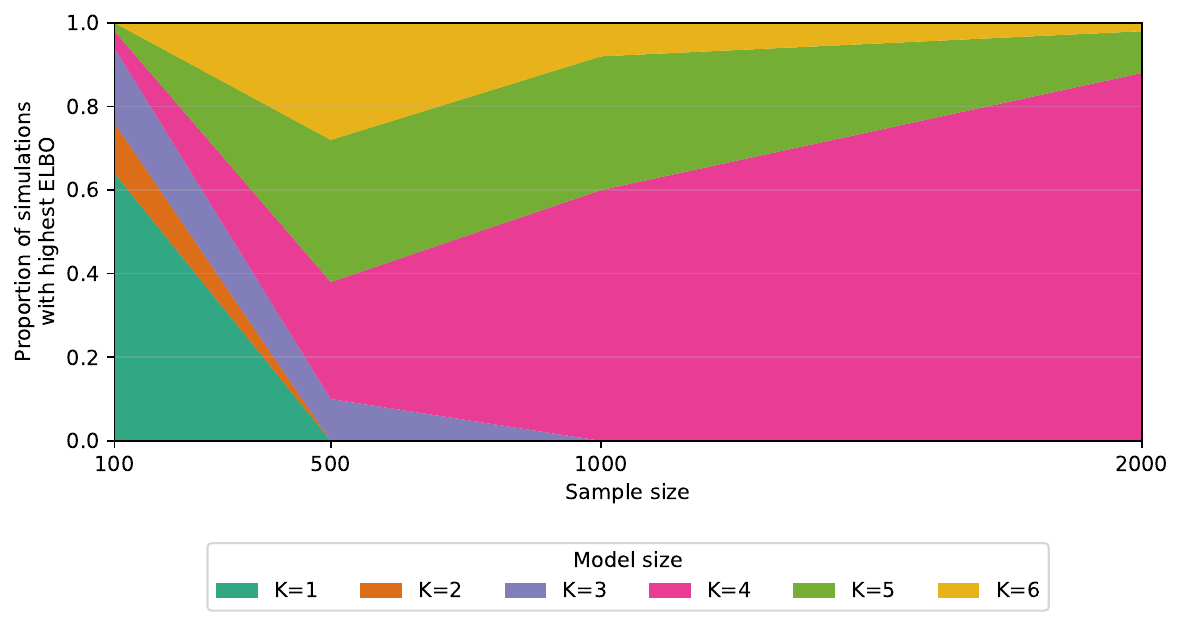}
        \caption{Data-generating process with $K^\star=4$, $d=6$}
        \label{fig:elbo_k4}
    \end{subfigure}
    \caption{Model selection by VI-derived ELBO maximization. As the sample size increases, the procedure selects the correct number of experts with high probability. Win proportions are approximated over 50 independent simulations per sample size.}
    \label{fig:elbo_selection}
\end{figure}

To further validate the robustness of the ELBO-based selection, we also evaluate its performance on datasets generated with a logit-based gating mechanism that closely resembles the softmax structure of true SMoGE models. In this setup, we fix the sample size to $n=500$ and vary the true number of regimes ($K^\star \in \{1, 2, 3\}$), the dimensionality ($d \in \{2, 4\}$), and the ``sharpness'' of the boundary separating the expert regions. Table~\ref{tab:elbo_tables} reports the proportion of 100 independent simulations in which each candidate $K \in \{1, \dots, 7\}$ attained the highest ELBO. The procedure reliably identifies the correct $K^\star$ as the most frequent winner across all configurations. Notably, selection accuracy improves when the gating sharpness is higher (Table~\ref{tab:elbo_softmax_high}), as the clearer separation between expert regimes naturally reduces the posterior overlap and makes the true model complexity easier to distinguish. Furthermore, we observe that increasing the true number of regimes $K^\star$ predictably slows down the convergence of the algorithm, as the variational optimization landscape becomes inherently more complex to navigate.

\begin{table}
    \centering
    \caption{Proportion of simulations in which each candidate $K$ attained the highest ELBO over 100 independent trials. The true data-generating process relies on a max-logit gating structure evaluated at two different separation scales. Bold values indicate the most frequently selected model size.}
    \label{tab:elbo_tables}
    
    \begin{subtable}{\textwidth}
        \centering
        \caption{Low gating sharpness (separation $= 5.0$)}
        \label{tab:elbo_softmax_low}
        \begin{tabular}{ccccccccc}
        \toprule $d$ & $K^\star$ & $K=1$ & $K=2$ & $K=3$ & $K=4$ & $K=5$ & $K=6$ & $K=7$\\
        \midrule
        2 & 1 & \textbf{1.00} & 0.00 & 0.00 & 0.00 & 0.00 & 0.00 & 0.00\\
        2 & 2 & 0.00 & \textbf{0.76} & 0.20 & 0.04 & 0.00 & 0.00 & 0.00\\
        4 & 3 & 0.00 & 0.00 & \textbf{0.73} & 0.20 & 0.06 & 0.01 & 0.00\\
        \bottomrule
        \end{tabular}
    \end{subtable}
    
    \vspace{0.5cm} % Add some vertical space between the two subtables
    
    \begin{subtable}{\textwidth}
        \centering
        \caption{High gating sharpness (separation $= 10.0$)}
        \label{tab:elbo_softmax_high}
        \begin{tabular}{ccccccccc}
        \toprule $d$ & $K^\star$ & $K=1$ & $K=2$ & $K=3$ & $K=4$ & $K=5$ & $K=6$ & $K=7$\\
        \midrule
        2 & 1 & \textbf{1.00} & 0.00 & 0.00 & 0.00 & 0.00 & 0.00 & 0.00\\
        2 & 2 & 0.00 & \textbf{0.84} & 0.14 & 0.02 & 0.00 & 0.00 & 0.00\\
        4 & 3 & 0.00 & 0.00 & \textbf{0.78} & 0.21 & 0.01 & 0.00 & 0.00\\
        \bottomrule
        \end{tabular}
    \end{subtable}
\end{table}

\section{Extension to mixture-of-experts for classification}\label{sec:classification}

Throughout the article, we have focused on SMoGE models for regression tasks and have analyzed their theoretical properties in depth. Another fundamental supervised learning problem, for which Bayesian MoE models may be effectively employed to increase model flexibility, is classification. Although we have not treated this setting explicitly, it can be incorporated into our framework with only minor modifications, as follows.

Suppose that the response variable takes values in a finite set of classes $\{1,\dots,S\}$. In this case, the data may be assumed to be drawn according to the following Softmax Mixture of Multinomial Experts (SMoME) model:
\begin{align*}
    G^\star & := \sum_{j=1}^{K^\star} \exp(\alpha_{0j}^\star) 
    \delta_{(\alpha_{1j}^\star, \beta_{j}^\star)}, \\
    f_{G^\star}(Y = s \mid X) 
    & := \sum_{j=1}^{K^\star} 
    \frac{\exp(\alpha_{0j}^\star + X^\top\alpha_{1j}^\star)}
         {\sum_{\ell=1}^{K^\star}\exp(\alpha_{0\ell}^\star + X^\top\alpha_{1\ell}^\star)}
    \times 
    \frac{\exp\!\big(E_s(X,\beta_j^\star)\big)}
         {\sum_{r=1}^{S}\exp\!\big(E_r(X,\beta_j^\star)\big)},
\end{align*}
for $s=1,\dots,S$, where $E_s(X,\beta)$ denotes the expert function associated with class $s$, parametrized by $\beta$. In this case, $f_{G^\star}$ is a density with respect to the counting measure on $\{1,\dots,S\}$, and the usual Bayesian pipeline involving prior specification and posterior computation can be implemented without difficulty.

While we omit the details for brevity, extending much of our theoretical analysis to this setting presents no particular difficulty. In particular, posterior contraction for density estimation may still be established using the general theory developed by \cite{ghosal2000convergence}, while parameter estimation results based on Voronoi losses can be adapted from the recent contribution of \cite{nguyen2024multinomial}.

\section{Discussion}\label{sec:discussion}

In this paper, we have presented a theoretical study of Bayesian MoE models for Gaussian regression with softmax gating. We focused on three key aspects: density estimation, parameter estimation, and model selection. For density estimation, we derived posterior contraction rates for both fixed and random numbers of experts. For parameter estimation, we established convergence guarantees using tailored Voronoi-type losses, which account for the complex identifiability structure of MoE models in both correctly specified and over-specified settings. Finally, we studied model selection using a random-a-priori number of experts and variational inference.

\subsection{Implications of our work}
Our results have several practical implications for working with MoE models. In particular, we have highlighted the role of Voronoi-type losses in characterizing prior support conditions that ensure Hellinger contraction and in guiding parameter convergence; this provides a useful tool to design priors with good asymptotic behavior and to assess convergence speed as a function of the sample size.

Moreover, regarding model selection, our analysis shows that placing a prior with sufficiently large support on the number of experts can lead to model selection consistency, illustrating the robustness of the Bayesian approach for SMoGE models. At the same time, our parameter estimation results warrant some caution: overly diffuse priors on the number of experts can harm performance, as parameter contraction rates deteriorate when too many redundant experts are included. In light of these considerations, our preliminary empirical results indicate that using the ELBO from VI procedures may offer a practical alternative for model selection. This approach avoids the computational overhead of treating the number of experts as random, while remaining effective provided that one focuses on a reasonable range of candidate values when running VI.

\subsection{Future directions}

Several directions for future work remain. First, while we studied densely activated MoE models—where all experts are active for each data point—modern architectures often use sparsely activated experts \citep{shazeer2017topk, nguyen2024topk} to increase capacity while keeping computation manageable. Extending our theoretical results to sparse MoE models would help provide a solid understanding of these widely used, scalable architectures.

Second, all of our results are based on an inherent well-specification assumption, which may be hard to justify in practice. Given the remarkable practical success of MoE models in capturing complex data-generating processes, it would be interesting to investigate their behavior under misspecification (e.g., in an under-specified setting with fewer components than $K^\star$), and to study whether this class of models offers benefits, in terms of improved approximation capacity, compared to more traditional models like mixtures with input-free gating.

Third, the connections between MoE models and Bayesian nonparametric models with covariate dependence are still not fully explored. It would be interesting to see if the tools we developed here for finite mixtures can be applied to more complex, infinite-dimensional settings.

Finally, while we focused on traditional Bayesian inference, recent approaches to uncertainty quantification—sometimes called ``post-Bayesian'' or ``generalized Bayesian'' methods \citep{bissiri2016general, knoblauch2022optimization, fong2023martingale, fortini2024predictive}—offer alternative ways to perform inference using loss functions and predictive rules rather than the standard prior-likelihood framework. These methods have been applied successfully to mixture models \citep{rodriguez2024martingale} and deep learning pipelines \citep{lee2023martingale, wu2024posterior, ng2025tabmgp, bariletto2026scalable}, which are closely connected to modern MoE architectures. Extending such approaches to MoE models could provide scalable and robust alternatives to conventional posterior inference.

\clearpage
\begin{center}

{\bf{\LARGE{Appendices}}}
\end{center}

\appendix
\section{Proofs}
\label{app:theorem}

% Note: in this sample, the section number is hard-coded in. Following
% proper LaTeX conventions, it should properly be coded as a reference:

In this appendix, we report the proofs of the theoretical results presented in the main body of the article.

\subsection{Proof of Theorem~\ref{thm:consistency}}\label{proof:consistency}

As for the first statement, the assumption it requires is equivalent to the classical KL support condition by \cite{schwartz1965bayes}. Moreover, one easily checks that, if a sequence of SMoGE models $g_{G_1}, g_{G_2},\ldots$ converges weakly to $g_{G^\star}$, it must converge in Hellinger distance as well. In the terminology of \cite{bariletto2025posterior}, the model is sequentially identifiable and consistency immediately follows \citep[see also][]{bariletto2025necessary,walker2005data}.

As for the second statement, if Assumptions~\ref{ass:prior_compact_support} and \ref{ass:lipschitz_expert} also hold, then the upcoming proof of Theorem~\ref{thm:density_estim_finite_prior} shows that any set of the form $\{G\in\mathscr G_{K^\star} : \kl(g_{G^\star},g_G)<\delta\}$ always includes a set of the form $\{G\in\mathscr G_{K^\star} : \mathcal L_1(G^\star,G)<t\}$ for some $t>0$, as long as $\delta>0$ is small enough. Therefore, by the assumption stated in the theorem, we have
\begin{equation*}
    \Pi(\{G\in\mathscr G_{K^\star} : \kl(g_{G^\star},g_G)<\delta\})\geq \Pi(\{G\in\mathscr G_{K^\star} : \mathcal L_1(G^\star,G)<t\})>0.
\end{equation*}
The arguments presented in the last paragraph conclude the proof of the second statement in the theorem.

\subsection{Proof of Theorem~\ref{thm:our_abstract_rate_SMoGE}}\label{proof:our_abstract_rate_SMoGE}

Clearly, the statement of Theorem~\ref{thm:our_abstract_rate_SMoGE} is a particular case of the statement of Theorem~\ref{thm:density_estim_finite_prior} (setting $\pi_\kappa(\{K^\star\})=1$), so we refer the reader to the next proof for a more general treatment.

\subsection{Proof of Theorem~\ref{thm:density_estim_finite_prior}}

% %
% Using the notation from the proof of Theorem~\ref{thm:our_abstract_rate_SMoGE}, $\mathscr G_k$ can be covered with $N_k = \mathcal O(\varepsilon)$ $\mathscr L^1$-balls of radius $\varepsilon>0$ for all $k\in\{1,\ldots, K\}$, and therefore $\mathscr O_K$ can be covered with $\sum_{k=1}^K N_k = \mathcal O(\varepsilon)$ such balls as well. Therefore, Condition \ref{cond_1} is satisfied. Moreover, as for Condition \ref{cond_3}, it is enough to rewrite
% %
% \begin{align*}
%     \Pi\left(\left\{ 
%     G\in\mathscr O_K : \kl(g_{G^\star},g_G) \leq \varepsilon_n^2 \right\}\right)&= \sum_{k=1}^K \Pi_k\left(\left\{ 
%     G\in\mathscr G_k : \kl(g_{G^\star}, g_G)  \leq \varepsilon_n^2 \right\}\right) \cdot \pi_\kappa(\{k\}) \\
%     & \geq \Pi_{k^\star}\left(\left\{ 
%     G\in\mathscr G_{k^\star} : \kl(g_{G^\star},g_G)  \leq \varepsilon_n^2 \right\}\right) \cdot \pi_\kappa(\{k^\star\}).
% \end{align*}
% %
% So the same line of proof as for Theorem~\ref{thm:our_abstract_rate_SMoGE} shows that it is enough to impose Assumption~\ref{ass:prior_condition} on $\Pi_{k^\star}$.

We rely on the following result, which is an adaptation of Theorem 8.11 in \cite{ghosal2017fundamentals}.

\begin{theorem}[\cite{ghosal2017fundamentals}]\label{thm:abstract_rate}

Let $(\varepsilon_n)_{n\in\NN}$ and $(\bar\varepsilon_n)_{n\in\NN}$ be such that $\varepsilon_n,\bar\varepsilon_n\geq n^{-1/2}$ for all large $n\in\NN$. Moreover assume that, for all large $j\in\NN$ and some sequence of sets $\mathscr O^{n}\subseteq\mathscr O_{K}$, the following conditions are satisfied:
\begin{enumerate}
    \item $\sup_{\varepsilon \ge\varepsilon_n}\log N(\varepsilon/2, \{G\in\mathscr O^n : d_H(g_G, g_{G^\star})\leq 2\varepsilon\}, d_H) \leq n\varepsilon_n^2$;\label{cond_1}
    \item $\frac{\Pi(\mathscr O_{K}\setminus \mathscr O^n)}{\Pi(B(G^\star, \bar\varepsilon_n))} = o\big(e^{-D_n n\bar\varepsilon_n^2}\big)$ for some sequence $D_n\to\infty$;\label{cond_2}
    \item $\frac{\Pi(\{G\in\mathscr O^n : \,j\varepsilon_n < d_H(g_G, g_{G^\star})\leq 2j\varepsilon_n\})}{\Pi(\left\{ G\in \mathscr O_K : \,\kl(g_{G^\star}, g_{G})\leq \varepsilon_n^2\right\})}\leq e^{n\varepsilon_n^2 j^2/16}$.\label{cond_3}
\end{enumerate}
Then, for any positive sequence $M_n\to\infty$, we have
\begin{equation*}
    \lim_{n\to\infty} \Pi\big(\{G\in\mathscr O_{K} : \,d_H(g_G, g_{G^\star})\geq M_n\varepsilon_n\} \mid (X_i, Y_i)_{i=1}^n\big)  = 0.
\end{equation*}
in $g_{G^\star}^n$-probability.
\end{theorem}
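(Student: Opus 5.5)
The plan follows the standard testing-plus-evidence-lower-bound argument behind Theorem~8.11 of \cite{ghosal2017fundamentals}. Throughout, $\mathbb P_n$ denotes the empirical likelihood ratio $R_n(G):=\prod_{i=1}^n g_G(Y_i,X_i)/g_{G^\star}(Y_i,X_i)$, so that the posterior mass of any set $\mathcal A$ equals $\int_{\mathcal A}R_n\,\mathrm d\Pi\big/\int R_n\,\mathrm d\Pi$. I will bound the numerator and denominator separately.

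\emph{Step 1: lower bound on the denominator.} For any prior neighborhood of the form $B(G^\star,\varepsilon)$, restrict the integral to it and normalize $\Pi$ there. Jensen's inequality gives $\log\int_B R_n\,\mathrm d\Pi/\Pi(B)\geq \int_B \log R_n\,\mathrm d\Pi/\Pi(B)$. The right-hand side has mean $-n\int_B \kl\,\mathrm d\Pi/\Pi(B)\geq -n\varepsilon^2$.

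With only a KL (and no second-moment) neighborhood available, I would control the fluctuation of this average log-likelihood ratio using the positive-part trick of Lemma~6.26 in \cite{ghosal2017fundamentals}. This yields, for every fixed $C>0$,
\[
g_{G^\star}^n\Big(\textstyle\int R_n\,\mathrm d\Pi\leq \Pi(B(G^\star,\varepsilon))e^{-(1+C)n\varepsilon^2}\Big)\to 0,
\]
provided $n\varepsilon^2$ stays bounded below. That proviso holds because $\varepsilon_n,\bar\varepsilon_n\geq n^{-1/2}$.

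This bound is applied twice. It is used with $\bar\varepsilon_n$ for Condition~\ref{cond_2}. It is used with $\varepsilon_n$ for Condition~\ref{cond_3}, noting that the set $\{\kl\le\varepsilon_n^2\}$ there is precisely the relevant KL ball.

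\emph{Step 2: the sieve complement.} By Fubini, $\EE\int_{\mathscr O_K\setminus\mathscr O^n}R_n\,\mathrm d\Pi\leq \Pi(\mathscr O_K\setminus\mathscr O^n)$. On the event of Step~1 with $\bar\varepsilon_n$, the posterior mass of the complement is therefore bounded, via Markov's inequality, by
\[
\frac{\Pi(\mathscr O_K\setminus\mathscr O^n)}{\Pi(B(G^\star,\bar\varepsilon_n))}\,e^{(1+C)n\bar\varepsilon_n^2}=o\big(e^{-(D_n-1-C)n\bar\varepsilon_n^2}\big)\to0,
\]
using Condition~\ref{cond_2} and $D_n\to\infty$.

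\emph{Step 3: tests on Hellinger shells.} Cover the shell $S_j:=\{G\in\mathscr O^n: j\varepsilon_n<d_H\le 2j\varepsilon_n\}$ by Hellinger balls of radius $j\varepsilon_n/2$. By Condition~\ref{cond_1} applied with $\varepsilon=j\varepsilon_n$, at most $e^{n\varepsilon_n^2}$ balls are needed. For each ball centre I use the Le~Cam--Birgé likelihood-ratio test for convex Hellinger balls, which has both error probabilities at most $e^{-c n j^2\varepsilon_n^2}$ with $c=1/8$ or better. Taking the maximum over the centres gives a test $\phi_{n,j}$ with
\[
\EE\phi_{n,j}\le e^{n\varepsilon_n^2-nj^2\varepsilon_n^2/8},\qquad \sup_{G\in S_j}\EE_G(1-\phi_{n,j})\le e^{-nj^2\varepsilon_n^2/8}.
\]
Setting $\phi_n:=\sup_{j\ge M_n}\phi_{n,j}$ and summing over $j$, its type-I error tends to zero.

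\emph{Step 4: combining the pieces.} I bound $\EE\big[\Pi(S_j\mid\cdot)(1-\phi_n)\big]$ on the event of Step~1, this time with $\varepsilon_n$ and a small constant $C$. By Fubini and the type-II bound of Step~3, this quantity is at most
\[
\frac{\Pi(S_j)}{\Pi(\kl\le\varepsilon_n^2)}\,e^{(1+C)n\varepsilon_n^2}e^{-nj^2\varepsilon_n^2/8}\le e^{(1+C)n\varepsilon_n^2-nj^2\varepsilon_n^2/16},
\]
where the last inequality uses Condition~\ref{cond_3}. This is summable over $j\geq M_n$, and the sum tends to zero because $n\varepsilon_n^2\geq1$ and $M_n\to\infty$. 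Adding the sieve term from Step~2 and the test term from Step~3 gives the claim.

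\emph{Main obstacle.} I expect the delicate point to be Step~1 with only a KL neighborhood. The bound is only in probability, not exponential, and the constants must be arranged so that the factor $e^{(1+C)n\varepsilon_n^2}$ is absorbed by $e^{-nj^2\varepsilon_n^2/16}$ for every $j\geq M_n$, and by $D_n$ in Step~2. If the KL-only lemma turns out to be too weak, I would instead shrink the KL ball by a constant factor, or pass to a Kullback--Leibler neighborhood with a bounded second moment. The latter is available here because Assumptions~\ref{ass:prior_compact_support} and \ref{ass:lipschitz_expert} make the log-likelihood ratios uniformly square-integrable.
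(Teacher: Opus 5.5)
The overall architecture is right: evidence lower bound, the sieve complement via Fubini, Le~Cam--Birg\'e tests on Hellinger shells, then Fubini on the shells. That is the standard proof of Theorem~8.11 in \cite{ghosal2017fundamentals}, which is all the paper invokes, with $\xi=1/2$ and $K=1/8$ for $d_H$.

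\textbf{The gap is in Step~1.} With KL-only balls, your claim is false: it is not true that for every \emph{fixed} $C>0$ the event $\{\int R_n\,\mathrm d\Pi\le \Pi(B(G^\star,\varepsilon_n))e^{-(1+C)n\varepsilon_n^2}\}$ has vanishing probability. No positive-part trick can give it, because your argument uses only that the normalized restriction $\bar\Pi$ is a probability measure on the KL ball. A point mass already breaks it:
\begin{itemize}
\item Take $\bar\Pi=\delta_p$, with $p=p_0e^{-L}$ on a set of $p_0$-mass $1/n$ and renormalized elsewhere.
\item Then $n\,\kl(p_0,p)\approx L-1$, while $\log R_n=-LN_n+O(1)$ with $N_n\sim\mathrm{Bin}(n,1/n)$.
\item For $C<1$ and $L$ large, the bound fails whenever $N_n\ge 2$, an event whose probability tends to $1-2/e$. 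For larger fixed $C$ it fails whenever $N_n$ exceeds a fixed integer.
\end{itemize}

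The positive-part idea does give something, but weaker. Set $W:=\int\log(g_{G^\star}/g_G)\,\mathrm d\bar\Pi$. Then $\EE e^{-W}\le 1$ by Jensen and $\EE W\le\varepsilon^2$, so $\EE[e^{-W}-1+W]\le\varepsilon^2$ and hence $\EE[W^2\wedge|W|]\lesssim\varepsilon^2$. Truncate at $|W|=1$ and apply Markov to the large part and Chebyshev to the small part. For the bound $\Pi(B)e^{-Dn\varepsilon^2}$ this gives failure probability $\lesssim 1/D+1/(D^2n\varepsilon^2)$. That is $O(1/D)$, not $o(1)$, so the multiplier must diverge.

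So the problem is not how the constants are arranged: you need $D\to\infty$. This is why Condition~2 carries $D_n\to\infty$; the fixed-constant form $o(e^{-2n\bar\varepsilon_n^2})$ belongs to the version with second-moment neighbourhoods. The repair is straightforward:
\begin{itemize}
\item In Step~2, use the bound with $D=D_n/2$. The complement term becomes $o(e^{-D_nn\bar\varepsilon_n^2/2})=o(1)$.
\item In Step~4, use an auxiliary $D_n'\to\infty$ with $D_n'\le M_n^2/32$. Then each shell $j\ge M_n$ contributes at most $e^{D_n'n\varepsilon_n^2-nj^2\varepsilon_n^2/16}\le e^{-nj^2\varepsilon_n^2/32}$, and the sum is still $o(1)$ since $n\varepsilon_n^2\ge1$.
\end{itemize}

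Your fallbacks do not close the gap. Shrinking the KL ball by a constant factor changes nothing. Passing to second-moment neighbourhoods via Assumptions~\ref{ass:prior_compact_support} and~\ref{ass:lipschitz_expert} proves a different statement: the theorem's conditions are phrased with KL-only balls and assume nothing about the model.

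A minor point: for the Le~Cam--Birg\'e test in Step~3, each centre must lie in the shell, so the cover from Condition~1 should be taken with centres in the set.
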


Preliminarily, we note that the statement of Theorem 8.11 in \cite{ghosal2017fundamentals} is based on distances more general than $d_H$, as long as they satisfy certain testing conditions for some parameters $\xi$ and $K$ (in the notation of the book). Nevertheless, on page 197, the authors show that for the Hellinger metric these conditions are satisfied for $\xi=1/2$ and $K=1/8$, which we directly incorporate in the above version of the result.

In what follows, we show that Conditions \ref{cond_1}, \ref{cond_2}, and \ref{cond_3} in Theorem \ref{thm:abstract_rate} are satisfied by our SMoGE model under Assumptions \ref{ass:prior_compact_support}, \ref{ass:lipschitz_expert} and \ref{ass:prior_condition} and the following choices:
\begin{itemize}
    \item $\varepsilon_n=\bar\varepsilon_n=M\sqrt{\log n/n}$ for some large enough $M>0$
    \item $\mathscr O^n\equiv\mathscr O_K$ for all $n\in\NN$
\end{itemize}

\paragraph{Condition \ref{cond_1}.}

We begin by noting that
\begin{align*}
    \sup_{\varepsilon \ge\varepsilon_n}\log N(\varepsilon/2, \{G\in\mathscr O^n : d_H(g_G, g_{G^\star})\leq 2\varepsilon\}, d_H) & \leq \sup_{\varepsilon \ge\varepsilon_n}\log N(\varepsilon/2, \mathscr O_K, d_H) \\
    & = \log N(\varepsilon_n/2, \mathscr O_K, d_H),
\end{align*}
so it is enough to show
\begin{equation*}
    \log N(\varepsilon_n/2, \mathscr G_K, d_H)\leq n\varepsilon_n^2.
\end{equation*}
Moreover, it is easily checked that, by choosing $M$ large enough, the above condition is implied by
\begin{equation*}
    \log N(\varepsilon, \mathscr O_K, d_H)\lesssim \log(1/\varepsilon) \quad \textnormal{for all } \varepsilon>0.
\end{equation*}
Finally, a further simplification comes from noting that $d_H^2(f,f')\leq\| f-f'\|_{\mathscr L^1}$ for any two density functions $f,f'$, where $\|\cdot\|_{\mathscr L^1}$ denotes the $\mathscr L^1$ norm on the space of integrable functions. In particular, this implies that we are left to show
\begin{equation*}
    \log N(\varepsilon, \mathscr O_K, \| \cdot\|_{\mathscr L^1})\lesssim \log(1/\varepsilon).
\end{equation*}

We now proceed as follows. We will explicitly construct an $\varepsilon$-cover of $\mathscr G_K$ in $\| \cdot\|_{\mathscr L^1}$, whose cardinality $N_K$ we show to be $\mathcal O(\varepsilon^{-1})$. Because this $O(\varepsilon^{-1})$ result does not depend on the specific choice of $K\in\NN$, it holds for all $k\in\{1,\ldots, K\}$. Therefore, we will have shown that $\mathscr O_K=\bigcup_{k=1}^K\mathscr G_k$ itself has an $\varepsilon$-cover in $\| \cdot\|_{\mathscr L^1}$ of cardinality $\sum_{k=1}^K N_k = \mathcal O(\varepsilon^{-1})$. Therefore, Condition \ref{cond_1} will be satisfied. 

So let
\begin{align*}
    \Delta & := \{(\alpha_0,\alpha_1) \in \RR^{d+1} : \exists (\beta, \sigma^2)\in\RR^{p+1} \textnormal{ s.t. } (\alpha_0,\alpha_1, \beta, \sigma^2) \in \Theta\}, \\
    \Gamma & := \{(\beta, \sigma^2)\in\RR^{p+1} : \exists (\alpha_0,\alpha_1) \in \RR^{d+1} \textnormal{ s.t. } (\alpha_0,\alpha_1, \beta, \sigma^2) \in \Theta\}
\end{align*}
be the projections of $\Theta$ onto the subspaces of gate parameters and expert parameters, respectively. Because $\Theta$ is compact in the Euclidean norm $\|\cdot\|$, so are $\Delta$ and $\Omega$. This implies that one can cover them with at most $C/\varepsilon^{d+1}$ and $C'/\varepsilon^{p+1}$ Euclidean balls of radius $\varepsilon>0$, where $C,C'>0$ are scaling constants \citep{wainwright2019high}. Denote by $O_\Delta$ and $O_\Gamma$ the sets of centers of these $\varepsilon$-covers and, for each $G\in\mathscr G_K$ such that
\begin{equation*}
    G = \sum_{j=1}^K \exp(\alpha_{0j})\delta_{(\alpha_{1j}, \beta_{j}, \sigma^2_j)} \quad \textnormal{ for some } (\alpha_{0j},\alpha_{1j}, \beta_{j}, \sigma^2_j)_{j=1}^K \in\Theta^K,
\end{equation*}
define $\bar G\in\mathscr G_K$ as
\begin{equation*}
    G_1 := \sum_{j=1}^K \exp(\bar\alpha_{0j})\delta_{(\bar\alpha_{1j}, \beta_{j}, \sigma^2_j)},
\end{equation*}
where
\begin{equation*}
     (\bar\alpha_{0j}, \bar\alpha_{1j}) \in \argmin_{(a_0, a_1)\in O_\Delta} \| (a_1, a_2) - (\alpha_{0j}, \alpha_{1j})\|.
\end{equation*}
for all $j=1,\dots,K$. In turn, for the same $G\in\mathscr G_K$, define
\begin{equation*}
    G_2 := \sum_{j=1}^K \exp(\bar\alpha_{0j})\delta_{(\bar\alpha_{1j}, \bar\beta_{j}, \bar\sigma^2_j)},
\end{equation*}
where
\begin{equation*}
    (\bar\beta_{j}, \bar\sigma_j^2) \in \argmin_{(b, s^2)\in O_\Gamma} \| (b, s^2) - (\beta_{j}, \sigma^2_j)\|.
\end{equation*}
for all $j=1,\dots,K$. Intuitively, $G_1$ replaces the gate parameter vectors with the closest ones among the centers $O_\Delta$, and $G_2$ further replaces the expert parameter vectors with the closest ones among the centers $O_\Gamma$.

Now denote
\begin{align*}
    \textnormal{SoftMax} : \mathbb R^K & \to [0,1]^K,\\
    z & \mapsto \left(\frac{\exp(z_1)}{\sum_{\ell=1}^K\exp(z_\ell)}, \cdots,\frac{\exp(z_K)}{\sum_{\ell=1}^K\exp(z_\ell)} \right) 
\end{align*}
which is a 1-Lipschitz continuous function in the Euclidean norm \citep{gao2017softmax}, meaning that
\begin{equation*}
    \|\textnormal{SoftMax}(z) - \textnormal{SoftMax}(z')\| \leq \| z-z'\|
\end{equation*}
for all $z,z'\in\RR^K$. From this and the boundedness of $\XX$, we deduce
\begin{align*}
    &\| g_{G} - g_{G_1}\|_{\mathscr L^1}  = \int_\XX \int_\RR \vert f_G (Y\mid X) - f_{G_1}(Y\mid X) \vert\mathrm dY \, p(X)\mathrm dX \\
    & \leq \int_\XX \sum_{j=1}^K \left\vert \frac{\exp(\alpha_{0j} + X_i^\top\alpha_{1j})}{\sum_{\ell=1}^{K}\exp(\alpha_{0\ell} + X^\top\alpha_{1\ell} )} - \frac{\exp(\bar\alpha_{0j} + X^\top\bar\alpha_{1j})}{\sum_{\ell=1}^{K}\exp(\bar\alpha_{0\ell} + X^\top\bar\alpha_{1\ell})} \right\vert p(X) \mathrm dX \\
    & = \int_\XX \left \vert \left\vert\textnormal{SoftMax}\left((\alpha_{0j} + X^\top\alpha_{1j})_{j=1}^K\right) - \textnormal{SoftMax}\left((\bar\alpha_{0j} + X^\top\bar\alpha_{1j})_{j=1}^K\right)\right\vert^\top \boldsymbol{1}_K \right\vert p(X) \mathrm dX \\
    & \leq \sqrt K \int_\XX \left \| \textnormal{SoftMax}\left((\alpha_{0j} + X^\top\alpha_{1j})_{j=1}^K\right) - \textnormal{SoftMax}\left((\bar\alpha_{0j} + X^\top\bar\alpha_{1j})_{j=1}^K\right) \right\| \, p(X) \mathrm dX \\
    & \leq \sqrt K \int_\XX \left \| (\alpha_{0j} + X^\top\alpha_{1j})_{j=1}^K - (\bar\alpha_{0j} + X^\top\bar\alpha_{1j})_{j=1}^K \right\| \, p(X) \mathrm dX \\
    & \leq \sqrt K \int_\XX \left \| (\alpha_{0j} -\bar\alpha_{0j})_{j=1}^K\right\| + \left\| (X^\top(\alpha_{1j}-\bar\alpha_{1j}))_{j=1}^K \right\| \, p(X) \mathrm dX \\
    & =\sqrt{K}\left[ \int_\XX \sqrt{\sum_{j=1}^K \left\vert \alpha_{0j}-\bar\alpha_{0j}\right\vert^2}\, p(X)\mathrm dX + \int_\XX \sqrt{\sum_{j=1}^K \left\vert X^\top(\alpha_{1j}-\bar\alpha_{1j})\right\vert^2}\, p(X)\mathrm dX \right]\\
    &\lesssim \varepsilon.
\end{align*}
Moreover
\begin{align}\label{eq:L1dist_G1_G2}
    \| & g_{G_1} - g_{G_2}\|_{\mathscr L^1} \\
    & \leq \int_\XX \sum_{j=1}^K\frac{\exp(\bar\alpha_{0j} + X^\top\bar\alpha_{1j})}{\sum_{\ell=1}^{K}\exp(\bar\alpha_{0\ell} + X^\top\bar\alpha_{1\ell})} \|\mathcal{N}(\cdot\mid E(X,\beta_{j}), \sigma_j^2) - \mathcal{N}(\cdot\mid E(X,\bar\beta_{j}), \bar\sigma_j^2)\|_{\mathscr L^1} \,p(X)\mathrm dX.\nonumber
\end{align}
For all $j=1,\dots,K$ and $X\in\XX$,
\begin{align*}
    \|\mathcal{N}(\cdot \mid E(X,\beta_j), \sigma_j^2) & - \mathcal{N}(\cdot\mid E(X,\bar\beta_{j}), \bar\sigma_j^2)\|_{\mathscr L^1} \\
    &\leq \|\mathcal{N}(\cdot\mid E(X,\beta_{j}), \sigma_j^2) - \mathcal{N}(\cdot\mid E(X,\bar\beta_{j}), \sigma_j^2)\|_{\mathscr L^1} \\
    & + \|\mathcal{N}(\cdot \mid E(X,\bar\beta_{j}), \sigma_j^2) - \mathcal{N}(\cdot\mid E(X,\bar\beta_{j}), \bar\sigma_j^2)\|_{\mathscr L^1}.
\end{align*}
Using a first-order Taylor expansion on the mean of the normal density around $E(X,\bar\beta_{j})$ and invoking the boundedness of $\XX$ and the Lipschitz property of $E(X,\cdot)$, we get
\begin{align*}
    \|\mathcal{N}(\cdot\mid &E(X,\beta_{j}), \sigma_j^2) - \mathcal{N}(\cdot\mid E(X, \bar\beta_{j}), \sigma_j^2)\|_{\mathscr L^1}\\
    & \leq \left\vert E(X,\beta_{j})-E(X,\bar\beta_{j}) \right\vert\int_\RR\left\vert\frac{\partial}{\partial \mu}\mathcal{N}(\cdot\mid \mu, \sigma_j^2)\Big\vert_{\mu = \mu_{\star}}\right\vert\mathrm dY \\
    & \lesssim \varepsilon.
\end{align*}
In a similar fashion, we obtain
\begin{equation*}
    \|\mathcal{N}(\cdot \mid E(X,\bar\beta_{j}), \sigma_j^2) - \mathcal{N}(\cdot\mid E(X,\bar\beta_{j}), \bar\sigma_j^2)\|_{\mathscr L^1} \lesssim \varepsilon
\end{equation*}
and therefore, by \Eqref{eq:L1dist_G1_G2}, $\| g_{G_1} - g_{G_2}\|_{\mathscr L^1}\lesssim\varepsilon$. Thus, for all $G$, we have shown the existence of a measure $G_2\in\mathscr G_K$ such that
\begin{equation*}
    \| g_{G} - g_{G_2}\|_{\mathscr L^1} \leq \| g_{G} - g_{G_1}\|_{\mathscr L^1} +\| g_{G_1} - g_{G_2}\|_{\mathscr L^1} \lesssim \varepsilon,
\end{equation*}
with $G_2$ belonging to the set
\begin{equation*}
    \Xi := \left\{\sum_{j=1}^K\exp(a_{0j})\delta_{(a_{1j}, b_{j}, s^2_j)} \in \mathscr G_K :\, (a_{0j}, a_{1j}) \in O_\Delta,\, (b_{j}, s^2_j) \in O_\Gamma\right\}.
\end{equation*}
Therefore, for some large enough $C_1>0$, $\Xi$ yields a $C_1\varepsilon$-cover of $\mathscr G_K$ with respect to $\|\cdot\|_{\mathscr L^1}$. Moreover, recalling that the cardinalities of $O_\Delta$ and $O_\Gamma$ are upper-bounded respectively by $C/\varepsilon^{d+1}$ and $C'/\varepsilon^{p+1}$, the cardinality of $\Xi$ is at most $C_2 K /\varepsilon^{d+p+2}$ for some constant $C_2>0$. This yields the desired result
\begin{equation*}
    \log N(\varepsilon, \mathscr G_K, \|\cdot\|_{\mathscr L^1}) \lesssim \log(1/\varepsilon).
\end{equation*}

\paragraph{Condition \ref{cond_2}.} Because $\mathscr O_K = \mathscr O^n$ for all $n\in\NN$, Condition \ref{cond_2} is trivially satisfied.

\paragraph{Condition \ref{cond_3}.}
% Moreover, as for Condition \ref{cond_3}, it is enough to rewrite
% %
% \begin{align*}
%     \Pi\left(\left\{ 
%     G\in\mathscr O_K : \kl(g_{G^\star},g_G) \leq \varepsilon_n^2 \right\}\right)&= \sum_{k=1}^K \Pi_k\left(\left\{ 
%     G\in\mathscr G_k : \kl(g_{G^\star}, g_G)  \leq \varepsilon_n^2 \right\}\right) \cdot \pi_\kappa(\{k\}) \\
%     & \geq \Pi_{k^\star}\left(\left\{ 
%     G\in\mathscr G_{k^\star} : \kl(g_{G^\star},g_G)  \leq \varepsilon_n^2 \right\}\right) \cdot \pi_\kappa(\{k^\star\}).
% \end{align*}
% %
% So the same line of proof as for Theorem~\ref{thm:our_abstract_rate_SMoGE} shows that it is enough to impose Assumption~\ref{ass:prior_condition} on $\Pi_{k^\star}$.
We begin by noting that Condition \ref{cond_3} is implied by
\begin{equation*}
    \Pi(\left\{ G\in \mathscr O_K : \,\kl(g_{G^\star}, g_{G})\leq \varepsilon_n^2\right\})\geq e^{-n\varepsilon_n^2/16},
\end{equation*}
and moreover that
\begin{equation*}
    \Pi(\left\{ G\in \mathscr O_K : \,\kl(g_{G^\star}, g_{G})\leq \varepsilon_n^2\right\}) \geq \Pi(\left\{ G\in \mathscr G_{K^\star} : \,\kl(g_{G^\star}, g_{G})\leq \varepsilon_n^2\right\}).
\end{equation*}
Therefore, we focus on proving that
\begin{equation*}
    \Pi(\left\{ G\in \mathscr G_{K^\star} : \,\kl(g_{G^\star}, g_{G})\leq \varepsilon_n^2\right\})\geq e^{-n\varepsilon_n^2/16}
\end{equation*}
Notice that, for any $G\in\mathscr G_{K^\star}$, we have
\begin{equation*}
    \kl(g_{G^\star}, g_G) \equiv \int_{\RR^d} \kl(f_{G^\star}(\cdot\mid X), f_G(\cdot\mid X)) \, p(X)\mathrm dX,
\end{equation*}
where, for each fixed $X$, $f_{G^\star}(\cdot\mid X)$ and $f_G(\cdot\mid X)$ are Gaussian mixture densities with $K^\star$ components. The means and variances lie in compact sets (and the variances are bounded away from 0, so say they belong to $[v, V]$ with $0<v<V<\infty$). We now aim to show that there exists $\delta\in(0,1]$ (depending on the parameter space and $G^\star$, but not on $G$ nor $X$) such that
\begin{equation*}
    \int_{-\infty}^{+\infty} f_{G^\star}(Y\mid X) \left(\frac{f_{G^\star}(Y\mid X)}{f_{G}(Y\mid X)}\right)^\delta\mathrm dY <\infty
\end{equation*}
Because of the compactness of the space of means and the Gaussianity of each component, there exists $Y_1>0$ such that, for all $Y>Y_1$, we have $\sup_{X\in\XX}f_{G^\star}(Y\mid X)\lesssim e^{-\frac{Y^2}{2V}}$. Similarly, there exists $Y_2<0$ such that, for all $Y<Y_2$ and $G$, we have $\sup_{X\in\XX}f_{G}(Y\mid X)\lesssim e^{-\frac{Y^2}{2v}}$, where the constant in the inequality depends only on the parameter space, but not $X$ nor $G$. Defining $Y_3:=\max_{i=1,2} |Y_i|$, we can then write
\begin{align*}
    \int_{-\infty}^{+\infty} & f_{G^\star}(Y\mid X) \left(\frac{f_{G^\star}(Y\mid X)}{f_{G}(Y\mid X)}\right)^\delta\mathrm dY \\
    & = \int_{|Y|\leq Y_3} f_{G^\star}(Y\mid X) \left(\frac{f_{G^\star}(Y\mid X)}{f_{G}(Y\mid X)}\right)^\delta\mathrm dY + \int_{|Y|>Y_3} f_{G^\star}(Y\mid X) \left(\frac{f_{G^\star}(Y\mid X)}{f_{G}(Y\mid X)}\right)^\delta\mathrm dY \\
    & \lesssim C \,+\, \int_{-\infty}^{+\infty} e^{-(\frac{1+\delta}{2V} - \frac{\delta}{2v})Y^2}\mathrm dY
\end{align*}
for some constant $C>0$, where the bound on the first addendum comes from compactness of the parameter space and truncation. The second addendum is then made finite by choosing $\delta\in(0,\min\{1, [V-v]^{-1}v\})$. Because these inequalities are uniform over values of $X$, we also obtain
\begin{equation*}
    \int_{\RR^d}\int_{-\infty}^{+\infty}  g_{G^\star}(Y, X) \left(\frac{g_{G^\star}(Y, X)}{g_{G}(Y, X)}\right)^\delta\mathrm dY\mathrm dX \leq M_\delta <\infty
\end{equation*}
where $M_\delta$ is constant across $G$.

The above result allows us to invoke Theorem 5 in \cite{wong1995probability}, which yields that, for all $G$ and $\varepsilon$ such that $d^2_H(g_G,g_{G^\star})\leq \varepsilon^2<\frac{1}{2}(1-e^{-1})$, we have $\kl(g_{G^\star}, g_G)\leq C\varepsilon^2\log(1/\varepsilon)$ for some $C>0$ depending on the parameter space. Now define the function $H(\varepsilon):=\varepsilon/\sqrt{2\log(1/\varepsilon)}$, which is such that $H^2(\varepsilon)\log(1/H(\varepsilon))\leq\varepsilon^2$ for all $\varepsilon>0$ small enough. Therefore, for $n$ large enough,
\begin{align*}
    B(G^\star, \varepsilon_n) &= \left\{ G\in \mathscr G_{K^\star} : \kl(g_{G^\star}, g_{G})\leq \varepsilon_n^2\right\} \\
    & \supseteq \left\{ G\in \mathscr G_{K^\star} : \kl(g_{G^\star}, g_{G})\leq H^2(\varepsilon_n)\log(1/H(\varepsilon_n))\right\}\\
    &\supseteq \left\{ G\in \mathscr G_{K^\star} : d_H^2(g_{G^\star}, g_{G})\lesssim H^2(\varepsilon_n)\right\}\\
    & \supseteq \left\{ G\in \mathscr G_{K^\star} : \| g_{G^\star} - g_{G}\|_{\mathscr L^1}\lesssim H^2(\varepsilon_n)\right\} \\
    & \supseteq \left\{ G\in \mathscr G_{K^\star} : \mathcal L(G^\star,G)\lesssim H^2(\varepsilon_n)\right\},
\end{align*}
where the last inclusion holds for any loss function
\begin{equation}\label{eq:l1_upperbound_voronoi}
    \mathcal L(G^\star,G)\gtrsim \| g_{G^\star} - g_{G}\|_{\mathscr L^1}.
\end{equation}
In that case, we get
\begin{equation*}
    \Pi(B(G^\star, \varepsilon_n)) \geq \Pi\left(\left\{G\in\mathscr G_{K^\star} : \mathcal L(G, G^\star)\lesssim H^2(\varepsilon_n)\right\}\right).
\end{equation*}
So it suffices to require
\begin{equation*}
    \Pi\left(\left\{G\in\mathscr G_{K^\star} : \mathcal L(G, G^\star)\lesssim H^2(\varepsilon_n)\right\}\right)\geq e^{-n\varepsilon_n^2/16},
\end{equation*}
or equivalently, given our choice $\varepsilon_n=M\sqrt{\log (n)/n}$,
\begin{equation*}
    \Pi\left(\left\{G\in\mathscr G_{K^\star} : \mathcal L(G, G^\star)\leq  \frac{DM^2}{n}\frac{\log n}{\log n - \log\log n - 2\log M}\right\}\right)\geq \left(\frac{1}{n}\right)^{M^2/16},
\end{equation*}
where $D$ is an explicit constant. Because the fraction is of constant order, we can simply update $D$ and require
\begin{equation*}
    \Pi\left(\left\{G\in\mathscr G_{K^\star} : \mathcal L(G, G^\star)\leq  \frac{DM^2}{n}\right\}\right)\geq \left(\frac{1}{n}\right)^{M^2/16}.
\end{equation*}
Moreover, choosing $M$ large enough and requiring
\begin{equation*}
    \Pi\left(\left\{G\in\mathscr G_{K^\star} : \mathcal L(G, G^\star)\leq  \frac{c}{n}\right\}\right)\geq \frac{1}{n},
\end{equation*}
for some $c>0$, we would obtain
\begin{equation*}
    \Pi\left(\left\{G\in\mathscr G_{K^\star} : \mathcal L(G, G^\star)\leq  \frac{DM^2}{n}\right\}\right)\geq\Pi\left(\left\{G\in\mathscr G_{K^\star} : \mathcal L(G, G^\star)\leq  \frac{c}{n}\right\}\right)\geq \frac{1}{n}\geq \left(\frac{1}{n}\right)^{M^2/16}.
\end{equation*}
Hence, replacing $t$ with $1/n$, it is sufficient to require
\begin{equation*}
    \Pi\left(\left\{G\in\mathscr G_{K^\star} : \mathcal L(G, G^\star)\leq  ct\right\}\right)\geq t
\end{equation*}
for some $c>0$ and all $t>0$ small enough.

We are left to show that \Eqref{eq:l1_upperbound_voronoi} holds by taking $\mathcal L=\mathcal L_1$ as defined in \Eqref{eq:voronoi_loss_1}. To that end, it suffices to show that the relation holds for $\| f_{G^\star}(\cdot\mid X) - f_{G}(\cdot\mid X)\|_{\mathscr L^1}$ for all $X$. For any $j=1,\ldots, K^\star$ and $i=1,\ldots,K^\star$, define
\begin{align*}
    \tilde p_j^\star(X):=\frac{\exp(\alpha_{0j}^\star)}{\sum_{\ell=1}^{K^\star}\exp(\alpha_{0\ell}^\star + X^\top\alpha_{1\ell}^\star)}, \quad & \tilde f_j^\star(Y\mid X):=\frac{1}{\sqrt{2\pi\sigma_{j}^{\star 2}}}\exp\left(-\frac{(Y-E(X,\beta_j^\star))^2}{2\sigma_j^{\star 2}} + X^\top\alpha_{1j}^\star\right)\\
    \tilde p_i(X):=\frac{\exp(\alpha_{0i})}{\sum_{\ell=1}^{K^\star}\exp(\alpha_{0\ell} + X^\top\alpha_{1\ell})}, \quad & \tilde f_i(Y\mid X):=\frac{1}{\sqrt{2\pi\sigma_{i}^2}}\exp\left(-\frac{(Y-E(X,\beta_i))^2}{2\sigma_i^2} + X^\top\alpha_{1i}\right),
\end{align*}
so that
\begin{align*}
    \| & f_{G^\star}(\cdot\mid X) - f_{G}(\cdot\mid X)\|_{\mathscr L^1} = \int_\RR\left\vert\sum_{j=1}^{K^\star}\left(\tilde p_j^\star(X) \tilde f_j^\star(Y\mid X) - \sum_{i\in\mathcal C_j}\tilde p_i(X)\tilde f_i(Y\mid X)\right)\right\vert\mathrm dY\\
    & \leq \sum_{j=1}^{K^\star}\left(\int_\RR\tilde f_j^\star(Y\mid X)\mathrm dY \right)\left\vert\tilde p_j^\star(X)-\sum_{i\in\mathcal C_j}\tilde p_i(X)\right\vert \,\,+\,\,\sum_{j=1}^{K^\star}\sum_{i\in\mathcal C_j}\tilde p_i(X)\int_\RR\left\vert \tilde f_j^\star(Y\mid X) - \tilde f_i(Y\mid X)\right\vert\mathrm dY\\
    & \lesssim \sum_{j=1}^{K^\star}\left\vert\tilde p_j^\star(X)-\sum_{i\in\mathcal C_j}\tilde p_i(X)\right\vert \,\,+\,\, \sum_{j=1}^{K^\star}\sum_{i\in\mathcal{C}_j}\exp(\alpha_{0i})\Big(\|\alpha_{1i}-\alpha^\star_{1j}\|+\|\beta_{i}-\beta^\star_{j}\|+|\sigma^2_{i}-\sigma^{\star2}_{j}|\Big),
\end{align*}
where the last inequality follows from the boundedness of $1/\sum_{\ell=1}^{K^\star}\exp(\alpha_{0\ell} + X^\top\alpha_{1\ell})$ and a first order Taylor expansion of $\int_\RR\left\vert \tilde f_j^\star(Y\mid X) - \tilde f_i(Y\mid X)\right\vert\mathrm dY$ around $(\alpha_{1i},\beta_i,\sigma_i^2)=(\alpha_{1j}^\star, \beta_j^\star,\sigma_j^{\star 2})$. As for the first term, the triangle inequality yields
\begin{align*}
    &\sum_{j=1}^{K^\star}\left\vert\tilde p_j^\star(X)-\sum_{i\in\mathcal C_j}\tilde p_i(X)\right\vert  \lesssim \sum_{j=1}^{K^\star}\left\vert\exp(\alpha_{0j}^\star)-\sum_{i\in\mathcal C_j}\exp(\alpha_{0i})\right\vert \\
    & + \sum_{j=1}^{K^\star}\sum_{i\in\mathcal{C}_j}\exp(\alpha_{0i})\left|\frac{1}{\sum_{\ell=1}^{K^\star}\exp(\alpha_{0\ell}^\star + X^\top\alpha_{1\ell}^\star)} -\frac{1}{\sum_{\ell=1}^{K^\star}\exp(\alpha_{0\ell} + X^\top\alpha_{1\ell})}\right|
\end{align*}
As for the last term above, using the Lipschitz property of $t\mapsto 1/t$ over a domain bounded away from 0 and an appropriate triangle inequality replacing $X^\top\alpha_{1\ell}$ with $X^\top\alpha_{1\ell}^\star)$, one shows that is it is less than (up to a multiplicative constant)
\begin{equation*}
    \sum_{j=1}^{K^\star}\left\vert\exp(\alpha_{0j}^\star)-\sum_{i\in\mathcal C_j}\exp(\alpha_{0i})\right\vert + \sum_{j=1}^{K^\star}\sum_{i\in\mathcal{C}_j}\exp(\alpha_{0i})\|\alpha_{1i}-\alpha^\star_{1j}\|.
\end{equation*}
Combining all of the above, we obtain
\begin{align*}
    \| f_{G^\star}(\cdot\mid X) - f_{G}(\cdot\mid X)\|_{\mathscr L^1}& \lesssim\sum_{j=1}^{K^\star}\Bigg|\sum_{i\in\mathcal{C}_j}\exp(\alpha_{0i})-\exp(\alpha^\star_{0j})\Bigg|\\
    & +\sum_{j=1}^{K^\star}\sum_{i\in\mathcal{C}_j}\exp(\alpha_{0i})\Big(\|\alpha_{1i}-\alpha^\star_{1j}\|+\|\beta_{i}-\beta^\star_{j}\|+|\sigma^2_{i}-\sigma^{\star2}_{j}|\Big),
\end{align*}
as desired.

\subsection{Proof of Theorem~\ref{theorem:exact_parameter_rate}}
\label{appendix:exact_parameter_rate}

    \textbf{Overview.} Following from the result of Theorem~\ref{thm:our_abstract_rate_SMoGE}, it suffices to prove that
    \begin{align}
        d_H(g_{G},g_{G^\star})\gtrsim \mathcal{L}_{1}(G,G^\star),
    \end{align}
    for any mixing measure $G\in\mathscr G_{K^\star}$. For that purpose, we first demonstrate that 
    \begin{align}
        \label{eq:local_part}
        \lim_{\varepsilon\searrow 0}\inf_{G\in\mathscr G_{K^\star}:\mathcal{L}_{1}(G,G^\star)\leq\varepsilon}\frac{d_H(g_{G},g_{G^\star})}{\mathcal{L}_{1}(G,G^\star)}>0.
    \end{align}
    Assume that the above equation holds for now. Then, there exists a positive constant $\varepsilon'$ such that $\inf_{G\in\mathscr G_{K^\star}:\mathcal{L}_{1}(G,G^\star)\leq\varepsilon'}d_H(g_{G},g_{G^\star})/\mathcal{L}_{1}(G,G^\star)>0.$
    % \begin{align*}
    %     \inf_{G\in\mathscr G_{K^\star}:\mathcal{L}_{1}(G,G^\star)\leq\varepsilon'}\frac{d_H(g_{G},g_{G^\star})}{\mathcal{L}_{1}(G,G^\star)}>0.
    % \end{align*}
    As a consequence, we can complete the proof by showing that
    \begin{align}
        \label{eq:global_part}
        \inf_{G\in\mathscr G_{K^\star}:\mathcal{L}_{1}(G,G^\star)>\varepsilon'}\frac{d_H(g_{G},g_{G^\star})}{\mathcal{L}_{1}(G,G^\star)}>0.
    \end{align}
    Given these arguments, we will provide the proofs for \Eqref{eq:local_part} and \Eqref{eq:global_part}, respectively, in the sequel.

    \textbf{Proof of \Eqref{eq:local_part}.} Note that the Hellinger distance is bounded below by the $\mathscr L^1$-norm, that is, $d_H(g,g')\gtrsim\|g-g'\|_{\mathscr L^1}$. Therefore, it is sufficient to show that
    \begin{align}
        \label{eq:local_part_TV}
        \lim_{\varepsilon\searrow0}\inf_{G\in\mathscr G_{K^\star}:\mathcal{L}_{1}(G,G^\star)\leq\varepsilon}\frac{\|g_{G}-g_{G^\star}\|_{\mathscr L^1}}{\mathcal{L}_{1}(G,G^\star)}>0.
    \end{align}
    We will use a proof-by-contradiction method to prove this result. In particular, assume that \Eqref{eq:local_part_TV} does not hold true. Then, there exists a sequence of mixing measures $(G_n)$ of the form $G_n=\sum_{i=1}^{K^\star}\exp(\azni)\delta_{(\aoni,\bni,\sni)}$ that satisfies $\mathcal{L}_{1}(G_n,G^\star)\to0$ and
    \begin{align}
        \|g_{G_n}-g_{G^\star}\|_{\mathscr L^1}/\mathcal{L}_{1}(G_n,G^\star)\to0,
    \end{align}
    as $n\to\infty$. 
    % Since $K_n\leq K$ for all n, there exists a subsequence of $G_n$ such that $K_n$ does not change with $n$. Therefore, up to replacing $G_n$ by this subsequence, we may assume that $K_n=K'\leq K$ for all $n$. 
    Note that there are only a finite number of distinct sets $\mathcal{C}^n_1\times\ldots\times\mathcal{C}^n_{K^\star}$ over the range of $n\in\mathbb{N}$. Thus, up to replacing $G_n$ by its subsequence, we may assume without loss of generality that $\mathcal{C}_j=\mathcal{C}^n_j$ does not change with $n$ for all $j\in[K^\star]$. Additionally, since $K^*$ is known under the exactly-specified setting and $\mathcal{L}_1(G_n,G^\star)\to0$ as $n\to\infty$,
    each Voronoi cell $\mathcal{C}_j$ has only one element for any $j\in[K^*]$. Without loss of generality, we assume that $\mathcal{C}_j=\{j\}$ for simplicity.
    Thus, the Voronoi loss $\mathcal{L}_{1n}:=\mathcal{L}_{1}(G_n,G^\star)$ can be written as
    \begin{align}
        \mathcal{L}_{1n}&=\sum_{j=1}^{K^\star}\Big|\exp(\aznj)-\exp(\alpha^\star_{0j})\Big|\nonumber\\
    &+\sum_{j=1}^{K^\star}\exp(\aznj)(\|\aonj-\alpha^\star_{1j}\|+\|\bnj-\beta^\star_{1j}\|+|\snj-\sigma^{\star2}_{j}|),
    \end{align}
    Since $\mathcal{L}_{1n}\to0$ as $n\to\infty$, we have $\exp(\aznj)\to\exp(\alpha^\star_{0j})$ and $(\aonj,\bnj,\snj)\to(\aosj,\bsj,\ssj)$ as $n\to\infty$, for all $j\in[K^\star]$. Next, we divide the rest of this proof into three main steps.

    \textbf{Step 1 - Density Decomposition.} In this step, we decompose the density discrepancy $g_{G_n}(Y,X)-g_{G^\star}(Y,X)=[f_{G_n}(Y\mid X)-f_{G^\star}(Y\mid X)]p(X)$ into a combination of linearly independent terms. In particular, let us denote 
    \begin{align*}
        D_n&:=\left[\sum_{j=1}^{K^\star}\exp(\azsj+X^{\top}(\aosj))\right]\cdot[g_{G_n}(Y,X)-g_{G^\star}(Y,X)],\\
        F(Y\mid X;\alpha_1,\beta,\sigma^2)&:=\exp(X^{\top}\alpha_1)\mathcal{N}(Y\mid E(X,\beta),\sigma^2),\\
        H_n(Y\mid X;\alpha_1)&:=\exp(X^{\top}\alpha_1)f_{G_n}(Y\mid X).
    \end{align*}
    Then, we can decompose $D_n$ as $D_n=A_n-B_n+C_n$, where
    \begin{align*}
        A_n&:=\sum_{j=1}^{K^\star}\exp(\aznj)[F(Y\mid X;\aonj,\bnj,\snj)-F(Y\mid X;\aosj,\bsj,\ssj)]p(X),\\
        B_n&:=\sum_{j=1}^{K^\star}\exp(\aznj)[H_n(Y\mid X;\aonj)-H_n(Y\mid X;\aosj)]p(X),\\
        C_n&:=\sum_{j=1}^{K^\star}\left[\exp(\aonj)-\exp(\aosj)\right][F(Y\mid X;\aosj,\bsj,\ssj)-H_n(Y\mid X;\aosj)]p(X).
    \end{align*}
    Next, we will further decompose $A_n$ and $B_n$. 
    % Firstly, we rewrite $A_n$ as $A_n=A_{n,1}+A_{n,2}$, where
    % \begin{align*}
    %     A_{n,1}&=\sum_{j\in[K^\star]:|\mathcal{C}_j|=1}\sum_{i\in\mathcal{C}_j}\exp(\aznj)[F(Y\mid X;\aonj,\bnj,\snj)-F(Y\mid X;\aosj,\bzsj,\bsj,\ssj)]p(X),\\
    %     A_{n,2}&=\sum_{j\in[K^\star]:|\mathcal{C}_j|>1}\sum_{i\in\mathcal{C}_j}\exp(\aznj)[F(Y\mid X;\aonj,\bnj,\snj)-F(Y\mid X;\aosj,\bzsj,\bsj,\ssj)]p(X).
    % \end{align*}
    Let $\daonj:=\aonj-\aosj$, $\dbnj:=\bnj-\bsj$, and $\dsnj:=\snj-\ssj$. By means of a first-order Taylor expansion, we have
    \begin{align*}
        A_{n}&=\sum_{j=1}^{K^\star}\exp(\aznj)\sum_{\rho:|\rho|=1}\frac{1}{\rho!}(\daonj)^{\rho_1}(\dbnj)^{\rho_2}(\dsnj)^{\rho_3}\\
        &\hspace{3cm}\times \frac{\partial^{|\rho|}F}{\partial\alpha_1^{\rho_1}\partial\beta^{\rho_2}\partial(\sigma^2)^{\rho_3}}(Y\mid E(X,\bsj),\ssj)p(X)+R_{n,1}(Y\mid X)p(X).
    \end{align*}
    Above, we denote $\rho=(\rho_1,\rho_2,\rho_3)\in\mathbb{N}^{d}\times\mathbb{N}^{p}\times\mathbb{N}$. Additionally, for any vectors $a=(a_1,\ldots,a_{\bar{d}})\in\mathbb{R}^{\bar{d}}$ and $b=(b_1,\ldots,b_{\bar{d}})\in\mathbb{N}^{\bar{d}}$, we let $a^{b}=a_1^{b_1}\ldots a_{\bar{d}}^{b_{\bar{d}}}$, $|a|=a_1+\ldots+a_{\bar{d}}$, and $b!=b_1!\ldots b_{\bar{d}}!$. Lastly, $R_{n,1}(Y\mid X)$is a Taylor remainder such that $R_{n,1}(Y\mid X)/\mathcal{L}_{1n}\to0$ as $n\to\infty$. By taking the first-order partial derivatives of $F(Y\mid X;\alpha_1,\beta,\sigma^2)$ with respect to its parameters, we have
    \begin{align}
        \frac{\partial F}{\partial\alpha_1^{(u)}}(Y\mid X;\alpha_1,\beta,\sigma^2)&=X^{(u)}\exp(X^{\top}\alpha_1)\mathcal{N}(Y\mid E(X,\beta),\sigma^2),\nonumber\\
        \frac{\partial F}{\partial\beta^{(u')}}(Y\mid X;\alpha_1,\beta,\sigma^2)&=\frac{\partial E}{\partial\beta^{(u')}}(X,\beta)\exp(X^{\top}\alpha_1)\frac{\partial\mathcal{N}}{\partial E}(Y\mid E(X,\beta),\sigma^2),\nonumber\\
        \label{eq:first-order-F}
        \frac{\partial F}{\partial\sigma^2}(Y\mid X;\alpha_1,\beta,\sigma^2)&=\frac{1}{2}\exp(X^{\top}\alpha_1)\frac{\partial^2\mathcal{N}}{\partial E^2}(Y\mid E(X,\beta),\sigma^2),
    \end{align}
    for all $u\in[d]$ and $u'\in[d']$. 
Based on these derivatives, we can rewrite $A_{n,1}$ as
\begin{align*}
    A_{n}&=\sum_{j=1}^{K^\star}\sum_{\gamma=0}^{2}A^{(j)}_{n,\gamma}\frac{\partial^\gamma\mathcal{N}}{\partial E^\gamma}(Y\mid E(X,\beta),\sigma^2)p(X)+R_{n,1}(Y\mid X)p(X),
\end{align*}
where 
\begin{align*}
    A^{(j)}_{n,0}&:=\exp(\aznj)\sum_{u=1}^{d}(\dazni)^{(u)}X^{(u)}\exp(X^{\top}\aosj),\\
    A^{(j)}_{n,1}&:=\exp(\aznj)\sum_{u'=1}^{d'}(\dbnj)^{(u')}\frac{\partial E}{\partial\beta^{(u')}}(X,\bsj)\exp(X^{\top}\aosj),\\
    A^{(j)}_{n,2}&:=\exp(\aznj)\frac{1}{2}(\dsnj)\exp(X^{\top}\aosj),
\end{align*}
for all $j\in[K^\star]$. 
Next, by applying a first-order Taylor expansion, we rewrite $B_n$ as
\begin{align*}
    B_{n}&=\sum_{j=1}^{K^\star}\exp(\aznj)\sum_{u=1}^{d}(\daonj)^{(u)}X^{(u)}H_n(Y\mid X;\aosj)+R_{n,2}(Y\mid X),
\end{align*}
where $R_{n,2}(Y\mid X)$ is a Taylor remainder such that $R_{n,2}(Y\mid X)/\mathcal{L}_{1n}\to0$ as $n\to\infty$.

\textbf{Step 2 - Non-vanishing coefficients.} In this step, we prove by contradiction that at least one among the coefficients in the representations of $[A_{n}-R_{n,1}(Y\mid X)]p(X)/\mathcal{L}_{1n}$, $[B_{n}-R_{n,2}(Y\mid X)]p(X)/\mathcal{L}_{1n}$, $C_{n}/\mathcal{L}_{1n}$ does not go to zero as $n\to\infty$. Assume that all these coefficients converge to zero. From the coefficients of the terms:
\begin{itemize}
    \item $\exp(X^{\top}\aosj)\mathcal{N}(Y|X;\aosj,\bsj,\ssj)$ for $j\in[K^\star]$, we get
    \begin{align*}
        \frac{1}{\mathcal{L}_{1n}}\cdot\sum_{j=1}^{K^\star}\Big|\exp(\aznj)-\exp(\aosj)\Big|\to0;
    \end{align*}
    \item $X^{(u)}\exp(X^{\top}\aosj)\mathcal{N}(Y|X;\aosj,\bsj,\ssj)$ for $j\in[K^\star]$ and $u\in[d]$, we get
    \begin{align*}
        \frac{1}{\mathcal{L}_{1n}}\cdot\sum_{j=1}^{K^\star}\exp(\aznj)\|\daonj\|\to0;
    \end{align*}
    \item $\frac{\partial E}{\partial\beta^{(u')}}\exp(X^{\top}\aosj)\frac{\partial \mathcal{N}}{\partial E}(Y|X;\aosj,\bsj,\ssj)$ for $j\in[K^\star]$ and $u'\in[d']$, we get
    \begin{align*}
        \frac{1}{\mathcal{L}_{1n}}\cdot\sum_{j=1}^{K^\star}\exp(\aznj)\|\dbnj\|\to0;
    \end{align*}
    \item $\exp(X^{\top}\aosj)\frac{\partial^2 \mathcal{N}}{\partial E^2}(Y|X;\aosj,\bsj,\ssj)$ for $j\in[K^\star]$, we get
    \begin{align*}
        \frac{1}{\mathcal{L}_{1n}}\cdot\sum_{j=1}^{K^\star}\exp(\aznj)|\dsnj|\to0;
    \end{align*}
    % \item $X^{(u)}X^{(v)}\exp(X^{\top}\aosj)\mathcal{N}(Y|X;\aosj,\bsj,\ssj)$ for $j\in[K^\star]:|\mathcal{C}_{j}|>1$ and $u,v\in[d]$, we get
    % \begin{align*}
    %     \frac{1}{\mathcal{L}_{1n}}\cdot\sum_{j\in[K^\star]:|\mathcal{C}_{j}|>1}\exp(\aznj)\|\daonj\|^2\to0;
    % \end{align*}
    % \item $\big[\frac{\partial^2 E}{\partial\beta^{(u')}}(X,\bsj)\big]^2\exp(X^{\top}\aosj)\frac{\partial^2 \mathcal{N}}{\partial E^2}(Y|X;\aosj,\bsj,\ssj)$ for $j\in[K^\star]:|\mathcal{C}_{j}|>1$ and $u'\in[d']$, we get
    % \begin{align*}
    %     \frac{1}{\mathcal{L}_{1n}}\cdot\sum_{j\in[K^\star]:|\mathcal{C}_{j}|>1}\exp(\aznj)\|\dbnj\|\to0;
    % \end{align*}
    % \item $\exp(X^{\top}\aosj)\frac{\partial^4 \mathcal{N}}{\partial E^4}(Y|X;\aosj,\bsj,\ssj)$ for $j\in[K^\star]:|\mathcal{C}_{j}|>1$, we get
    % \begin{align*}
    %     \frac{1}{\mathcal{L}_{1n}}\cdot\sum_{j\in[K^\star]:|\mathcal{C}_{j}|>1}\exp(\aznj)|\dsnj|^2\to0;
    % \end{align*}
\end{itemize}
By taking the sum of these limits, we have $1=\frac{1}{\mathcal{L}_{1n}}\cdot\mathcal{L}_{1n}\to0$ as $n\to\infty$, which is a contradiction. Therefore, not all the coefficients in the representations of $[A_{n}-R_{n,1}(Y\mid X)]p(X)/\mathcal{L}_{1n}$, $[B_{n}-R_{n,2}(Y\mid X)]p(X)/\mathcal{L}_{1n}$, $C_{n}/\mathcal{L}_{1n}$ converge to zero as $n\to\infty$.

\textbf{Step 3 - Fatou's argument.} In this step, we will point out a contradiction to the results of Step 2. To begin, we denote by $m_n$ the maximum of the absolute values of the coefficients in the representations of $[A_{n}-R_{n,1}(Y\mid X)]p(X)/\mathcal{L}_{1n}$, $[B_{n}-R_{n,2}(Y\mid X)]p(X)/\mathcal{L}_{1n}$, $C_{n}/\mathcal{L}_{1n}$. Then, the results of Step 2 indicate that $1/m_n\not\to\infty$ as $n\to\infty$. Next, let us denote
\begin{align*}
    \frac{1}{m_n\mathcal{L}_{1n}}\cdot\Big(\exp(\aznj)-\exp(\azsj)\Big)\to t_{0,j},& \quad \frac{1}{m_n\mathcal{L}_{1n}}\cdot\exp(\aznj)(\daonj)^{(u)}\to t^{(u)}_{1,j},\\
    \frac{1}{m_n\mathcal{L}_{1n}}\cdot\exp(\aznj)(\dbnj)^{(u')}\to t^{(u')}_{2,j},& \quad \frac{1}{m_n\mathcal{L}_{1n}}\cdot\exp(\aznj)(\dsnj)\to t_{3,j},
\end{align*}
for all $j\in[K^\star]$ as $n\to\infty$. It follows from Step 2 that not all these limits are zero. Next, recall that we have $\|g_{G_n}-g_{G^\star}\|_{\mathscr L^1}/\mathcal{L}_{1}(G_n,G^\star)\to0$ as $n\to\infty$. Then, by means of the Fatou's lemma, we have
\begin{align*}
    \lim_{n\to\infty}\frac{\|g_{G_n}-g_{G^\star}\|_{\mathscr L^1}}{m_n\mathcal{L}_{1n}}\geq\int\liminf_{n\to\infty}\frac{|g_{G_n}(Y,X)-g_{G^\star}(Y,X)|}{m_n\mathcal{L}_{1n}}\mathrm d(Y,X),
\end{align*}
which implies that $[g_{G_n}(Y,X)-g_{G^\star}(Y,X)]/[m_n\mathcal{L}_{1n}]\to0$ as $n\to\infty$ along a subsequence (focus on such a subsequence from now on). Since the term $\sum_{j=1}^{K^\star}\exp(\azsj+X^{\top}\aosj)$ is bounded, we deduce $D_n/[m_n\mathcal{L}_{1n}]\to0$ as $n\to\infty$, for almost every $(Y,X)$. From the decomposition of $D_n$, it follows that
\begin{align}
    \label{eq:zero_limit}
    \frac{1}{m_n\mathcal{L}_{1n}}\cdot(A_{n}-B_{n}+C_{n})\to0.
\end{align}
For ease of presentation, let us denote $F_{\gamma,j}:=\exp(X^{\top}\aosj)\frac{\partial^{\gamma}\mathcal{N}}{\partial E^{\gamma}}(Y\mid X;\aosj,\bsj,\ssj)$ and $H_{j}(Y\mid X):=\lim_{n\to\infty}H_{n}(Y\mid X;\aosj)$, for all $0\leq\gamma\leq 2$ and $j\in[K^\star]$. Then, we have
\begin{align*}
    &\lim_{n\to\infty}\frac{A_{n}}{m_n\mathcal{L}_{1n}}=\sum_{j=1}^{K^\star}\Big[\sum_{u=1}^{d}t_{1,j}^{(u)}X^{(u)}F_{0,j}(Y|X)+\sum_{u'=1}^{d'}t_{2,j}^{(u')}\frac{\partial E}{\partial\beta^{(u')}}(X,\bsj)F_{1,j}(Y|X)\\
    &\hspace{9cm}+\frac{1}{2}t_{3,j}F_{2,j}(Y|X)\Big]p(X),\\
    &\lim_{n\to\infty}\frac{B_{n}}{m_n\mathcal{L}_{1n}}=\sum_{j=1}^{K^\star}\sum_{u=1}^{d}t_{1,j}^{(u)}X^{(u)}H_{j}(Y|X)p(X),\\
    &\lim_{n\to\infty}\frac{C_{n}}{m_n\mathcal{L}_{1n}}=\sum_{j=1}^{K^\star}t_{0,j}[F_{0,j}(Y|X)-H_{j}(Y|X)]p(X).
\end{align*}
Note that for almost every $X$, the set $\Big\{F_{\gamma,j}(Y|X), \ H_{j}(Y|X):0\leq\gamma\leq 2, \  j\in[K^\star]\Big\}$
is linearly independent with respect to $Y$. Therefore, it follows that the coefficients of these terms in the limit in \Eqref{eq:zero_limit} become zero. 

For $j\in[K^\star]$, by considering the coefficients of 
\begin{itemize}
    \item $F_{0,j}(Y|X)$, we have $t_{0,j}+\sum_{u=1}^{d}t_{1,j}^{(u)}X^{(u)}=0$, for almost every $X$. Then, we deduce $t_{0,j}=t_{1,j}^{(u)}=0$ for all $u\in[d]$;
    \item $F_{1,j}(Y|X)$, we have $\sum_{u'=1}^{d'}t_{2,j}^{(u')}\frac{\partial E}{\partial\beta^{(u')}}(X,\bsj)$, for almost every $X$. As the expert function $E$ is first-order strongly identifiable, we get $t_{2,j}^{(u')}=0$ for all $u'\in[d']$;
    \item $F_{2,j}(Y|X)$, we have $t_{3,j}=0$.
\end{itemize}
Putting the above results together, we have $t_{0,j}=t_{1,j}^{(u)}=t_{2,j}^{(u')}=t_{3,j}=0$ for all $j\in[K^\star]$, $u,v\in[d]$ and $u',v'\in[d']$. This contradicts to the fact that at least one among them is non-zero. Therefore, we achieve the result in \Eqref{eq:local_part}.

\textbf{Proof of \Eqref{eq:global_part}.} Assume by contrary that \Eqref{eq:global_part} does not hold. Then, we can find a sequence of mixing measure $(G_n)$ such that $\mathcal{L}_{1}(G_n,G^\star)>\varepsilon'$ and $\|g_{G_n}-g_{G^\star}\|_{\mathscr L^1}/\mathcal{L}_{1}(G_n,G^\star)\to0$ as $n\to\infty$. These two properties imply that 
\begin{align*}
    \|g_{G_n}-g_{G^\star}\|_{\mathscr L^1}\to0.
\end{align*}
Since the parameter space $\Theta$ is compact, we can substitute the sequence $(G_n)$ with its subsequence $(G'_n)$ that converges to some mixing measure $G'$. Recall that $\mathcal{L}_{1}(G_n,G^\star)>\varepsilon'$, then we also have $\mathcal{L}_{1}(G',G^\star)\geq\varepsilon'$. On the other hand, by the Fatou's lemma, we get
\begin{align*}
    0=\lim_{n\to\infty}\|g_{G'_n}-g_{G^\star}\|_{\mathscr L^1}&\geq\int\liminf_{n\to\infty}|g_{G'_n}(Y,X)-g_{G^\star}(Y,X)|\mathrm d(Y,X)\\
    &=\int|g_{G'}(Y,X)-g_{G^\star}(Y,X)|\mathrm d(Y,X).
\end{align*}
This inequality indicates that $g_{G'}(Y,X)=g_{G^\star}(Y,X)$ for almost every $(Y,X)$. Since the SMoGE model is identifiable, we deduce $G'\equiv G^\star$. As a result, we get $\mathcal{L}_{1}(G',G^\star)=0$, which contradicts the previous result that $\mathcal{L}_{1}(G',G^\star)>\varepsilon'>0$. Hence, the proof is completed.

\subsection{Proof of Theorem~\ref{theorem:parameter_rate}}
\label{appendix:parameter_rate}
    \textbf{Overview.} Following the result of Theorem~\ref{thm:density_estim_finite_prior} and using the same arguments for the proof of Theorem~\ref{theorem:exact_parameter_rate} in Appendix~\ref{appendix:exact_parameter_rate}, it suffices to establish the following inequality
    \begin{align}
        \label{eq:local_part_TV_over}
        \lim_{\varepsilon\searrow0}\inf_{G\in\mathscr O_{K}:\mathcal{L}_{2}(G,G^\star)\leq\varepsilon}\frac{\|g_{G}-g_{G^\star}\|_{\mathscr L^1}}{\mathcal{L}_{2}(G,G^\star)}>0.
    \end{align}
    Assume by contrary that \Eqref{eq:local_part_TV_over} is not true, that is, we can find a sequence $(G_n)$ such that $\mathcal{L}_{2}(G_n,G^\star)\to0$ and $\|g_{G_n}-g_{G^\star}\|_{\mathscr L^1}/\mathcal{L}_{2}(G_n,G^\star)\to0,$
    as $n\to\infty$. Since $K_n\leq K$ for all n, there exists a subsequence of $G_n$ such that $K_n$ does not change with $n$. Therefore, up to replacing $G_n$ by this subsequence, we may assume that $K_n=K'\leq K$ for all $n$. 
    Similarly, since there are only a finite number of distinct sets $\mathcal{C}^n_1\times\ldots\times\mathcal{C}^n_{K^\star}$ over the range of $n\in\mathbb{N}$, we may assume without loss of generality that $\mathcal{C}_j=\mathcal{C}^n_j$ does not change with $n$ for all $j\in[K^\star]$. Then, we can rewrite the Voronoi loss $\mathcal{L}_{2n}:=\mathcal{L}_{2}(G_n,G^\star)$ as
    \begin{align}
        \mathcal{L}_{2n}&=\sum_{j=1}^{K^\star}\Big|\sum_{i\in\mathcal{C}_j}\exp(\azni)-\exp(\alpha^\star_{0j})\Big|\nonumber\\
    &+\sum_{j\in[K^\star]:|\mathcal{C}_j|=1}\sum_{i\in\mathcal{C}_j}\exp(\azni)(\|\aoni-\alpha^\star_{1j}\|+\|\bni-\beta^\star_{1j}\|+|\sni-\sigma^{\star2}_{j}|)\nonumber\\
    &+\sum_{j\in[K^\star]:|\mathcal{C}_j|>1}\sum_{i\in\mathcal{C}_j}\exp(\azni)(\|\aoni-\alpha^\star_{1j}\|^2+\|\bni-\beta^\star_{1j}\|^2+|\sni-\sigma^{\star2}_{j}|^2),
    \end{align}
    Recall that we have $\mathcal{L}_{2n}\to0$ as $n\to\infty$, which implies $\sum_{i\in\mathcal{C}_j}\exp(\azni)\to\exp(\alpha^\star_{0j})$ and $(\aoni,\bni,\sni)\to(\aosj,\bsj,\ssj)$ as $n\to\infty$, for all $i\in\mathcal{C}_j$ and $j\in[K^\star]$. Subsequently, we separate the rest into three main steps.

    \textbf{Step 1 - Density Decomposition.} First, we decompose the density discrepancy $g_{G_n}(Y,X)-g_{G^\star}(Y,X)=[f_{G_n}(Y\mid X)-f_{G^\star}(Y\mid X)]p(X)$ into a combination of linearly independent terms through the quantity
    \begin{align*}
        D_n&:=\left[\sum_{j=1}^{K^\star}\exp(\azsj+X^{\top}(\aosj))\right]\cdot[g_{G_n}(Y,X)-g_{G^\star}(Y,X)].
    \end{align*}
    In particular, let $F(Y\mid X;\alpha_1,\beta,\sigma^2):=\exp(X^{\top}\alpha_1)\mathcal{N}(Y\mid E(X,\beta),\sigma^2)$ and $H_n(Y\mid X;\alpha_1):=\exp(X^{\top}\alpha_1)f_{G_n}(Y\mid X)$. Then, the quantity $D_n$ can be represented as $D_n=A_n-B_n+C_n$, where we define
    \begin{align*}
        A_n&:=\sum_{j=1}^{K^\star}\sum_{i\in\mathcal{C}_j}\exp(\azni)[F(Y\mid X;\aoni,\bni,\sni)-F(Y\mid X;\aosj,\bsj,\ssj)]p(X),\\
        B_n&:=\sum_{j=1}^{K^\star}\sum_{i\in\mathcal{C}_j}\exp(\azni)[H_n(Y\mid X;\aoni)-H_n(Y\mid X;\aosj)]p(X),\\
        C_n&:=\sum_{j=1}^{K^\star}\left[\sum_{i\in\mathcal{C}_j}\exp(\aoni)-\exp(\aosj)\right][F(Y\mid X;\aosj,\bsj,\ssj)-H_n(Y\mid X;\aosj)]p(X).
    \end{align*}
    Next, we rewrite $A_n$ as $A_n=A_{n,1}+A_{n,2}$, where
    \begin{align*}
        A_{n,1}&=\sum_{j\in[K^\star]:|\mathcal{C}_j|=1}\sum_{i\in\mathcal{C}_j}\exp(\azni)[F(Y\mid X;\aoni,\bni,\sni)-F(Y\mid X;\aosj,\bzsj,\bsj,\ssj)]p(X),\\
        A_{n,2}&=\sum_{j\in[K^\star]:|\mathcal{C}_j|>1}\sum_{i\in\mathcal{C}_j}\exp(\azni)[F(Y\mid X;\aoni,\bni,\sni)-F(Y\mid X;\aosj,\bzsj,\bsj,\ssj)]p(X).
    \end{align*}
    Let us denote $\daoni:=\aoni-\aosj$, $\dbni:=\bni-\bsj$, and $\dsni:=\sni-\ssj$. By applying first-order and second-order Taylor expansions, we get
    \begin{align*}
        A_{n,1}&=\sum_{j\in[K^\star]:|\mathcal{C}_j|=1}\sum_{i\in\mathcal{C}_j}\exp(\azni)\sum_{\rho:|\rho|=1}\frac{1}{\rho!}(\daoni)^{\rho_1}(\dbni)^{\rho_2}(\dsni)^{\rho_3}\\
        &\hspace{3cm}\times \frac{\partial^{|\rho|}F}{\partial\alpha_1^{\rho_1}\partial\beta^{\rho_2}\partial(\sigma^2)^{\rho_3}}(Y\mid E(X,\bsj),\ssj)p(X)+R_{n,1}(Y\mid X)p(X),\\
        A_{n,2}&=\sum_{j\in[K^\star]:|\mathcal{C}_j|>1}\sum_{i\in\mathcal{C}_j}\exp(\azni)\sum_{\rho:|\rho|=2}\frac{1}{\rho!}(\daoni)^{\rho_1}(\dbni)^{\rho_2}(\dsni)^{\rho_3}\\
        &\hspace{3cm}\times \frac{\partial^{|\rho|}F}{\partial\alpha_1^{\rho_1}\partial\beta^{\rho_2}\partial(\sigma^2)^{\rho_3}}(Y\mid E(X,\bsj),\ssj)p(X)+R_{n,2}(Y\mid X)p(X),
    \end{align*}
    Above, we denote $\rho=(\rho_1,\rho_2,\rho_3)\in\mathbb{N}^{d}\times\mathbb{N}^{p}\times\mathbb{N}$. Additionally, for any vectors $a=(a_1,\ldots,a_{\bar{d}})\in\mathbb{R}^{\bar{d}}$ and $b=(b_1,\ldots,b_{\bar{d}})\in\mathbb{N}^{\bar{d}}$, we let $a^{b}=a_1^{b_1}\ldots a_{\bar{d}}^{b_{\bar{d}}}$, $|a|=a_1+\ldots+a_{\bar{d}}$, and $b!=b_1!\ldots b_{\bar{d}}!$. Lastly, $R_{n,\ell}(Y\mid X)$, for $\ell\in\{1,2\}$, are Taylor remainders such that $R_{n,\ell}(Y\mid X)/\mathcal{L}_{2n}\to0$ as $n\to\infty$. Recall that the first-order partial derivatives of $F(Y\mid X;\alpha_1,\beta,\sigma^2)$ with respect to its parameters have been provided in \Eqref{eq:first-order-F}. 
    % \begin{align*}
    %     \frac{\partial F}{\partial\alpha_1^{(u)}}(Y\mid X;\alpha_1,\beta,\sigma^2)&=X^{(u)}\exp(X^{\top}\alpha_1)\mathcal{N}(Y\mid E(X,\beta),\sigma^2),\\
    %     \frac{\partial F}{\partial\beta^{(u')}}(Y\mid X;\alpha_1,\beta,\sigma^2)&=\frac{\partial E}{\partial\beta^{(u')}}(X,\beta)\exp(X^{\top}\alpha_1)\frac{\partial\mathcal{N}}{\partial E}(Y\mid E(X,\beta),\sigma^2),\\
    %     \frac{\partial F}{\partial\sigma^2}(Y\mid X;\alpha_1,\beta,\sigma^2)&=\frac{1}{2}\exp(X^{\top}\alpha_1)\frac{\partial^2\mathcal{N}}{\partial E^2}(Y\mid E(X,\beta),\sigma^2),
    % \end{align*}
    % for all $u\in[d]$ and $u'\in[d']$. 
    Meanwhile, the second-order partial derivatives of $F(Y\mid X;\alpha_1,\beta,\sigma^2)$ with respect to its parameters are given by
    \begin{align*}
    \frac{\partial^2 F}{\partial\alpha_1^{(u)}\partial\alpha_1^{(v)}}(Y\mid X;\alpha_1,\beta,\sigma^2)&=X^{(u)}X^{(v)}\exp(X^{\top}\alpha_1)\mathcal{N}(Y\mid E(X,\beta),\sigma^2),\\
    \frac{\partial^2 F}{\partial\beta^{(u')}\partial\beta^{(v')}}(Y\mid X;\alpha_1,\beta,\sigma^2)&=\frac{\partial^2 E}{\partial\beta^{(u')}\partial\beta^{(v')}}(X,\beta)\exp(X^{\top}\alpha_1)\frac{\partial\mathcal{N}}{\partial E}(Y\mid E(X,\beta),\sigma^2)\\
    &+\frac{\partial E}{\partial\beta^{(u')}}(X,\beta)\frac{\partial E}{\partial\beta^{(v')}}(X,\beta)\exp(X^{\top}\alpha_1)\frac{\partial^2\mathcal{N}}{\partial E^2}(Y\mid E(X,\beta),\sigma^2),\\
    \frac{\partial^2 F}{\partial\nu^2}(Y\mid X;\alpha_1,\beta,\sigma^2)&=\frac{1}{4}\exp(X^{\top}\alpha_1)\frac{\partial^4\mathcal{N}}{\partial E^4}(Y\mid E(X,\beta),\sigma^2),
\end{align*}
and
\begin{align*}
    \frac{\partial^2 F}{\partial\alpha_1^{(u)}\partial\beta^{(v')}}(Y\mid X;\alpha_1,\beta,\sigma^2)&=X^{(u)}\frac{\partial E}{\partial\beta^{(v')}}(X,\beta)\exp(X^{\top}\alpha_1)\frac{\partial\mathcal{N}}{\partial E}(Y\mid E(X,\beta),\sigma^2),\\
    \frac{\partial^2 F}{\partial\alpha_1^{(u)}\partial\nu}(Y\mid X;\alpha_1,\beta,\sigma^2)&=\frac{1}{2}X^{(u)}\exp(X^{\top}\alpha_1)\frac{\partial^2\mathcal{N}}{\partial E^2}(Y\mid E(X,\beta),\sigma^2),\\
    \frac{\partial^2 F}{\partial\beta^{(u')}\partial\nu}(Y\mid X;\alpha_1,\beta,\sigma^2)&=\frac{1}{2}\frac{\partial E}{\partial\beta^{(u')}}(X,\beta)\exp(X^{\top}\alpha_1)\frac{\partial^3\mathcal{N}}{\partial E^3}(Y\mid E(X,\beta),\sigma^2),
\end{align*}
for all $i\in[d]$ and $u'\in[d']$. From these derivatives, $A_{n,1}$ can be written as
\begin{align*}
    A_{n,1}&=\sum_{j\in[K^\star]:|\mathcal{C}_j|=1}\sum_{\gamma=0}^{2}A^{(j)}_{n,1,\gamma}\frac{\partial^\gamma\mathcal{N}}{\partial E^\gamma}(Y\mid E(X,\beta),\sigma^2)p(X)+R_{n,1}(Y\mid X)p(X),
\end{align*}
where 
\begin{align*}
    A^{(j)}_{n,1,0}&:=\sum_{i\in\mathcal{C}_j}\exp(\azni)\sum_{u=1}^{d}(\dazni)^{(u)}X^{(u)}\exp(X^{\top}\aosj),\\
    A^{(j)}_{n,1,1}&:=\sum_{i\in\mathcal{C}_j}\exp(\azni)\sum_{u'=1}^{d'}(\dbni)^{(u')}\frac{\partial E}{\partial\beta^{(u')}}(X,\bsj)\exp(X^{\top}\aosj),\\
    A^{(j)}_{n,1,2}&:=\sum_{i\in\mathcal{C}_j}\exp(\azni)\frac{1}{2}(\dsni)\exp(X^{\top}\aosj),
\end{align*}
for all $j\in[K^\star]:|\mathcal{C}_j|=1$. Analogously, we can write $A_{n,2}$ as
\begin{align*}
    A_{n,2}&=\sum_{j\in[K^\star]:|\mathcal{C}_j|>1}\sum_{\gamma=0}^{4}A^{(j)}_{n,1,\gamma}\frac{\partial^\gamma\mathcal{N}}{\partial E^\gamma}(Y\mid E(X,\beta),\sigma^2)p(X)+R_{n,2}(Y\mid X)p(X),
\end{align*}
where
\begin{align*}
    A^{(j)}_{n,2,0}&:=\sum_{i\in\mathcal{C}_j}\exp(\azni)\Bigg[\sum_{u=1}^{d}(\daoni)^{(u)}X^{(u)}+\sum_{u,v=1}^{d}\frac{(\daoni)^{(u)}(\daoni)^{(v)}}{1+1_{\{u=v\}}}X^{(u)}X^{(v)}\Bigg]\exp(X^{\top}\aosj),\\
    A^{(j)}_{n,2,1}&:=\sum_{i\in\mathcal{C}_j}\exp(\azni)\Bigg[\sum_{u'=1}^{d'}(\dbni)^{(u')}\frac{\partial E}{\partial\beta^{(u')}}(X,\bsj)\\
    &\hspace{4cm}+\sum_{u',v'=1}^{d'}\frac{(\dbni)^{(u')}(\dbni)^{(v')}}{1+1_{\{u'=v'\}}}\frac{\partial^2E}{\partial\beta^{(u')}\partial\beta^{(v')}}(X,\bsj)\\
    &\hspace{4cm}+\sum_{u=1}^{d}\sum_{v'=1}^{d'}(\daoni)^{(u)}(\dbni)^{(v')}X^{(u)}\frac{\partial E}{\partial\beta^{(v')}}(X,\bsj)\Bigg]\exp(X^{\top}\aosj),\\
    A^{(j)}_{n,2,2}&:=\sum_{i\in\mathcal{C}_j}\exp(\azni)\Bigg[\frac{1}{2}(\dsni)+\sum_{u',v'=1}^{d'}\frac{(\dbni)^{(u')}(\dbni)^{(v')}}{1+1_{\{u'=v'\}}}\frac{\partial E}{\partial\beta^{(u')}}(X,\bsj)\frac{\partial E}{\partial\beta^{(v')}}(X,\bsj)\\
    &\hspace{4cm}+\sum_{u=1}^{d}\frac{1}{2}(\daoni)^{(u)}(\dsni)X^{(u)}\Bigg]\exp(X^{\top}\aosj),\\
    A^{(j)}_{n,2,3}&:=\sum_{i\in\mathcal{C}_j}\exp(\azni)\sum_{u'=1}^{d'}\frac{1}{2}(\dbni)^{(u')}(\dsni)\frac{\partial E}{\partial\beta^{(u')}}(X,\bsj)\exp(X^{\top}\aosj),\\
    A^{(j)}_{n,2,4}&:=\sum_{i\in\mathcal{C}_j}\exp(\azni)\frac{1}{8}(\dsni)^2\exp(X^{\top}\aosj),
\end{align*}
for all $j\in[K^\star]:|\mathcal{C}_j|>1$.

Next, we rewrite $B_n$ as $B_n=B_{n,1}+B_{n,2}$, where
\begin{align*}
    B_{n,1}&:=\sum_{j\in[K^\star]:|\mathcal{C}_j|=1}\sum_{i\in\mathcal{C}_j}\exp(\azni)[H_n(Y\mid X;\aoni)-H_n(Y\mid X;\aosj)]p(X),\\
    B_{n,2}&:=\sum_{j\in[K^\star]:|\mathcal{C}_j|>1}\sum_{i\in\mathcal{C}_j}\exp(\azni)[H_n(Y\mid X;\aoni)-H_n(Y\mid X;\aosj)]p(X).
\end{align*}
By applying first-order and second-order Taylor expansions, we have
\begin{align*}
    B_{n,1}&=\sum_{j\in[K^\star]:|\mathcal{C}_j|=1}\sum_{i\in\mathcal{C}_j}\exp(\azni)\sum_{u=1}^{d}(\daoni)^{(u)}X^{(u)}H_n(Y\mid X;\aosj)+R_{n,3}(Y\mid X),\\
    B_{n,2}&=\sum_{j\in[K^\star]:|\mathcal{C}_j|>1}\sum_{i\in\mathcal{C}_j}\exp(\azni)\Big[\sum_{u=1}^{d}(\daoni)^{(u)}X^{(u)}H_n(Y\mid X;\aosj),\\
    &\hspace{2cm}+\sum_{u,v=1}^{d}\frac{(\daoni)^{(u)}(\daoni)^{(v)}}{1+1_{\{u=v\}}}X^{(u)}X^{(v)}H_n(Y\mid X;\aosj)\Big]+R_{n,4}(Y\mid X),
\end{align*}
where $R_{n,\ell}(Y\mid X)$, for $\ell\in\{3,4\}$, are Taylor remainders such that $R_{n,\ell}(Y\mid X)/\mathcal{L}_{2n}\to0$ as $n\to\infty$.

\textbf{Step 2 - Non-vanishing coefficients.} In this step, we prove by contradiction that at least one among the coefficients in the representations of $[A_{n,1}-R_{n,1}(Y\mid X)]p(X)/\mathcal{L}_{2n}$, $[A_{n,2}-R_{n,2}(Y\mid X)]p(X)/\mathcal{L}_{2n}$, $[B_{n,1}-R_{n,3}(Y\mid X)]p(X)/\mathcal{L}_{2n}$, $[B_{n,2}-R_{n,4}(Y\mid X)]p(X)/\mathcal{L}_{2n}$, $C_{n}/\mathcal{L}_{2n}$ does not go to zero as $n\to\infty$. Assume that all these coefficients converge to zero. From the coefficients of the terms:
\begin{itemize}
    \item $\exp(X^{\top}\aosj)\mathcal{N}(Y|X;\aosj,\bsj,\ssj)$ for $j\in[K^\star]$, we get
    \begin{align*}
        \frac{1}{\mathcal{L}_{2n}}\cdot\sum_{j=1}^{K^\star}\Big|\sum_{i\in\mathcal{C}_{j}}\exp(\azni)-\exp(\aosj)\Big|\to0;
    \end{align*}
    \item $X^{(u)}\exp(X^{\top}\aosj)\mathcal{N}(Y|X;\aosj,\bsj,\ssj)$ for $j\in[K^\star]:|\mathcal{C}_{j}|=1$ and $u\in[d]$, we get
    \begin{align*}
        \frac{1}{\mathcal{L}_{2n}}\cdot\sum_{j\in[K^\star]:|\mathcal{C}_{j}|=1}\sum_{i\in\mathcal{C}_{j}}\exp(\azni)\|\daoni\|\to0;
    \end{align*}
    \item $\frac{\partial E}{\partial\beta^{(u')}}\exp(X^{\top}\aosj)\frac{\partial \mathcal{N}}{\partial E}(Y|X;\aosj,\bsj,\ssj)$ for $j\in[K^\star]:|\mathcal{C}_{j}|=1$ and $u'\in[d']$, we get
    \begin{align*}
        \frac{1}{\mathcal{L}_{2n}}\cdot\sum_{j\in[K^\star]:|\mathcal{C}_{j}|=1}\sum_{i\in\mathcal{C}_{j}}\exp(\azni)\|\dbni\|\to0;
    \end{align*}
    \item $\exp(X^{\top}\aosj)\frac{\partial^2 \mathcal{N}}{\partial E^2}(Y|X;\aosj,\bsj,\ssj)$ for $j\in[K^\star]:|\mathcal{C}_{j}|=1$, we get
    \begin{align*}
        \frac{1}{\mathcal{L}_{2n}}\cdot\sum_{j\in[K^\star]:|\mathcal{C}_{j}|=1}\sum_{i\in\mathcal{C}_{j}}\exp(\azni)|\dsni|\to0;
    \end{align*}
    \item $X^{(u)}X^{(v)}\exp(X^{\top}\aosj)\mathcal{N}(Y|X;\aosj,\bsj,\ssj)$ for $j\in[K^\star]:|\mathcal{C}_{j}|>1$ and $u,v\in[d]$, we get
    \begin{align*}
        \frac{1}{\mathcal{L}_{2n}}\cdot\sum_{j\in[K^\star]:|\mathcal{C}_{j}|>1}\sum_{i\in\mathcal{C}_{j}}\exp(\azni)\|\daoni\|^2\to0;
    \end{align*}
    \item $\big[\frac{\partial^2 E}{\partial\beta^{(u')}}(X,\bsj)\big]^2\exp(X^{\top}\aosj)\frac{\partial^2 \mathcal{N}}{\partial E^2}(Y|X;\aosj,\bsj,\ssj)$ for $j\in[K^\star]:|\mathcal{C}_{j}|>1$ and $u'\in[d']$, we get
    \begin{align*}
        \frac{1}{\mathcal{L}_{2n}}\cdot\sum_{j\in[K^\star]:|\mathcal{C}_{j}|>1}\sum_{i\in\mathcal{C}_{j}}\exp(\azni)\|\dbni\|\to0;
    \end{align*}
    \item $\exp(X^{\top}\aosj)\frac{\partial^4 \mathcal{N}}{\partial E^4}(Y|X;\aosj,\bsj,\ssj)$ for $j\in[K^\star]:|\mathcal{C}_{j}|>1$, we get
    \begin{align*}
        \frac{1}{\mathcal{L}_{2n}}\cdot\sum_{j\in[K^\star]:|\mathcal{C}_{j}|>1}\sum_{i\in\mathcal{C}_{j}}\exp(\azni)|\dsni|^2\to0;
    \end{align*}
\end{itemize}
By taking the sum of these limits, we have $1=\frac{1}{\mathcal{L}_{2n}}\cdot\mathcal{L}_{2n}\to0$ as $n\to\infty$, which is a contradiction. Therefore, not all the coefficients in the representations of $[A_{n,1}-R_{n,1}(Y\mid X)]p(X)/\mathcal{L}_{2n}$, $[A_{n,2}-R_{n,2}(Y\mid X)]p(X)/\mathcal{L}_{2n}$, $[B_{n,1}-R_{n,3}(Y\mid X)]p(X)/\mathcal{L}_{2n}$, $[B_{n,2}-R_{n,4}(Y\mid X)]p(X)/\mathcal{L}_{2n}$, $C_{n}/\mathcal{L}_{2n}$ converge to zero as $n\to\infty$.

\textbf{Step 3 - Fatou's argument.} In this step, we will show a contradiction to the results of Step 2. To begin with, we denote by $m_n$ the maximum of the absolute values of the coefficients in the representations of $[A_{n,1}-R_{n,1}(Y\mid X)]p(X)/\mathcal{L}_{2n}$, $[A_{n,2}-R_{n,2}(Y\mid X)]p(X)/\mathcal{L}_{2n}$, $[B_{n,1}-R_{n,3}(Y\mid X)]p(X)/\mathcal{L}_{2n}$, $[B_{n,2}-R_{n,4}(Y\mid X)]p(X)/\mathcal{L}_{2n}$, $C_{n}/\mathcal{L}_{2n}$. Then, the results of Step 2 indicate that $1/m_n\not\to\infty$ as $n\to\infty$. Next, let us denote
{\small
\begin{align*}
    \frac{1}{m_n\mathcal{L}_{2n}}\cdot\Big(\sum_{i\in\mathcal{C}_{j}}\exp(\azni)-\exp(\azsj)\Big)\to t_{0,j},& \quad \frac{1}{m_n\mathcal{L}_{2n}}\cdot\sum_{i\in\mathcal{C}_{j}}\exp(\azni)(\daoni)^{(u)}\to t^{(u)}_{1,j},\\
    \frac{1}{m_n\mathcal{L}_{2n}}\cdot\sum_{i\in\mathcal{C}_{j}}\exp(\azni)(\dbni)^{(u')}\to t^{(u')}_{2,j},& \quad \frac{1}{m_n\mathcal{L}_{2n}}\cdot\sum_{i\in\mathcal{C}_{j}}\exp(\azni)(\dsni)\to t_{3,j},\\
    \frac{1}{m_n\mathcal{L}_{2n}}\cdot\sum_{i\in\mathcal{C}_{j}}\exp(\azni)(\daoni)^{(u)}(\daoni)^{(v)}\to t^{(uv)}_{4,j},& \quad \frac{1}{m_n\mathcal{L}_{2n}}\cdot\sum_{i\in\mathcal{C}_{j}}\exp(\azni)(\dbni)^{(u')}(\dbni)^{(v')}\to t^{(u'v')}_{5,j},\\
    \frac{1}{m_n\mathcal{L}_{2n}}\cdot\sum_{i\in\mathcal{C}_{j}}\exp(\azni)(\dsni)^2\to t_{6,j},& \quad \frac{1}{m_n\mathcal{L}_{2n}}\cdot\sum_{i\in\mathcal{C}_{j}}\exp(\azni)(\daoni)^{(u)}(\dbni)^{(v')}\to t^{(uv')}_{7,j},\\
    \frac{1}{m_n\mathcal{L}_{2n}}\cdot\sum_{i\in\mathcal{C}_{j}}\exp(\azni)(\daoni)^{(u)}(\dsni)\to t^{(u)}_{8,j},& \quad \frac{1}{m_n\mathcal{L}_{2n}}\cdot\sum_{i\in\mathcal{C}_{j}}\exp(\azni)(\dbni)^{(u')}(\dsni)\to t^{(u')}_{9,j},
\end{align*}
}
for all $j\in[K^\star]$ as $n\to\infty$. The results of Step 2 indicate that not all these limits are zero. According to the Fatou's lemma, we have
\begin{align*}
    \lim_{n\to\infty}\frac{\|g_{G_n}-g_{G^\star}\|_{\mathscr L^1}}{m_n\mathcal{L}_{2n}}\geq\int\liminf_{n\to\infty}\frac{|g_{G_n}(Y,X)-g_{G^\star}(Y,X)|}{m_n\mathcal{L}_{2n}}\mathrm d(Y,X),
\end{align*}
Since $\|g_{G_n}-g_{G^\star}\|_{\mathscr L^1}/\mathcal{L}_{2}(G_n,G^\star)\to0$ as $n\to\infty$, we get $[g_{G_n}(Y,X)-g_{G^\star}(Y,X)]/[m_n\mathcal{L}_{2n}]\to0$ as $n\to\infty$ along a subsequence for almost every $(Y,X)$ (work on this subsequence from now on). Furthermore, as the term $\sum_{j=1}^{K^\star}\exp(\azsj+X^{\top}\aosj)$ is bounded, it follows that $D_n/[m_n\mathcal{L}_{2n}]\to0$ as $n\to\infty$, for almost every $(Y,X)$. From the decomposition of $D_n$ in Step 1, we have
\begin{align}
    \label{eq:zero_limit_over}
    \frac{1}{m_n\mathcal{L}_{2n}}\cdot[A_{n,1}+A_{n,2}-B_{n,1}-B_{n,2}+C_{n}]\to0.
\end{align}
For ease of presentation, let $F_{\gamma,j}:=\exp(X^{\top}\aosj)\frac{\partial^{\gamma}\mathcal{N}}{\partial E^{\gamma}}(Y\mid X;\aosj,\bsj,\ssj)$ and $H_{j}(Y\mid X):=\lim_{n\to\infty}H_{n}(Y\mid X;\aosj)$, for all $0\leq\gamma\leq 4$ and $j\in[K^\star]$. Then, we have
\begin{align*}
    &\lim_{n\to\infty}\frac{A_{n,1}}{m_n\mathcal{L}_{2n}}=\sum_{j\in[K^\star]:|\mathcal{C}_{j}|=1}\Big[\sum_{u=1}^{d}t_{1,j}^{(u)}X^{(u)}F_{0,j}(Y|X)+\sum_{u'=1}^{d'}t_{2,j}^{(u')}\frac{\partial E}{\partial\beta^{(u')}}(X,\bsj)F_{1,j}(Y|X)\\
    &\hspace{10cm}+\frac{1}{2}t_{3,j}F_{2,j}(Y|X)\Big]p(X),\\
    &\lim_{n\to\infty}\frac{A_{n,2}}{m_n\mathcal{L}_{2n}}=\sum_{j\in[K^\star]:|\mathcal{C}_{j}|>1}\Big[\Big(\sum_{u=1}^{d}t_{1,j}^{(u)}X^{(u)}+\sum_{u,v=1}^{d}t_{4,j}^{(uv)}X^{(u)}X^{(v)}\Big)F_{0,j}(Y|X)\\
    &+\Big(\sum_{u'=1}^{d'}t_{2,j}^{(u')}\frac{\partial E}{\partial\beta^{(u')}}(X,\bsj)+\sum_{u',v'=1}^{d'}t_{5,j}^{(u'v')}\frac{\partial^2 E}{\partial\beta^{(u')}\partial\beta^{(v')}}(X,\bsj)\\
    &\hspace{6cm}+\sum_{u=1}^{d}\sum_{v'=1}^{d'}t_{7,j}^{(uv')}X^{(u)}\frac{\partial E}{\partial\beta^{(v')}}(X,\bsj)\Big)F_{1,j}(Y|X)\\
    &+\Big(\frac{1}{2}t_{3,j}+\sum_{u',v'=1}^{d'}t_{5,j}^{(u'v')}\frac{\partial E}{\partial\beta^{(u')}}(X,\bsj)\frac{\partial E}{\partial\beta^{(v')}}(X,\bsj)+\sum_{u=1}^{d}\frac{1}{2}t_{8,j}^{(u)}X^{(u)}\Big)F_{2,j}(Y|X)\\
    &+\Big(\sum_{u'=1}^{d'}\frac{1}{2}t_{9,j}^{(u')}\frac{\partial E}{\partial\beta^{(u')}}(X,\bsj)\Big)F_{3,j}(Y|X)+\frac{1}{8}t_{6,j}F_{4,j}(Y|X)\Big]p(X),
\end{align*}
and
\begin{align*}
    &\lim_{n\to\infty}\frac{B_{n,1}}{m_n\mathcal{L}_{2n}}=\sum_{j\in[K^\star]:|\mathcal{C}_{j}|=1}\sum_{u=1}^{d}t_{1,j}^{(u)}X^{(u)}H_{j}(Y|X)p(X),\\
    &\lim_{n\to\infty}\frac{B_{n,2}}{m_n\mathcal{L}_{2n}}=\sum_{j\in[K^\star]:|\mathcal{C}_{j}|>1}\Big(\sum_{u=1}^{d}t_{1,j}^{(u)}X^{(u)}+\sum_{u,v=1}^{d}t_{4,j}^{(uv)}X^{(u)}X^{(v)}\Big)H_{j}(Y|X)p(X),\\
    &\lim_{n\to\infty}\frac{C_{n}}{m_n\mathcal{L}_{2n}}=\sum_{j=1}^{K^\star}t_{0,j}[F_{0,j}(Y|X)-H_{j}(Y|X)]p(X).
\end{align*}
Note that for almost every $X$, the set $\Big\{F_{\gamma,j}(Y|X), \ H_{j}(Y|X):0\leq\gamma\leq 4, \  j\in[K^\star]\Big\}$
is linearly independent with respect to $Y$. Therefore, it follows that the coefficients of these terms in the limit in \Eqref{eq:zero_limit_over} become zero. 

For $j\in[K^\star]$ such that $|\mathcal{C}_{j}|=1$, by considering the coefficients of 
\begin{itemize}
    \item $F_{0,j}(Y|X)$, we have $t_{0,j}+\sum_{u=1}^{d}t_{1,j}^{(u)}X^{(u)}=0$, for almost every $X$. Then, we deduce $t_{0,j}=t_{1,j}^{(u)}=0$ for all $u\in[d]$;
    \item $F_{1,j}(Y|X)$, we have $\sum_{u'=1}^{d'}t_{2,j}^{(u')}\frac{\partial E}{\partial\beta^{(u')}}(X,\bsj)$, for almost every $X$. As the expert function $E$ is second-order strongly identifiable, we get $t_{2,j}^{(u')}=0$ for all $u'\in[d']$;
    \item $F_{2,j}(Y|X)$, we have $t_{3,j}=0$.
\end{itemize}
For $j\in[K^\star]$ such that $|\mathcal{C}_{j}|>1$, by considering the coefficients of 
\begin{itemize}
    \item $F_{0,j}(Y|X)$, we have $t_{0,j}+\sum_{u=1}^{d}t_{1,j}^{(u)}X^{(u)}+\sum_{u,v=1}^{d}t_{4,j}^{(uv)}X^{(u)}X^{(v)}=0$, for almost every $X$. Then, we get $t_{0,j}=t_{1,j}^{(u)}=t_{4,j}^{(uv)}$ for all $u,v\in[d]$.
    \item $F_{1,j}(Y|X)$, we have
    \begin{align*}
        \sum_{u'=1}^{d'}t_{2,j}^{(u')}\frac{\partial E}{\partial\beta^{(u')}}(X,\bsj)+\sum_{u',v'=1}^{d'}t_{5,j}^{(u'v')}\frac{\partial^2 E}{\partial\beta^{(u')}\partial\beta^{(v')}}(X,\bsj)\\
        +\sum_{u=1}^{d}\sum_{v'=1}^{d'}t_{7,j}^{(uv')}X^{(u)}\frac{\partial E}{\partial\beta^{(v')}}(X,\bsj)=0,
    \end{align*}
    for almost every $X$. As the expert function $E$ meets the second-order strong identifiability condition, we get $t_{2,j}^{(u')}=t_{5,j}^{(u'v')}=t_{7,j}^{(uv')}=0$ for all $u',v'\in[d']$ and $u\in[d]$;
    \item $F_{2,j}(Y|X)$, we have 
    \begin{align*}
        \frac{1}{2}t_{3,j}+\sum_{u',v'=1}^{d'}t_{5,j}^{(u'v')}\frac{\partial E}{\partial\beta^{(u')}}(X,\bsj)\frac{\partial E}{\partial\beta^{(v')}}(X,\bsj)+\sum_{u=1}^{d}\frac{1}{2}t_{8,j}^{(u)}X^{(u)}=0,
    \end{align*}
    for almost every $X$. Since $t_{5,j}^{(u'v')}=0$ for all $u',v'\in[d']$, it follows that $\frac{1}{2}t_{3,j}+\sum_{u=1}^{d}\frac{1}{2}t_{8,j}^{(u)}X^{(u)}=0$, for almost every $X$. Then, we get $t_{3,j}=t_{8,j}^{(u)}=0$ for all $u',v'\in[d']$ and $u\in[d]$;
    \item $F_{3,j}(Y|X)$, we have $\sum_{u'=1}^{d'}\frac{1}{2}t_{9,j}^{(u')}\frac{\partial E}{\partial\beta^{(u')}}(X,\bsj)=0$, for almost every $X$. As the expert function $E$ is strongly identifiable, we get $t_{9,j}^{(u')}$ for all $u'\in[d']$;
    \item $F_{4,j}(Y|X)$, we have $t_{6,j}=0$.
\end{itemize}
Combining the above results, we deduce $t_{0,j}=t_{1,j}^{(u)}=t_{2,j}^{(u')}=t_{3,j}=t_{4,j}^{(uv)}=t_{5,j}^{(u'v')}=t_{6,j}=t_{7,j}^{(uv')}=t_{8,j}^{(u)}=t_{9,j}^{(u')}=0$ for all $j\in[K^\star]$, $u,v\in[d]$ and $u',v'\in[d']$. This contradicts the fact that at least one among them is non-zero. Therefore, we achieve the result in \Eqref{eq:local_part_TV_over}. Hence, the proof is completed.

% \textbf{Proof of \Eqref{eq:global_part}.} Assume by contrary that \Eqref{eq:global_part} does not hold. Then, we can find a sequence of mixing measure $(G_n)$ such that $\mathcal{L}_{2}(G_n,G^\star)>\varepsilon'$ and $\|g_{G_n}-g_{G^\star}\|_{\mathscr L^1}/\mathcal{L}_{2}(G_n,G^\star)\to0$ as $n\to\infty$. These two properties imply that 
% \begin{align*}
%     \|g_{G_n}-g_{G^\star}\|_{\mathscr L^1}\to0.
% \end{align*}
% Since the parameter space $\Theta$ is compact, we can substitute the sequence $(G_n)$ with its subsequence $(G'_n)$ that converges to some mixing measure $G'$. Recall that $\mathcal{L}_{2}(G_n,G^\star)>\varepsilon'$, then we also have $\mathcal{L}_{2}(G',G^\star)>\varepsilon'$. On the other hand, by the Fatou's lemma, we get
% \begin{align*}
%     0=\lim_{n\to\infty}\|g_{G'_n}-g_{G^\star}\|_{\mathscr L^1}&\geq\frac{1}{2}\int\liminf_{n\to\infty}|g_{G'_n}(Y,X)-g_{G^\star}(Y,x)|\mathrm d(Y,X)\\
%     &=\frac{1}{2}\int|g_{G'}(Y,X)-g_{G^\star}(Y,X)|\mathrm d(Y,X).
% \end{align*}
% This inequality indicates that $g_{G'}(Y,X)=g_{G^\star}(Y,X)$ for almost surely $(Y,X)$. Since the SMoGE model is identifiable, we deduce $G'\equiv G^\star$. As a result, we get $\mathcal{L}_{2}(G',G^\star)=0$, which contradicts the previous result that $\mathcal{L}_{2}(G',G^\star)>\varepsilon'>0$. Hence, the proof is completed.

\subsection{Proof of Theorem~\ref{thm:consistency_number_experts}}

Following the proof strategy of \cite{miller2023consistency} for simple mixture models, we prove the result by leveraging the famous posterior consistency theorem by \cite{doob1949application}.\footnote{Throughout the proof, we refer to the formulation of Doob's theorem presented in \cite{miller2018detailed} (Theorem 2.4), requiring measurability of the model with respect to parameters and parameter identifiability.} However, the latter requires a stronger notion of identifiability \textit{with respect to SMoGE parameters}. This, despite the identifiability of SMoGE densities \textit{with respect to mixing measures}, does not hold on $\Theta_\infty$ because mixing measures are invariant under (i) permutations of expert labels, and (ii) mergers and separations of equal mixing measure atoms. Therefore, we first construct a restricted space $\tilde\Theta_\infty$ where the assumptions of Doob's theorem (measurability of the model and parameter identifiability) hold; then we show that consistency on that space implies consistency on the original space $\Theta_\infty$, thanks to the assumptions on the prior.

Before delving into the proof, here is some useful notation. Given any $\theta\in\Theta^k\subseteq \Theta_\infty$ and any permutation $\rho:\{1,\dots, k\}\to\{1,\dots, k\}$, define $\theta[\rho]:= (\theta_{\rho(1)}, \dots, \theta_{\rho(k)})$. Moreover, recall the decomposition $\Theta = \Omega \times A$, where $\Omega$ denotes the space of parameters $\omega = (\alpha_1, \beta, \sigma^2)$, and $A$ is the space of gating bias parameters $\alpha_0$. Therefore, for all $k\in\NN$, $\Theta^k$ can be identified with $\Theta^k \equiv \Omega^k \times A^k$. Also, for all $\theta, \theta'\in \Theta_\infty$, let
\begin{equation*}
    d_{\Theta_\infty}(\theta, \theta'):=\begin{cases}
        \min\{\Vert \theta - \theta'\Vert_2, 1\} & \textnormal{if } K(\theta)=K(\theta'),\\
        1 & \textnormal{otherwise}.
    \end{cases}
\end{equation*}
By Propositions A.1 and A.2 of \cite{miller2023consistency}, this definition makes $(\Theta_\infty, d_{\Theta_\infty})$ a Borel measurable subset of a Polish space, which we endow with the corresponding Borel sigma-algebra.

\paragraph{Construction of the restricted model.} For all $k\in\NN$, define the restricted expert space
\begin{equation*}
    \Omega_k := \{(\omega_1,\dots,\omega_k)\in \Omega^k : \omega_1\prec\cdots \prec \omega_k\},
\end{equation*}
where, for all $\omega,\omega'\in \Omega$, $\omega\prec \omega'$ means that $\omega$ precedes $\omega'$ lexicographically, and $\omega\neq \omega'$. Now define $\Theta_k := \Omega_k \times A^k$ and $\tilde \Theta_\infty := \bigcup_{\ell\in \NN}\Theta_\ell$. For all $k\in\NN$ and $\theta\in\Theta^k$, let $T(\theta) = \theta[\rho]$, where the permutation $\rho$ is chosen such that $\theta[\rho]\in\Theta_k$ if possible (that is, if no two or more experts in $\theta$ share the same $\omega$ parameters), otherwise $\theta[\rho] = \theta$. Then, denoting $\boldsymbol{\theta}\sim \Pi$, we obtain  $\Pi(T(\boldsymbol{\theta})\in\tilde \Theta_\infty) = 1$ because the subset of $\Omega^k$ where two or more experts share the same parameters has prior probability zero by the assumptions of the Theorem. Denoting $B[\rho] = \{\theta\in\Theta^k : \theta[\rho]\in B\}$ for all $B\subseteq\Theta_k$, the definition of $T$ implies
\begin{equation}\label{eq:relationship}
T^{-1}(B) = \{\theta\in\Theta^k : T(\theta)\in B\} = \bigcup_{\rho\in R_k} B[\rho],
\end{equation}
where $R_k$ denotes the space of all permutations of $\{1,\dots,k\}$. Letting $\tilde{Q}$ denote the pushforward of $\Pi$ through $T$ (i.e., the distribution of $T(\boldsymbol{\theta})$ when $\boldsymbol{\theta}\sim \Pi$) restricted to $\tilde\Theta_\infty$, we obtain the \textit{restricted model}\footnote{We denote by $\PP$ the joint probability measure over infinite data sequences and parameters implied by this model.}
\begin{equation}\label{eq:restricted_model}
\begin{aligned}
(X_i, Y_i)\mid T(\boldsymbol{\theta}) & \overset{\textnormal{iid}}{\sim} g_{G(T(\boldsymbol{\theta}))}, \quad i=1,\dots,n\\
T(\boldsymbol{\theta}) & \sim \tilde{Q}
\end{aligned}
\end{equation}
by Theorem 10.2.1 of \cite{dudley2002real}. This is because $g_{G(\theta)} = g_{G(T(\theta))}$ and, for all measurable $C\subseteq (\XX\times \RR)^n$ and
$B\subseteq\tilde\Theta_\infty$, the following holds:
\begin{equation*}
    \PP((X_i, Y_i)_{i=1}^n\in C,\, T(\boldsymbol{\theta})\in B) = \PP((X_i, Y_i)_{i=1}^n\in C,\, \boldsymbol{\theta}\in T^{-1}(B)) = \int_B g_{G(\theta)}^{n}(C) \, \tilde{Q}(\mathrm d\theta),
\end{equation*}
where measurability of $\theta \mapsto g_{G(\theta)}^{(n)}(C)$ for all measurable $C\subseteq (\XX\times \RR)^n$ follows from the measurability of $\theta\mapsto g_{G(\theta)}(C')$ for all measurable $C'\subseteq\XX\times \RR$ (shown in the next paragraph) and from an application of Lemma 5.2 of \cite{miller2018detailed}.

\paragraph{Proof of measurability.} Let $C\subseteq \XX\times \RR$ be measurable and let $C_X$ denote its section at $X\in\XX$. Then for any $k\in\NN$ and $\theta\in \Theta^k$, by Fubini's theorem we can write
\begin{align}
\label{eq:measurable}
\theta\mapsto g_{G(\theta)}(C) & =\int_\XX \int_{C_X} f_{G(\theta)}(Y\mid X) \mathrm dY \,p(X) \mathrm dX.
\end{align}
By the compactness of $\XX$ and the Gaussianity of the expert densities, $f_{G(\theta)}(Y\mid X)$ can be bounded above by a function of $Y$ (constant across $X$) that is integrable on $\RR$ (e.g., a large enough multiple of a Gaussian density with high enough variance). So the dominated convergence theorem and the continuity of $\theta\mapsto f_{G(\theta)}(Y\mid X)$ for all $(X,Y)\in\XX\times \RR$ imply the continuity (hence measurability) of $\theta \mapsto g_{G(\theta)}(C)$ as a function on $\Theta^k$. Therefore, this mapping is also measurable as a function on $\Theta_k$ and, as a consequence, it is measurable as a function on $\tilde\Theta_\infty$ (since the pre-image of a measurable subset of $\RR$ is a union of measurable subsets of $\Theta_1,\Theta_2,\dots$, and is thus measurable by Proposition A.2 of \cite{miller2023consistency}).

\paragraph{Proof of parameter identifiability.} Choose $\theta\in\Theta_k\subset \tilde\Theta_\infty$, $\theta'\in\Theta_{k'}\subset \tilde\Theta_\infty$ such that $g_{G(\theta)} = g_{G(\theta')}$, where
\begin{align*}
    \theta & = (\alpha_{1j}, \beta_{j}, \sigma^2_j, \alpha_{0j})_{j=1}^k \equiv (\omega_j,\alpha_{0j})_{j=1}^k\\
    \theta' & = (\alpha'_{1j}, \beta'_{j}, \sigma'^2_j, \alpha'_{0j})_{j=1}^{k'} \equiv (\omega'_j,\alpha'_{0j})_{j=1}^{k'}
\end{align*}
By the identifiabilty assumption in the statement of the Theorem, this implies $G(\theta) = G(\theta')$. By the definition of $\tilde\Theta_\infty$, we have $\omega_j\neq \omega_\ell$ and $\omega_{j'}\neq \omega_{\ell'}$ for all $j\neq \ell$ and $j'\neq \ell'$ (with $j,\ell\in\{1,\dots,k\}$ and $j',\ell'\in\{1,\dots,k'\}$). Moreover, the boundedness of $A$ implies that all weights of $G(\theta)$ and $G(\theta')$ are strictly positive and finite. This implies that $k=k'$, $(\alpha_{01},\dots,\alpha_{0k}) = (\alpha'_{0\rho(1)},\dots,\alpha'_{0\rho(k)})$, and $(\omega_{1},\dots,\omega_k) = (\omega'_{\rho(1)},\dots,\omega'_{\rho(k')})$ for some $\rho\in R_k$. Further, because $\omega_1\prec\cdots\prec \omega_k$ and $\omega_1'\prec\cdots\prec \omega_k'$ by the definition of $\tilde\Theta_\infty$, it must be the case that $\rho$ is the identity function, implying $\theta = \theta'$.
Therefore, parameter identifiability holds on the restricted space $\tilde \Theta_\infty$.

\paragraph{Application of Doob's theorem on the restricted model.} We have shown that the restricted model in \Eqref{eq:restricted_model} satisfies the conditions of Doob's theorem. Therefore, there exists $\tilde\Theta_\star\subseteq\tilde\Theta_\infty$ such that 
$\PP(T(\boldsymbol{\theta})\in \tilde\Theta_\star) = 1$ and the restricted model is consistent at all $T(\theta_\star)\in\tilde\Theta_\star$. Equivalently, for any neighborhood $B\subseteq\tilde\Theta_\infty$ of $T(\theta_\star)$, we have $\PP(T(\boldsymbol{\theta})\in B \mid (X_i, Y_i)_{i=1}^n) \to 1$  a.s.-$g_{G(T(\theta_0))}^\infty$. Now define $\Theta_\star := \bigcup_{k\in\NN} \bigcup_{\rho\in R_k} (\tilde\Theta_\star\cap\Theta_k)[\rho]$. Then, by \Eqref{eq:relationship},
\begin{align*}
\Pi(\Theta_\star)\equiv\PP(\boldsymbol{\theta}\in\Theta_\star) = \PP(T(\boldsymbol{\theta})\in\tilde\Theta_\star) = 1.
\end{align*}

\paragraph{Implications for the unrestricted model.}
Let $\theta_\star\in\Theta_\star$ and define $K^\star = K(\theta_\star)$. 
Let $(X_i, Y_i) \overset{\textnormal{iid}}{\sim} g_{G(\theta_\star)}$  for $i\in\NN$, and define $B := \{\theta\in\tilde\Theta_\infty : d_{\Theta_\infty}(\theta,T(\theta_\star)) < \varepsilon\} \subseteq \Theta_{K^\star}$ for $\varepsilon\in (0,1)$. Observe that 
$\bigcup_{\rho\in R_{K^\star}} B[\rho] \subseteq \tilde{B}(\theta_\star,\varepsilon)$, where
\begin{equation*}
    \tilde{B}(\theta_\star,\varepsilon) := \bigcup_{\rho\in R_{K^\star}}\{\theta \in \Theta_\infty : d_{\Theta_\infty}(\theta, \theta_\star[\rho]) < \varepsilon\}.
\end{equation*}
So by \Eqref{eq:relationship},
\begin{equation}\label{eq:convergence}
\begin{aligned}
\Pi(\boldsymbol{\theta}\in\tilde{B}(\theta_\star,\varepsilon) \mid (X_i, Y_i)_{i=1}^n) & \equiv
\PP(\boldsymbol{\theta}\in\tilde{B}(\theta_\star,\varepsilon) \mid (X_i, Y_i)_{i=1}^n)\\
& \geq \PP\bigg(\boldsymbol{\theta}\in\bigcup_{\rho\in R_{K^\star}} B[\rho] \mid (X_i, Y_i)_{i=1}^n\bigg) \\
& = \PP(T(\boldsymbol{\theta})\in B \mid (X_i, Y_i)_{i=1}^n) \\
& \to 1 \quad \textnormal{as } n\to \infty,
\end{aligned}
\end{equation}
a.s.-$g_{G(\theta_\star)}^\infty$, since $g_{G(\theta_\star)} = g_{G(T(\theta_\star))}$ and the restricted model is consistent at all $T(\theta_\star)\in\tilde \Theta_\star$. Finally, consistency for the number of experts follows immediately from \Eqref{eq:convergence}, since $\varepsilon<1$ implies $\tilde{B}(\theta_\star,\varepsilon) \subseteq \Theta^{K^\star}$ and therefore
\begin{align*}
\Pi(K=K^\star \mid (X_i, Y_i)_{i=1}^n) & \equiv
\PP(\boldsymbol{\theta} \in \Theta^{K^\star} \mid (X_i, Y_i)_{i=1}^n) \\
& \geq \PP(\boldsymbol{\theta}\in\tilde{B}(\theta_\star,\varepsilon) \mid (X_i, Y_i)_{i=1}^n)\\
& \to 1 \quad \textnormal{as } n\to \infty,
\end{align*}
a.s.-$g_{G(\theta_\star)}^\infty$

\subsection{Proof of Corollary~\ref{cor:density_estimation_doobs}}
Note that
\begin{align*}
    \Pi & \left(\left\{G\in \bigcup_{j\in\NN}\mathscr G_{j} : \,d_H(g_G, g_{G^\star}) \geq M_n\sqrt{\log n/n}\right\} \mid (X_i, Y_i)_{i=1}^n\right) \\
    & = \sum_{j\in\NN} \Pi\left(\left\{G\in \mathscr G_{j} : \,d_H(g_G, g_{G^\star}) \geq M_n\sqrt{\log n/n}\right\} \mid (X_i, Y_i)_{i=1}^n\right) \Pi(K=j\mid (X_i, Y_i)_{i=1}^n) \\
    & \leq \Pi\left(\left\{G\in \mathscr G_{K^\star} : \,d_H(g_G, g_{G^\star}) \geq M_n\sqrt{\log n/n}\right\} \mid (X_i, Y_i)_{i=1}^n\right) \\
    &+ \Pi(K\neq K^\star\mid (X_i, Y_i)_{i=1}^n),
\end{align*}
where the second addendum asymptotically vanishes almost surely by Theorem \ref{thm:consistency_number_experts}, while the first term can be dealt with exactly as in the proof of Theorem \ref{thm:our_abstract_rate_SMoGE}.

\subsection{Proof of Proposition~\ref{prop:identifiability_up_to_translation}}
\label{appendix:identifiability}
Firstly, we expand equation $g_{G}(Y,X)=g_{G^\star}(Y,X)$ for almost every $(Y,X)$ as follows:
    \begin{align}
         \sum_{j=1}^{K} \frac{\exp(\alpha_{0j} + X^\top\alpha_{1j})}{\sum_{\ell=1}^{K}\exp(\alpha_{0\ell} + X^\top\alpha_{1\ell})} \mathcal{N}(Y \mid E(X,\beta_{j}), \sigma_j^{2})\nonumber\\
        \label{eq:identifiable_equation}
        = \sum_{j=1}^{K^\star} \frac{\exp(\alpha_{0j}^\star + X^\top\alpha_{1j}^\star)}{\sum_{\ell=1}^{K^\star}\exp(\alpha_{0\ell}^\star + X^\top\alpha_{1\ell}^\star)} \mathcal{N}(Y \mid E(X,\beta_{j}^\star), \sigma_j^{\star 2}).
    \end{align}
As the mixture of location-scale Gaussian distributions is identifiable \citep{Teicher-1963}, it follows that $K=K^\star$ and
\begin{align*}
    \left\{\frac{\exp(\alpha_{0j} + X^\top\alpha_{1j})}{\sum_{\ell=1}^{K}\exp(\alpha_{0\ell} + X^\top\alpha_{1\ell})}:j\in[K]\right\}=\left\{\frac{\exp(\alpha_{0j}^\star + X^\top\alpha_{1j}^\star)}{\sum_{\ell=1}^{K^\star}\exp(\alpha_{0\ell}^\star + X^\top\alpha_{1\ell}^\star)}:j\in[K]\right\},
\end{align*}
for almost every $X$. Without loss of generality, we may assume that
\begin{align*}
    \frac{\exp(\alpha_{0j} + X^\top\alpha_{1j})}{\sum_{\ell=1}^{K}\exp(\alpha_{0\ell} + X^\top\alpha_{1\ell})}=\frac{\exp(\alpha_{0j}^\star + X^\top\alpha_{1j}^\star)}{\sum_{\ell=1}^{K^\star}\exp(\alpha_{0\ell}^\star + X^\top\alpha_{1\ell}^\star)},
\end{align*}
for almost every $X$ and for all $j\in[K]$. Since the softmax function is invariant to translation, we deduce $\alpha_{0j}=\alpha^\star_{0j}+t_0$ and $\alpha_{1j}=\alpha^\star_{1j}+t_1$, for some $t_0\in\mathbb{R}$ and $t_1\in\mathbb{R}^{d}$, for all $j\in[K]$. Thus, we can rewrite \Eqref{eq:identifiable_equation} as
\begin{align}
    \label{eq:identifiable_equation_2}
    \sum_{j=1}^{K}\exp(\alpha_{0j})F(Y|X;\alpha_{1j},\beta_{j},\sigma^2_{j})=\sum_{j=1}^{K^\star}\exp(\alpha_{0j})F(Y|X;\alpha_{1j},\beta^\star_{j},\sigma^{\star 2}_{j}),
\end{align}
for almost every $(Y,X)$, where we define $F(Y|X;\alpha_{1},\beta,\sigma^2):=\exp(X^{\top}\alpha_{1})\mathcal{N}(Y|E(X,\beta),\sigma^2)$. Next, we partition the index set $[K]$ into $m$ subsets $J_1,J_2,\ldots,J_m$ such that $\exp(\alpha_{0j})=\exp(\alpha_{0j'})$ for any $j,j^\star\in J_i$ and for $i\in[m]$. Meanwhile, if $j$ and $j^\star$ are not in the same subset, then we let $\exp(\alpha_{0j})\neq\exp(\alpha_{0j'})$. Then, we rewrite \Eqref{eq:identifiable_equation_2} as
\begin{align*}
    \sum_{i=1}^{m}\sum_{j\in J_i}\exp(\alpha_{0j})F(Y|X;\alpha_{1j},\beta_{j},\sigma^2_{j})=\sum_{i=1}^{m}\sum_{j\in J_i}\exp(\alpha_{0j})F(Y|X;\alpha_{1j},\beta^\star_{j},\sigma^{\star 2}_{j}),
\end{align*}
for almost every $(Y,X)$. The above equation implies that 
\begin{align*}
    \{(E(X,\beta_{j}),\sigma^2_{j}):j\in J_i\}=\{(E(X,\beta^\star_{j}),\sigma^{\star 2}_{j}):j\in J_i\},
\end{align*}
for almost every $X$ for all $i\in[m]$. Since the expert function $E$ is identifiable, it follows that $\{(\beta_j,\sigma^2_j:j\in J_i\}=\{(\beta^\star_{j},\sigma^{\star 2}_{j}):j\in J_i\}$ for all $i\in[m]$. As a result, we have
\begin{align*}
    G=\sum_{i=1}^{m}\sum_{j\in J_i}\exp(\alpha_{0j})\delta_{(\alpha_{1j},\beta_j,\sigma^2_j)}=\sum_{i=1}^{m}\sum_{j\in J_i}\exp(\alpha^\star_{0j}+t_0)\delta_{(\alpha^\star_{1j}+t_1,\beta^\star_j,\sigma^{\star 2}_j)}=G^\star_{t_0,t_1}.
\end{align*}
Hence, the proof is completed.

\section{Experiment Details}\label{app:elbo_experiments}

This appendix details the experimental design and variational inference methodology used to generate the model selection results presented in Section~\ref{sec:model_selection} of the main text.

\subsection{Black-Box Variational Inference Implementation}
To fit the candidate models, we utilize a black-box variational inference (BBVI) framework across all experiments. 

\textbf{Variational family:} We specify a fully factorized (mean-field) Gaussian variational family $\mathcal{Q}$ over the transformed parameter space. Specifically, we place Gaussian variational distributions over the gating intercepts $\alpha_0$, gating slopes $\alpha_1$, regression coefficients $\beta$, and the log-variances $\log \sigma^2$. The variational parameters $\mu$ therefore consist of the means and log-standard-deviations for each of these independent Gaussian distributions. 

\textbf{Optimization:} The ELBO is maximized using a pathwise (reparameterization) gradient estimator. During training, we draw standard normal samples $\epsilon \sim \mathcal{N}(0, 1)$ and apply the location-scale transformation (e.g., $\theta = \mu_\theta + \epsilon \exp(\log \sigma_\theta)$) to obtain differentiable samples from the variational posterior. We perform full-batch optimization using the Adam optimizer. 

\textbf{Hyperparameters and evaluation:} Zero-mean Gaussian priors are placed on the gating and regression parameters (with prior variances $\tau_1=\tau_2=\tau_3=10.0$), alongside an Inverse-Gamma(2, 2) prior for the expert variances. Upon completion of the optimization routine, the final ELBO for each candidate model is estimated using 200 Monte Carlo samples from the optimized variational posterior $q_{\mu^\star}$. The model attaining the highest estimated ELBO is selected as the winner for that simulation round.

\subsection{Figure~\ref{fig:elbo_k2} experiment}
For the first experiment, we simulate covariates uniformly from a two-dimensional square, $X_i \sim \mathcal{U}(-1, 1)^2$. The true gating mechanism assigns observations to the first expert if $X_{i1} > X_{i2}$, and to the second expert otherwise. Conditional on the expert assignment, the responses $Y_i$ are drawn from a Gaussian linear regression model. The true parameters for the first expert are set to intercept $\beta_{0,1}=2.0$, slopes $\beta_{1,1}=[1.0, 1.0]^\top$, and variance $\sigma^2_1=1.0$. For the second expert, the parameters are $\beta_{0,2}=-2.0$, slopes $\beta_{1,2}=[-1.0, -1.0]^\top$, and variance $\sigma^2_2=2.0$. This creates a sharp discontinuity in the conditional mean surface.

We evaluate candidate models $K \in \{1, 2, 3, 4\}$ across sample sizes $n \in \{10, 25, 50, 100\}$, running 50 independent replications per size. For this setup, the Adam optimizer is run for 50,000 iterations. To ensure stable convergence, learning rates are tailored to each $(n, K)$ pair, ranging from 0.0036 to 0.015.

\subsection{Figure~\ref{fig:elbo_k4} experiment}
For the more complex experiment, we increase the dimensionality to $d=6$ and simulate covariates uniformly, $X_i \sim \mathcal{U}(-1, 1)^6$. The true model relies on $K^\star=4$ experts. The gating mechanism is deterministic and assigns each observation based on the maximum value among the first four covariates: $Z_i = \text{argmax}_{k \in \{1, 2, 3, 4\}} X_{i,k}$. 

Conditional on the assignment $Z_i=k$, the responses $Y_i$ are drawn from a Gaussian linear regression model. The true intercepts alternate in sign, set to $\beta_{0,k} = 2(-1)^{k-1}$. The slope vectors $\beta_{1,k}$ are predominantly diagonal, with the $k$-th component set to $2(-1)^{k-1}$, while all components are additionally perturbed by independent Gaussian noise $\mathcal{N}(0, 0.2^2)$. The expert variances $\sigma^2_k$ are linearly spaced between 1.0 and 2.0. 

We evaluate this configuration across larger sample sizes $n \in \{100, 500, 1000, 2000\}$, running 50 independent replications per size. Candidate models are extended to $K \in \{1, \dots, 6\}$. Due to the increased complexity and larger sample sizes, the Adam optimizer is run for 10,000 iterations per model, and the base learning rates are scaled linearly with the sample size to ensure stable convergence.

\subsection{Table~\ref{tab:elbo_tables} experiment}
We simulate $n=500$ covariate vectors uniformly, $X_i \sim \mathcal{U}(-1, 1)^d$, and evaluate three combinations of dimensionality and true expert counts: $(d=2, K^\star=1)$, $(d=2, K^\star=2)$, and $(d=4, K^\star=3)$. The gating assignments are determined by evaluating a set of linear logits, $X_i W^\top + b$. The weight matrix $W$ is strictly diagonal for the first $K^\star$ dimensions, with $W_{k,k}$ set to a predefined `separation` constant. The biases are set to $b_k = -0.2 \times \text{separation} \times k$. The final expert assignment relies on a hard-max over these logits: $Z_i = \text{argmax}_k (X_i W^\top + b)_k$. We test two separation scales: 5.0 (low sharpness, Table~\ref{tab:elbo_softmax_low}) and 10.0 (high sharpness, Table~\ref{tab:elbo_softmax_high}).

Conditional on the assignment $Z_i=k$, the responses $Y_i$ are drawn from a Gaussian linear regression model. The true intercepts are linearly spaced in $[-2.0, 2.0]$. The slopes $\beta_{1,k}$ are given by a base diagonal value of $2(-1)^{k-1}$, corrupted by independent Gaussian noise $\mathcal{N}(0, 0.3^2)$ across all dimensions. The true variances are fixed at $\sigma^2_k = 0.8$ for all experts. 

Candidate models evaluated range from $K=1$ to $K=7$. BBVI is run for 4,000 iterations per model (10,000 for models evaluated against the $K^\star=1$ baseline to ensure adequate convergence). The Adam learning rate is set to an adaptive schedule, $0.1 + 0.000015 n + 0.001 K$, with the exception of the $K^\star=1$ baseline models, which are fixed at a rate of 0.06. Each configuration is run over 100 independent replications.

\clearpage
\bibliographystyle{apalike}
\bibliography{arxiv.bib}

\end{document}